\def\eqref#1{eq.~(\ref{#1})}
\def\Eqref#1{Eq.~(\ref{#1})}
\def\1{\bm{1}}
\def\eps{{\epsilon}}
\DeclareMathAlphabet{\mathsfit}{\encodingdefault}{\sfdefault}{m}{sl}
\SetMathAlphabet{\mathsfit}{bold}{\encodingdefault}{\sfdefault}{bx}{n}
\def\gB{{\mathcal{B}}}
\def\gD{{\mathcal{D}}}
\def\gE{{\mathcal{E}}}
\def\gL{{\mathcal{L}}}
\def\gQ{{\mathcal{Q}}}
\def\gS{{\mathcal{S}}}
\def\gU{{\mathcal{U}}}
\newcommand{\E}{\mathbb{E}}
\newcommand{\KL}{\mathrm{KL}}
\newcommand{\lV}{\lVert}
\newcommand{\rV}{\rVert}
\DeclareMathOperator*{\argmin}{arg\,min}
\theoremstyle{plain}
\newtheorem{theorem}{Theorem}[section]
\newtheorem{proposition}[theorem]{Proposition}
\newtheorem{lemma}[theorem]{Lemma}
\newtheorem{corollary}[theorem]{Corollary}
\theoremstyle{definition}
\newtheorem{definition}[theorem]{Definition}
\newtheorem{assumption}[theorem]{Assumption}
\theoremstyle{remark}
\newcommand{\ip}[2]{{\left\langle #1, \, #2 \right\rangle}}
\newcommand{\no}[1]{{\left\lVert #1 \right\rVert}}
\newcommand{\EgQ}{\E_\gQ}
\DeclareMathOperator{\Div}{Div}
\title{Global Sharpness-Aware Minimization Is Suboptimal in Domain Generalization: Towards Individual Sharpness-Aware Minimization}
\author{%
    Youngjun Song\thanks{Equal contribution.}\\
    Department of Industrial Engineering\\
    UNIST\\
    \texttt{syj7055@unist.ac.kr}\\
\And
    Youngsik Hwang\footnotemark[1]\\
    Artificial Intelligence Graduate School\\
    UNIST\\ 
    \texttt{hys3835@unist.ac.kr} \\
\And
    Jonghun Lee\\
    Artificial Intelligence Graduate School\\
    UNIST\\ 
    \texttt{jh.lee@unist.ac.kr} \\
\And
    Heechang Lee\\
    Department of Industrial Engineering\\
    UNIST\\
    \texttt{heechang@unist.ac.kr}\\
\And
    Dong-Young Lim\thanks{Corresponding author.}\\
    Department of Industrial Engineering\\
    Artificial Intelligence Graduate School\\
    UNIST\\ 
    \texttt{dlim@unist.ac.kr} \\
}
\begin{document}

\maketitle

\begin{abstract}
Domain generalization (DG) aims to learn models that perform well on unseen target domains by training on multiple source domains.  
Sharpness-Aware Minimization (SAM), known for finding flat minima that improve generalization, has therefore been widely adopted in DG.  
However, our analysis reveals that SAM in DG may converge to \textit{fake flat minima}, where the total loss surface appears flat in terms of global sharpness but remains sharp with respect to individual source domains. To understand this phenomenon more precisely, we formalize the average worst-case domain risk as the maximum loss under domain distribution shifts within a bounded divergence, and derive a generalization bound that reveals the limitations of global sharpness-aware minimization. In contrast, we show that individual sharpness provides a valid upper bound on this risk, making it a more suitable proxy for robust domain generalization. Motivated by these insights, we shift the DG paradigm toward minimizing individual sharpness across source domains. We propose \textit{Decreased-overhead Gradual SAM (DGSAM)}, which applies gradual domain-wise perturbations in a computationally efficient manner to consistently reduce individual sharpness. Extensive experiments demonstrate that DGSAM not only improves average accuracy but also reduces performance variance across domains, while incurring less computational overhead than SAM.

\end{abstract}

\section{Introduction}

Deep neural networks achieve remarkable performance under the independent and identically distributed (i.i.d.) assumption \citep{kawaguchi2017generalizationiid}, yet this assumption often fails in practice due to \textit{domain shifts}. For example, in medical imaging, test data may differ in acquisition protocols or device vendors \citep{li2020medical}, and in autonomous driving, variations in weather or camera settings introduce further domain shifts \citep{khosravian2021selfdriving}. Since it is impractical to include every possible scenario in the training data, \emph{domain generalization} (DG) seeks to learn models that generalize to unseen target domains using only source domain data \citep{muandet2013moment1, arjovsky2019irm, li2018adver1, volpi2018generalizing, li2019meta1}.

A common DG strategy is to learn domain-invariant representations by aligning source domain distributions and minimizing their discrepancies \citep{muandet2013moment1, arjovsky2019irm}, adversarial training \citep{li2018adver1, ganin2016dann}, data augmentation \citep{volpi2018generalizing, zhou2020dataaug4, zhou2021dataaug5}, and meta-learning approaches \citep{li2019meta1, balaji2018metareg}. More recently, flat minima in the loss landscape have been linked to improved robustness under distributional shifts \citep{cha2021swad, zhang2022gasam, chaudhari2019entropy}. In particular, Sharpness-Aware Minimization (SAM) \citep{foret2021sam} perturbs model parameters along high-curvature directions to locate flatter regions of the loss surface, and has been applied to DG \citep{wang2023sagm, shin2024udim, zhang2024disam}.

However, our analysis reveals two fundamental limitations in applying SAM to DG. First, SAM may converge to \textit{fake flat minima}, where the total loss appears flat in terms of global sharpness, but remains sharp when viewed from individual source domains (Section~\ref{subsec:fake_flat_minima}). Second, global sharpness minimization fails to tighten the upper bound on the \emph{average worst-case domain risk}, defined as the maximum expected loss under distribution shifts. In contrast, we show that this upper bound can be expressed in terms of average individual sharpness, indicating that minimizing individual sharpness offers a more reliable proxy for robust generalization (Section~\ref{subsec:worst-case}). These insights motivate a paradigm shift in DG: we advocate minimizing individual sharpness across source domains.

In this paper, we propose a novel DG algorithm, \textbf{Decreased-overhead Gradual Sharpness-Aware Minimization (DGSAM)}, which gradually perturbs model parameters using the loss gradient of each domain, followed by an update with the aggregated gradients. DGSAM improves upon existing SAM-based DG methods in three key aspects.  
First, it directly reduces the individual sharpness of source domains rather than the global sharpness of the total loss, enabling better learning of domain-invariant features.  
Second, it achieves high computational efficiency by reusing gradients computed during gradual perturbation, in contrast to traditional SAM-based methods that incur twice the overhead of standard empirical risk minimization.  
Third, while prior approaches rely on proxy curvature metrics, DGSAM explicitly controls the eigenvalues of the Hessian, which are the most direct indicators of sharpness \citep{keskar2016large, ghorbani2019investigation}.  

Experimental results under the DomainBed protocol \citep{gulrajani2021domainbed} demonstrate that DGSAM outperforms existing DG algorithms in both average accuracy and domain-level consistency. It also reduces individual sharpness more effectively than prior SAM-based approaches, including SAM and SAGM \citep{wang2023sagm}, while requiring significantly less computational overhead.

\section{Preliminaries and Related Work}
\subsection{Domain Generalization}

Let \( \gD_s := \{\gD_i\}^{S}_{i=1} \) denote the collection of training samples, where \( \gD_{i} \) represents the training samples from the $i$-th domain\footnote{With slight abuse of notation, we also use $\gD_i$ to represent the underlying data distribution of the $i$-th domain.}. The total loss over all source domains is defined as:
\begin{equation}
     \gL_{\text{s}}(\theta) := \frac{1}{|\gD_s|} \sum_{\gD_i \in \gD_s} \gL_{i}(\theta),
\end{equation}\label{eq:loss_source}
where $\gL_{i}$ denotes the loss evaluated on samples from the $i$-th domain, and $\theta$ is the model parameter. 

A na\"ive approach to DG minimizes the empirical risk over the source domains: \( \theta^*_s = \arg \min_{\theta} \gL_{\text{s}}(\theta) \). However, this solution may fail to generalize to unseen target domains, as it is optimized solely on the training distribution. The goal of domain generalization is to learn parameters \( \theta \) that are robust to domain shifts, performing well on previously unseen domains.

As the importance of DG has grown, several datasets \citep{li2017pacs, fang2013VLCS, peng2019domainnet} and standardized protocols \citep{gulrajani2021domainbed, koh2021wilds} have been introduced. Research directions in DG include domain-adversarial learning \citep{jia2020adver3, li2018adver1, akuzawa2020adver4, shao2019adver2, zhao2020adver5}, moment-based alignment \citep{ghifary2016moment2, muandet2013moment1, li2018moment3}, and contrastive loss-based domain alignment \citep{yoon2019contra2, motiian2017contra1}. Other approaches focus on data augmentation \citep{xu2020dataaug3, shi2020dataaug1, qiao2020dataaug2}, domain disentanglement \citep{li2017distang1, khosla2012distang2}, meta-learning \citep{li2018mldg, zhang2021arm, li2019meta1}, and ensemble learning \citep{cha2021swad, seo2020ensemble2, xu2014ensemble1}.

\subsection{Sharpness-Aware Minimization}

A growing body of work connects generalization to the geometry of the loss surface, especially its curvature \citep{hochreiter1994simplifying, neyshabur2017exploring, keskar2017on, chaudhari2019entropy, foret2021sam}. Building on this, \citet{foret2021sam} proposed Sharpness-Aware Minimization (SAM), which optimizes the model to minimize both the loss and the sharpness of the solution. The SAM objective is defined as:
\begin{equation}\label{eq:sam_objective}
    \min_{\theta} \max_{\|\epsilon\| \leq \rho} \gL(\theta + \epsilon),
\end{equation}
where the inner maximization finds the worst-case perturbation \( \epsilon \) within a neighborhood of radius \( \rho \). In practice, this is approximated via first-order expansion and dual norm analysis:
\begin{align*}
    \epsilon^* \approx \rho \frac{\nabla \gL(\theta)}{\|\nabla \gL(\theta)\|_2}.
\end{align*}

Following the success of SAM, a series of extensions have emerged along two major directions. The first line of work focuses on refining the sharpness surrogate itself. ASAM \citep{kwon2021asam} proposes an adaptive perturbation radius based on input sensitivity. GSAM \citep{zhuang2022gsam} introduces a surrogate gap between the perturbed and unperturbed loss to better capture sharpness, while GAM \citep{zhang2023gam} formulates first-order flatness based on gradient sensitivity to more explicitly minimize the local curvature. The second line of research aims to reduce the computational overhead of SAM, which arises from its two-step optimization requiring double backpropagation. To mitigate this, ESAM and LookSAM \citep{du2022esam, liu2022looksam} reuse previously computed gradients to avoid redundant computations. Additionally, Lookahead and Lookbehind-SAM \citep{zhang2019lookahead, mordido2024lookbehind} modify the optimization trajectory by performing multiple steps per iteration.

Sharpness-aware methods have also been explored in the context of domain generalization. Several works \citep{wang2023sagm, shin2024udim, cha2021swad} adopt SAM to minimize the sharpness of the total loss aggregated over source domains, promoting globally flat solutions. More recent studies incorporate domain-level structure, either by explicitly penalizing inter-domain loss variance \citep{zhang2024disam} or by applying SAM variants in a domain-wise manner \citep{le2024gacfas}.

\section{Rethinking Sharpness in Domain Generalization}\label{sec:motivation}
While SAM has shown promise in improving generalization performance, most existing approaches in DG apply sharpness minimization to the total loss aggregated over source domains. This strategy relies on the assumption that global flatness implies robustness across individual domains. However, this assumption does not always hold. In Section~\ref{subsec:fake_flat_minima}, we show that minimizing global sharpness does not ensure flatness at the individual domain level. In Section~\ref{subsec:worst-case}, we further show that global sharpness fails to control the average worst-case domain risk, while individual sharpness yields a valid upper bound.

\subsection{Global Sharpness Pitfalls: The Fake Flat Minima Problem}\label{subsec:fake_flat_minima}
Given a collection of source domains $\gD_s$, SAM for DG solves the following optimization problem:
\[
\min_\theta \max_{\|\epsilon\|\leq \rho}\gL_{\text{s}}(\theta+\epsilon),
\]
where $\gL_{\text{s}}(\cdot)$ denotes the total loss across all source domains. Define the \emph{global sharpness} as
\[
\gS_{\text{global}}(\theta;\rho)= \max_{\|\epsilon\|\leq \rho}\bigl(\gL_{\text{s}}(\theta+\epsilon) - \gL_{\text{s}}(\theta)\bigr),
\]
so that the SAM objective can be rewritten as minimizing $\gL_{\text{s}}(\theta) + \gS_{\text{global}}(\theta;\rho)$. 

For each source domain $\gD_i$, we similarly define the \emph{individual sharpness} as
\[
\gS_i(\theta;\rho) = \max_{\|\epsilon\|\leq \rho} \bigl(\gL_{i}(\theta+\epsilon) - \gL_{i}(\theta)\bigr).
\]

To generalize well to unseen domains, models must avoid overfitting to domain-specific features and instead learn domain-invariant representations. While existing SAM-based approaches implicitly assume that reducing global sharpness will reduce sharpness at the individual domain level, this assumption is not always valid. The following proposition illustrates that global and individual sharpness can diverge significantly.

\begin{proposition}\label{thm:cancel_out}
Let $\theta$ be a model parameter and $\rho>0$ a fixed perturbation radius. 
Then, there exist two local minima \(\theta_1\) and \(\theta_2\) such that
\[
  \gS_{\text{global}}(\theta_1;\rho) < \gS_{\text{global}}(\theta_2;\rho)
  \quad\text{but}\quad
  \frac{1}{S}\sum_{i=1}^{S}\gS_i(\theta_1;\rho)
  \ge
  \frac{1}{S}\sum_{i=1}^{S}\gS_i(\theta_2;\rho).
\]
Equivalently,
\[
\gS_{\text{global}}(\theta_1;\rho) < \gS_{\text{global}}(\theta_2;\rho) \centernot\implies \frac{1}{S} \sum_{i=1}^{S} \gS_i(\theta_1;\rho) < \frac{1}{S} \sum_{i=1}^{S} \gS_i(\theta_2;\rho).
\]
\end{proposition}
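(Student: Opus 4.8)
The plan is to prove the statement by an explicit construction. Since it is an existence claim, it suffices to exhibit source-domain losses $\gL_1,\dots,\gL_S$ together with two points $\theta_1,\theta_2$, each a local minimum of $\gL_{\text{s}}$, for which the first displayed chain of inequalities holds: at these points the antecedent $\gS_{\text{global}}(\theta_1;\rho)<\gS_{\text{global}}(\theta_2;\rho)$ is true while the conclusion $\tfrac1S\sum_i\gS_i(\theta_1;\rho)<\tfrac1S\sum_i\gS_i(\theta_2;\rho)$ is false, which is precisely the ``$\centernot\implies$'' assertion. It is worth recording that, by subadditivity of the maximum, $\gS_{\text{global}}(\theta;\rho)=\max_{\|\epsilon\|\le\rho}\tfrac1S\sum_i(\gL_i(\theta+\epsilon)-\gL_i(\theta))\le\tfrac1S\sum_i\gS_i(\theta;\rho)$ always holds, so the potential implication is the nontrivial direction; the proposition says the gap here can be large enough to reverse the ordering between two minima. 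I will describe the construction for $S=2$ with a scalar parameter $\theta\in\R$; general $S\ge2$ and higher-dimensional $\theta$ follow by replicating the two losses in sign-canceling pairs and by letting each loss depend on a single coordinate, adjusting only the sizes of the constants.

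The mechanism is the cancellation that the name of the proposition suggests: near $\theta_1$ the two domain losses are individually steep but with opposite slopes, so their average is nearly flat, whereas near $\theta_2$ both losses are only mildly curved. Concretely, fix $\rho>0$, set $\theta_1=0$ and $\theta_2=10\rho$, pick constants $g>0$ and $0<\eta<\kappa<2g/\rho$, and choose $\gL_1,\gL_2$ so that on $\{|\theta|\le\rho\}$,
\[ \gL_1(\theta)=g\theta+\tfrac{\eta}{2}\theta^2, \qquad \gL_2(\theta)=-g\theta+\tfrac{\eta}{2}\theta^2, \]
while on $\{|\theta-\theta_2|\le\rho\}$ both losses equal $c_0+\tfrac{\kappa}{2}(\theta-\theta_2)^2$ for a common constant $c_0$, and outside these two disjoint balls the losses are any continuous (indeed smooth) extension matching the boundary values. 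Then $\gL_{\text{s}}(\theta)=\tfrac{\eta}{2}\theta^2$ near $\theta_1$ and $\gL_{\text{s}}(\theta)=c_0+\tfrac{\kappa}{2}(\theta-\theta_2)^2$ near $\theta_2$, so both points are (strict) local minima of $\gL_{\text{s}}$.

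The remaining computations are one line each. For the global sharpness, $\gS_{\text{global}}(\theta_1;\rho)=\tfrac{\eta}{2}\rho^2$ and $\gS_{\text{global}}(\theta_2;\rho)=\tfrac{\kappa}{2}\rho^2$, so $\gS_{\text{global}}(\theta_1;\rho)<\gS_{\text{global}}(\theta_2;\rho)$ because $\eta<\kappa$. For individual sharpness at $\theta_1$, maximizing $\pm g\epsilon+\tfrac{\eta}{2}\epsilon^2$ over $|\epsilon|\le\rho$ gives the value $g\rho+\tfrac{\eta}{2}\rho^2$ for each domain (the maximum being attained at $\epsilon=\pm\rho$), so $\tfrac12\sum_i\gS_i(\theta_1;\rho)=g\rho+\tfrac{\eta}{2}\rho^2$; at $\theta_2$ each domain contributes $\tfrac{\kappa}{2}\rho^2$, so $\tfrac12\sum_i\gS_i(\theta_2;\rho)=\tfrac{\kappa}{2}\rho^2$. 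Since $\kappa<2g/\rho$ forces $\tfrac{\kappa}{2}\rho^2<g\rho\le g\rho+\tfrac{\eta}{2}\rho^2$, we obtain $\tfrac1S\sum_i\gS_i(\theta_1;\rho)\ge\tfrac1S\sum_i\gS_i(\theta_2;\rho)$ (in fact strictly), which completes the construction.

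I expect the only point requiring genuine care to be the bookkeeping around the ``gluing'': making the two local quadratic pictures part of globally defined loss functions for which $\theta_1$ and $\theta_2$ are honest local minima, i.e.\ filling in the region between and around the two $\rho$-balls by an extension that does not disturb the losses on those balls and creates no competing minimum near $\theta_1$ or $\theta_2$ (taking the extension to exceed $\max(\gL_{\text{s}}(\theta_1),\gL_{\text{s}}(\theta_2))$ there suffices), and, for general $S$, checking that the sign-canceling replication keeps $\gL_{\text{s}}$ minimized at $\theta_1$ and that $g$ can still be chosen large enough (e.g.\ $g>(\kappa-\eta)\rho$ works for every $S\ge2$). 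No single step is a real obstacle; the entire content lies in the choice of construction.
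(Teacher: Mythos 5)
Your construction is correct and rests on the same mechanism as the paper's own proof (its second case): two domain losses with opposing linear terms that cancel in the average, so the total loss is flat near $\theta_1$ while each individual loss remains steep, and a second minimum $\theta_2$ where both losses are mildly but equally curved. The only substantive difference is that you exhibit explicit global loss functions and compute every sharpness exactly, which makes the argument valid for any fixed $\rho>0$ as the proposition states, whereas the paper only prescribes gradients and Hessians at the two points and argues through second-order Taylor expansions that hold for sufficiently small $\rho$.
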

\begin{figure}
    \centering
    \includegraphics[width=0.8\linewidth]{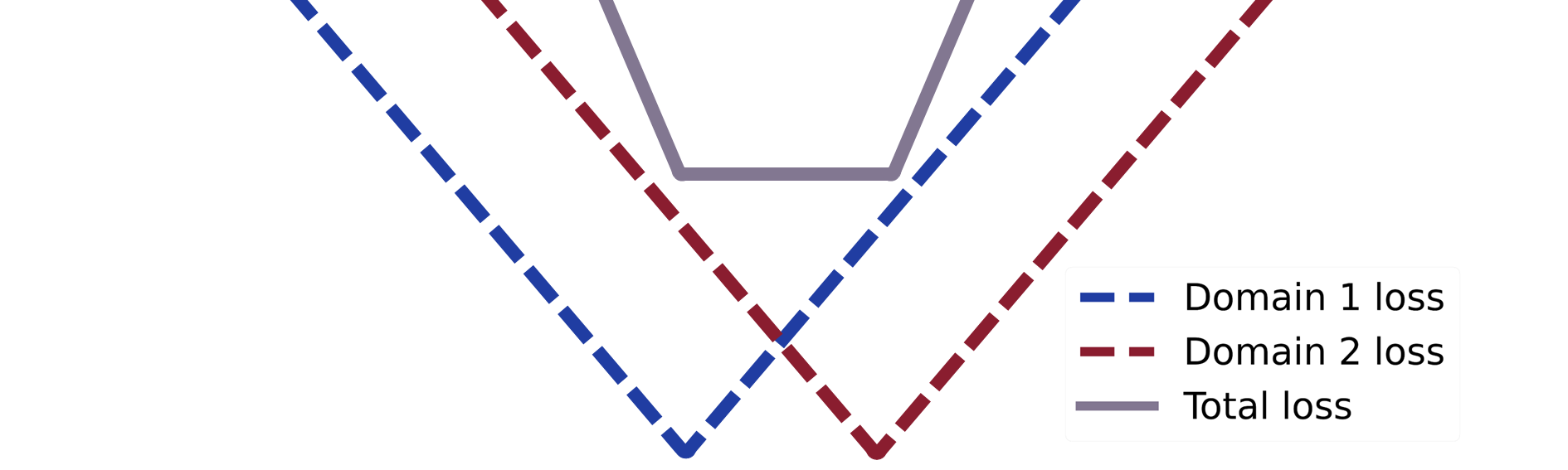}
    \caption{Fake flat minimum: two sharp individual losses (dotted) cancel out when summed, resulting in a deceptively flat total loss (solid).}
    \label{fig:1d}
\end{figure}
The proof is deferred to Appendix~\ref{app:proof}. This result shows that reducing global sharpness does not guarantee a reduction in the average individual sharpness. A na\"ive application of SAM to DG may therefore lead to solutions that appear flat globally but remain sharp on individual domains, a phenomenon we refer to as \textit{fake flat minima}. To illustrate this phenomenon, we present a 2-dimensional toy example involving two domains and two loss functions. Each domain shares the same base loss shape (Figure~\ref{subfig:toy_loss}) but is shifted along one axis. Figures~\ref{subfig:toy_loss1} and \ref{subfig:toy_loss2} visualize the total loss from two perspectives. In this example, region \textbf{R1} corresponds to an \textit{ideal solution}, where both individual domain losses exhibit flat minima. In contrast, region \textbf{R2} remains sharp for each individual domain loss, but appears deceptively flat in the total loss due to cancellation of opposing sharp valleys (Figure~\ref{fig:1d}). As a result, both SAM and SGD converge to region \textbf{R2} (Figure~\ref{subfig:toy_traj}), which constitutes a \textit{fake flat minimum}.

\begin{figure}[htb!]
  \centering
  \begin{subfigure}[b]{0.24\textwidth}
  \includegraphics[width=\textwidth]{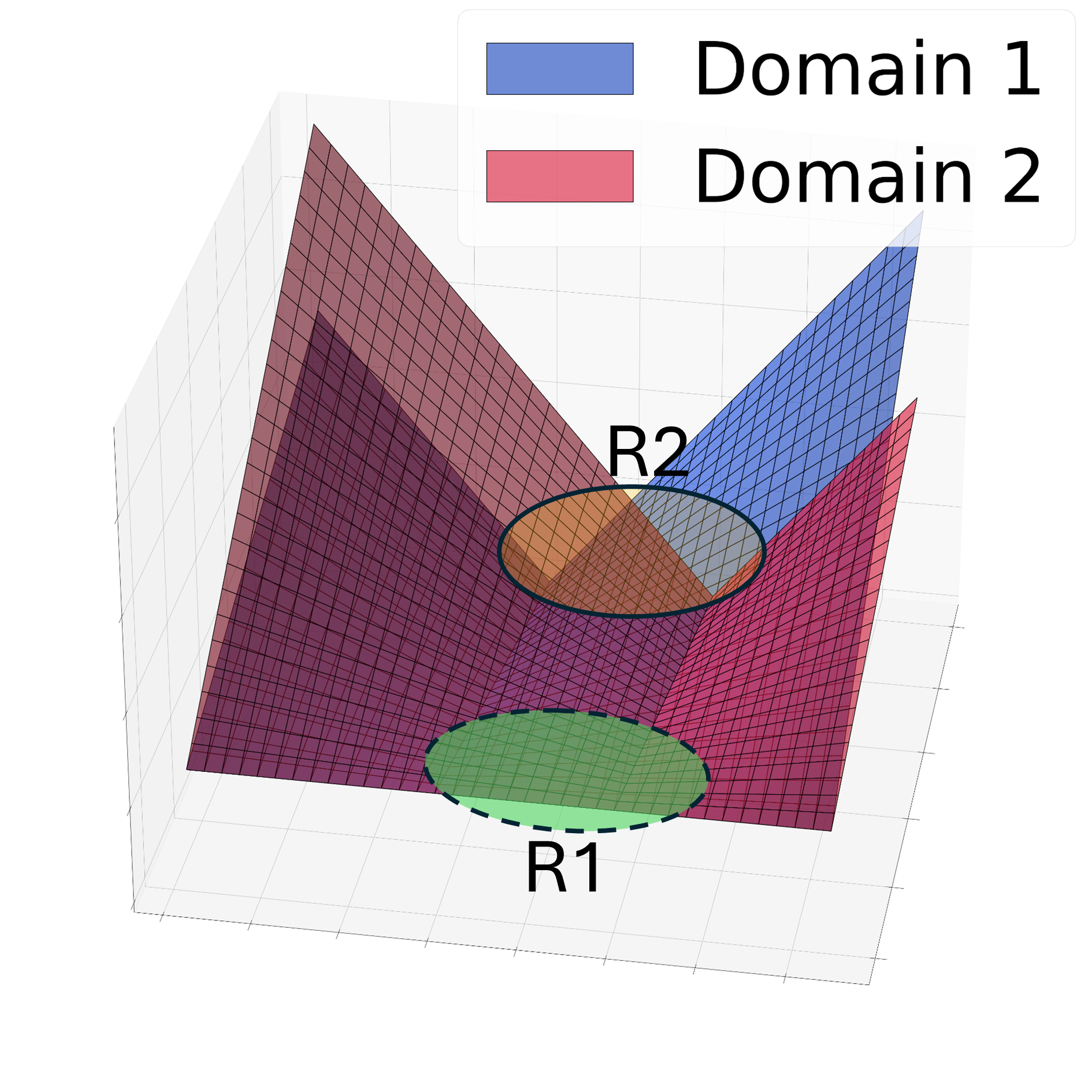}
   \caption{Rear view of the total loss landscape}\label{subfig:toy_loss1}
  \end{subfigure}
  \begin{subfigure}[b]{0.24\textwidth}
  \includegraphics[width=\textwidth]{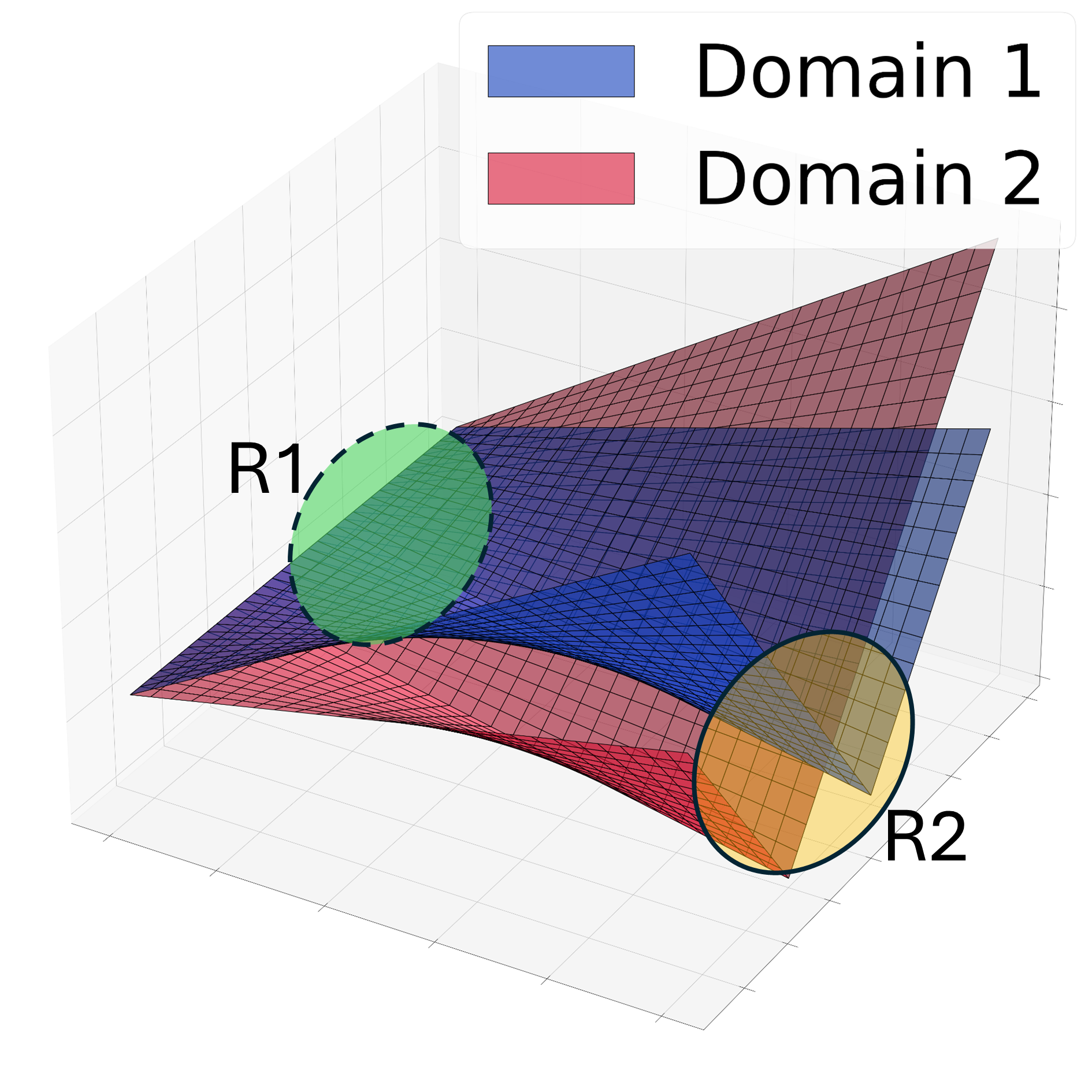} 
  \caption{Side view of the total loss landscape}\label{subfig:toy_loss2}
  \end{subfigure}
  \begin{subfigure}[b]{0.24\textwidth}
  \includegraphics[width=\textwidth]{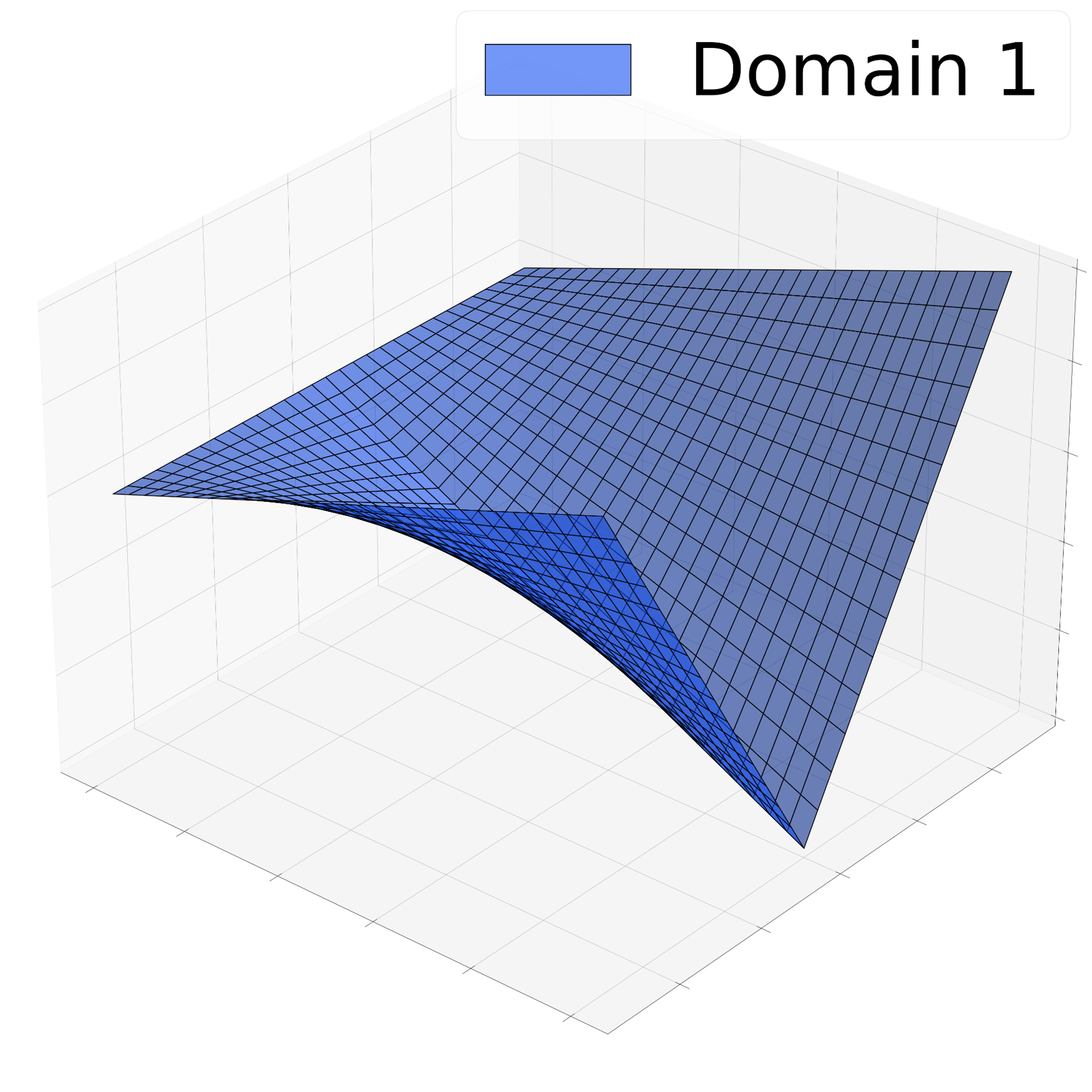}
   \caption{Loss landscape of a single domain}\label{subfig:toy_loss}
  \end{subfigure}
  \begin{subfigure}[b]{0.24\textwidth}
  \includegraphics[width=\textwidth]{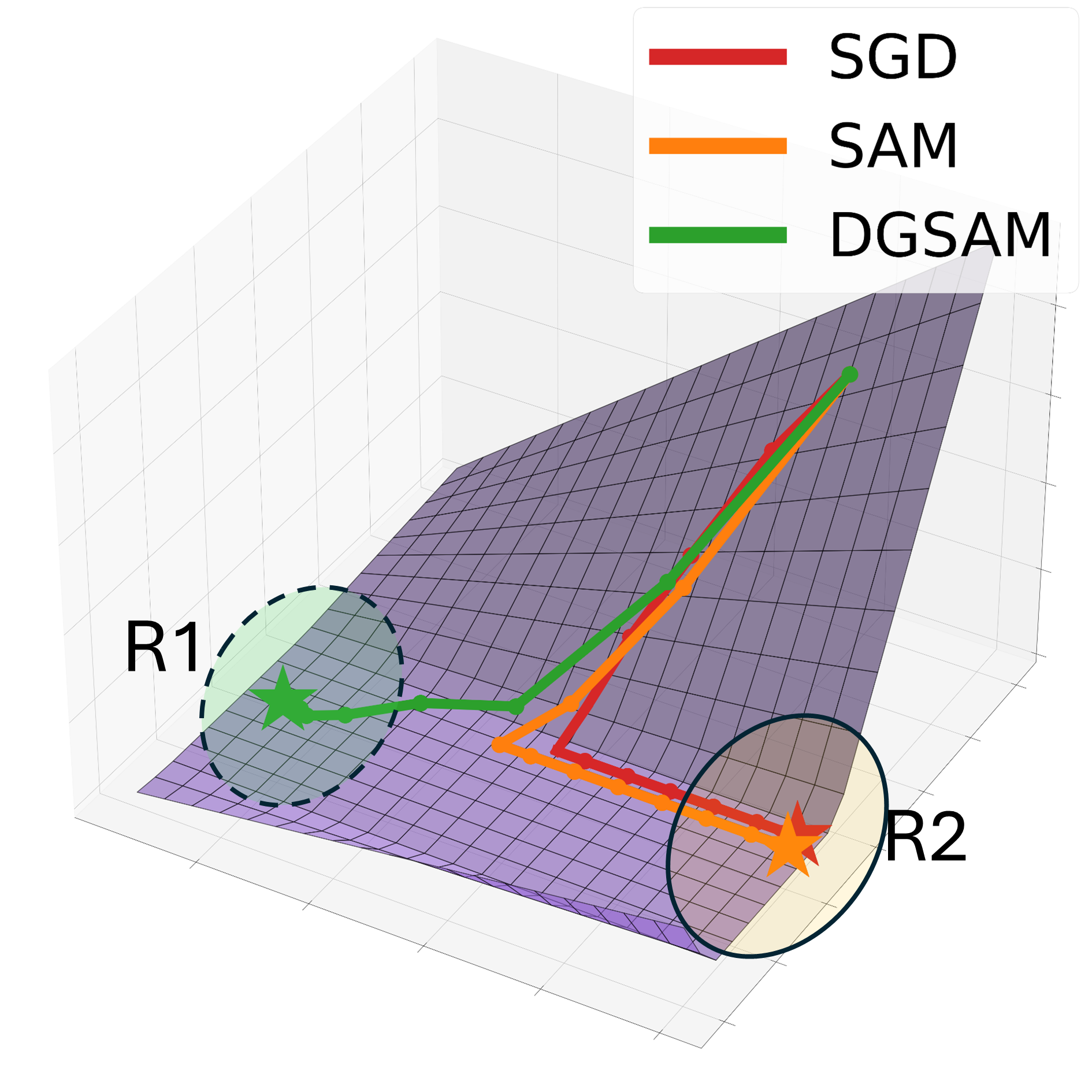} 
  \caption{Optimization trajectories}\label{subfig:toy_traj}
  \end{subfigure}
  \caption{Toy example: two conflicting loss functions construct two different type of flat minima. An interactive visualization of toy example is available at \url{https://dgsam-toy-example.netlify.app/}.} 
  \label{fig:toy}
\end{figure}


We further confirm this phenomenon in practical DG tasks. Using ResNet-50 on the PACS dataset, we observe that while SAM produces flat minima in the total loss, it fails to flatten the loss landscape at the individual domain level. Visualization of these loss landscapes is provided in Figure~\ref{fig:landscape_combined} of Appendix~\ref{app:loss-landscape}. 


\subsection{Analysis of Worst-Case Domain Loss}\label{subsec:worst-case}

To further understand the limitations of global sharpness minimization, we analyze the average worst-case domain risk under distribution shift. Let $\{\gD_i\}_{i=1}^S$ denote the source distributions, and fix a divergence threshold $\delta>0$. For each source domain $i$, define the local uncertainty set as
\[
\gU_i^\delta=\bigl\{\gD : \Div(\gD \| \gD_i)\le\delta\bigr\}
\]
where $\Div(\cdot \| \cdot)$ denotes a divergence measure such as $\KL$-divergence, total variation, or the Wasserstein distance. Intuitively, $\gU_i^\delta$ consists of all unseen target domains that lie within divergence $\delta$ of $\gD_i$. We then define the average worst-case domain risk over all source domains as 
\[
\gE(\theta;\delta):= \frac{1}{S}\sum_{i=1}^S \sup_{\gD\in\gU_i^\delta}\gL_{\gD}(\theta).
\]
which quantifies the expected risk under the worst-case distributional shift from each source domain. 

The following theorem shows that the average worst-case domain risk is effectively controlled by individual sharpness but not by global sharpness. 

\begin{theorem}\label{thm:worst-case}
Let \(\gL_s(\theta)\) denote the total loss over all source domains, \(\gS_{\text{global}}(\theta;\rho)\) the global sharpness, and \(\gS_i(\theta;\rho)\) the individual sharpness for the \(i\)-th domain. Then, for all $\theta$ and $\rho \geq \rho(\delta)$,
\[
\gE(\theta;\delta) \leq  \gL_{\text{s}}(\theta)
  + \frac{1}{S}\sum_{i=1}^S \gS_i(\theta;\rho).
\]
where $\rho(\delta)$ is defined in \eqref{eq:rho-delta} of Appendix~\ref{app:proof_worst_case}. Moreover, there exists a model parameter $\theta$ such that 
\[
\gE(\theta;\delta) >  \gL_{\text{s}}(\theta)
  +  \gS_{\text{global}}(\theta;\rho).
\]
\end{theorem}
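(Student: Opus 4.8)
The plan is to prove the two halves of Theorem~\ref{thm:worst-case} separately, since they require different techniques. For the upper bound $\gE(\theta;\delta) \le \gL_{\text{s}}(\theta) + \frac{1}{S}\sum_i \gS_i(\theta;\rho)$, the key observation is that for each source domain $i$ and each target $\gD \in \gU_i^\delta$, the shifted risk $\gL_\gD(\theta)$ can be related back to a perturbation of $\theta$ on the original domain $\gD_i$. Concretely, I would invoke a transport/change-of-measure argument: under a bounded divergence $\Div(\gD\|\gD_i)\le\delta$, the difference $\gL_\gD(\theta)-\gL_{\gD_i}(\theta)$ can be bounded by the worst-case sensitivity of the loss to a parameter shift of norm at most some $\rho(\delta)$ — this is exactly where the quantity $\rho(\delta)$ from \eqref{eq:rho-delta} enters. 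Once $\sup_{\gD\in\gU_i^\delta}\gL_\gD(\theta) \le \gL_{\gD_i}(\theta) + \gS_i(\theta;\rho(\delta)) \le \gL_i(\theta) + \gS_i(\theta;\rho)$ for $\rho\ge\rho(\delta)$ (using monotonicity of $\gS_i$ in $\rho$), averaging over $i$ and recalling $\gL_{\text{s}}(\theta)=\frac{1}{S}\sum_i\gL_i(\theta)$ gives the claim. I would lean on the precise statement of this transport bound as developed in Appendix~\ref{app:proof_worst_case}.

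For the second (negative) half — exhibiting a $\theta$ with $\gE(\theta;\delta) > \gL_{\text{s}}(\theta) + \gS_{\text{global}}(\theta;\rho)$ — the natural strategy is to reuse the \emph{fake flat minimum} construction behind Proposition~\ref{thm:cancel_out}. The idea: build source losses $\gL_i$ whose sharp valleys point in opposing directions so that they cancel in the sum, making $\gS_{\text{global}}(\theta;\rho)$ small (indeed near zero) at the fake-flat point $\theta$, while each individual $\gS_i(\theta;\rho)$ remains large. Since the average worst-case risk $\gE(\theta;\delta)$ is lower-bounded by the average individual sharpness contribution (each domain's worst-case shift can independently exploit its own sharp direction, and these do not cancel because the supremum is taken per-domain before averaging), we get $\gE(\theta;\delta) \gtrsim \gL_{\text{s}}(\theta) + \frac{1}{S}\sum_i \gS_i(\theta;\rho) \gg \gL_{\text{s}}(\theta) + \gS_{\text{global}}(\theta;\rho)$. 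To make this rigorous I would need a matching \emph{lower} bound $\gE(\theta;\delta) \ge \gL_{\text{s}}(\theta) + c\cdot\frac{1}{S}\sum_i\gS_i(\theta;\rho)$ for the constructed example — e.g., by choosing the uncertainty sets $\gU_i^\delta$ large enough (equivalently $\delta$ or $\rho(\delta)$ matched to $\rho$) that each contains a target distribution realizing a parameter shift in domain $i$'s sharp direction — which turns the inequality chain into a strict separation.

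The main obstacle I expect is pinning down the relationship $\rho(\delta)$ between the divergence radius $\delta$ and the parameter-perturbation radius $\rho$, and ensuring it is used consistently in both directions. For the upper bound we need $\rho(\delta)$ \emph{large enough} that every $\gD\in\gU_i^\delta$ is ``covered'' by some parameter perturbation of norm $\le\rho(\delta)$; for the lower-bound example we need $\gU_i^\delta$ to actually \emph{contain} a distribution that forces the loss along the sharp direction, i.e.\ the covering must be tight, not wasteful. Reconciling these — showing there is a regime of $\delta$ (hence $\rho(\delta)$) in which the first-half inequality holds while the second-half strict inequality is simultaneously achievable — is the delicate point; I would handle it by constructing the toy example and the divergence ball together, tuning the loss curvature so that the gap between $\frac{1}{S}\sum_i\gS_i$ and $\gS_{\text{global}}$ is strictly positive and persists after the $\rho(\delta)$ slack is absorbed. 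The remaining steps (monotonicity of sharpness in $\rho$, linearity of $\gL_{\text{s}}$, and the per-domain supremum exceeding the aggregate supremum) are routine.
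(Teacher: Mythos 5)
Your first half follows essentially the same route as the paper: Appendix~\ref{app:proof_worst_case} proves exactly the transport bound you invoke (Lemma~\ref{lem:shift-perturb}), converting a divergence ball of radius $\delta$ into a parameter ball of radius $\rho(\delta)$ via Pinsker's inequality for $\KL$, the definition of total variation, or Kantorovich--Rubinstein duality for $W_1$, and then averaging over $i$ using $\gL_{\text{s}}=\frac{1}{S}\sum_i\gL_i$. Nothing to add there.

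The second half is where you diverge, and where there is a genuine gap. You propose to realize the counterexample through a fake-flat-minimum construction and then to establish a matching lower bound $\gE(\theta;\delta)\ge \gL_{\text{s}}(\theta)+c\cdot\frac{1}{S}\sum_i\gS_i(\theta;\rho)$. That lower bound does not follow from anything you have: $\gS_i(\theta;\rho)$ measures sensitivity to moving $\theta$ at fixed data distribution, whereas $\sup_{\gD\in\gU_i^\delta}\gL_\gD(\theta)-\gL_i(\theta)$ measures sensitivity to reweighting the data at fixed $\theta$, and neither controls the other in general --- if $x\mapsto\ell(\theta,x)$ is nearly constant on the support of $\gD_i$, no admissible reweighting raises the risk regardless of how sharp $\gL_i$ is in $\theta$. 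So the claim that ``each domain's worst-case shift can independently exploit its own sharp direction'' is precisely the unproven step; making it true requires engineering the data-space variability of $\ell(\theta,\cdot)$ to match the parameter-space curvature, which is not routine, and you would additionally have to check that the strict separation survives the slack between $\delta$ and $\rho(\delta)$ --- the ``delicate point'' you name but do not resolve. The paper sidesteps all of this with a one-line degenerate example: $S=2$, $\gD_1=\gD_2=\mathrm{Uni}\{-1,+1\}$, $\ell(\theta,x)=\theta x$ on $\theta\in[0,1]$, so that $\gL_1=\gL_2=\gL_{\text{s}}\equiv 0$ and $\gS_{\text{global}}\equiv 0$, while for $\delta=\ln 2$ the $\KL$-ball contains the point masses at $\pm1$ and hence $\gE(\theta;\delta)=\theta>0$. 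There the failure of the global-sharpness bound comes purely from the distributional worst case (the uncertainty set ``derandomizes'' a symmetric distribution), not from cancellation of individual sharp valleys, and no lower bound in terms of $\gS_i$ is ever needed. In short: your mechanism for the negative result is different from the paper's, is considerably harder to make rigorous, and as written rests on an unestablished inequality.
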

The proof is provided in Appendix~\ref{app:proof_worst_case}. Theorem~\ref{thm:worst-case} highlights that minimizing global sharpness does not ensure a reduction in the average worst-case domain risk and therefore may fail to generalize under distribution shift. In contrast, minimizing individual sharpness leads to a tighter bound on this risk, making it a more appropriate surrogate for robust generalization under domain shifts.

\section{Methodology}

\subsection{Limitations of Total Gradient Perturbation}\label{subsec:grad-failure}

In SAM, each iteration performs gradient ascent to identify sensitive directions in the loss landscape by perturbing the parameters as
\begin{align}
\tilde{\theta}_t = \theta_t + \epsilon^*_{\gD_s} = \theta_t + \rho \frac{\nabla \gL_{\text{s}}(\theta_t)}{\|\nabla \gL_{\text{s}}(\theta_t)\|}, \label{eq:perturb_total}
\end{align}
where $\epsilon^*_{\gD_s}$ is the perturbation computed from the total loss gradient. However, this update direction may not increase losses uniformly across source domains, as the total loss gradient $\nabla \gL_{\text{s}}(\theta_t)$ does not generally align with the individual domain gradients $\nabla \gL_{i}(\theta_t)$ for $i = 1,\ldots, S$, as discussed in Section~\ref{sec:motivation}).

This misalignment between the total gradient and individual domain gradients leads to suboptimal perturbations when applied uniformly across all domains. To empirically demonstrate this limitation, we visualize in Figure~\ref{fig:loss_perturb} how different perturbation strategies affect the domain-wise loss increments during training. Starting from $\theta_0$, we iteratively apply perturbations to compute the perturbed parameter $\tilde{\theta}_i = \theta_0 + \sum_{j=1}^{i} \epsilon_j$ on the DomainNet dataset \citep{peng2019domainnet} using ResNet-50 \citep{he2016resnet}. In Figure~\ref{subfig:total_perturb}, each $\epsilon_i$ is computed using the total loss gradient. In contrast, Figure~\ref{subfig:gradual_perturb} applies perturbations sequentially using domain-specific gradients.

As shown in Figure~\ref{subfig:total_perturb}, total gradient perturbations often increase losses in an imbalanced manner across domains. On the other hand, the domain-wise perturbation strategy in Figure~\ref{subfig:gradual_perturb} leads to a more uniform increase in domain-wise losses. This observation suggests that applying domain-specific gradients sequentially is more effective at capturing the structure of individual domain losses. As a result, the resulting perturbations better reflect individual sharpness.

\begin{figure}[H]
  \centering
  \begin{subfigure}[b]{0.45\textwidth}
    \includegraphics[width=\textwidth]{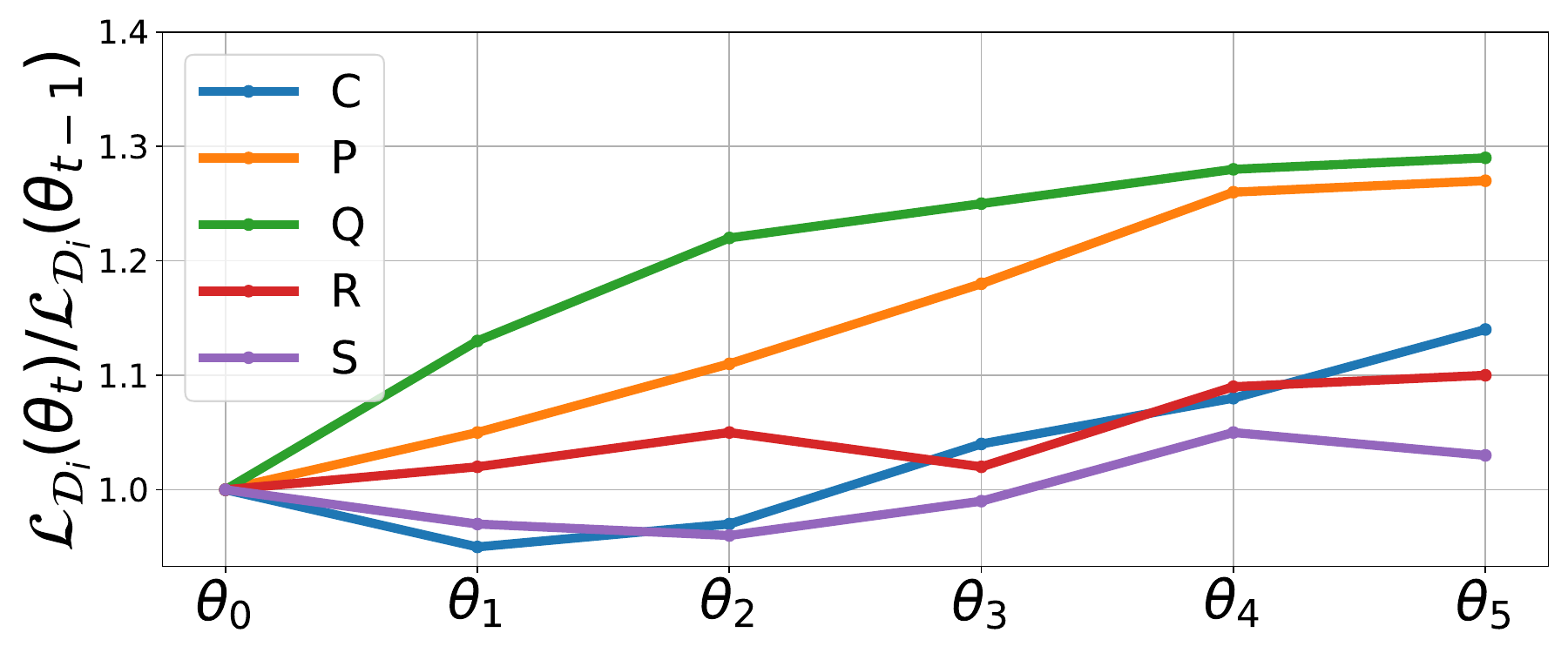}
    \caption{Perturbation by total gradient.}\label{subfig:total_perturb}
  \end{subfigure}
  \hfill
  \begin{subfigure}[b]{0.45\textwidth}
    \includegraphics[width=\textwidth]{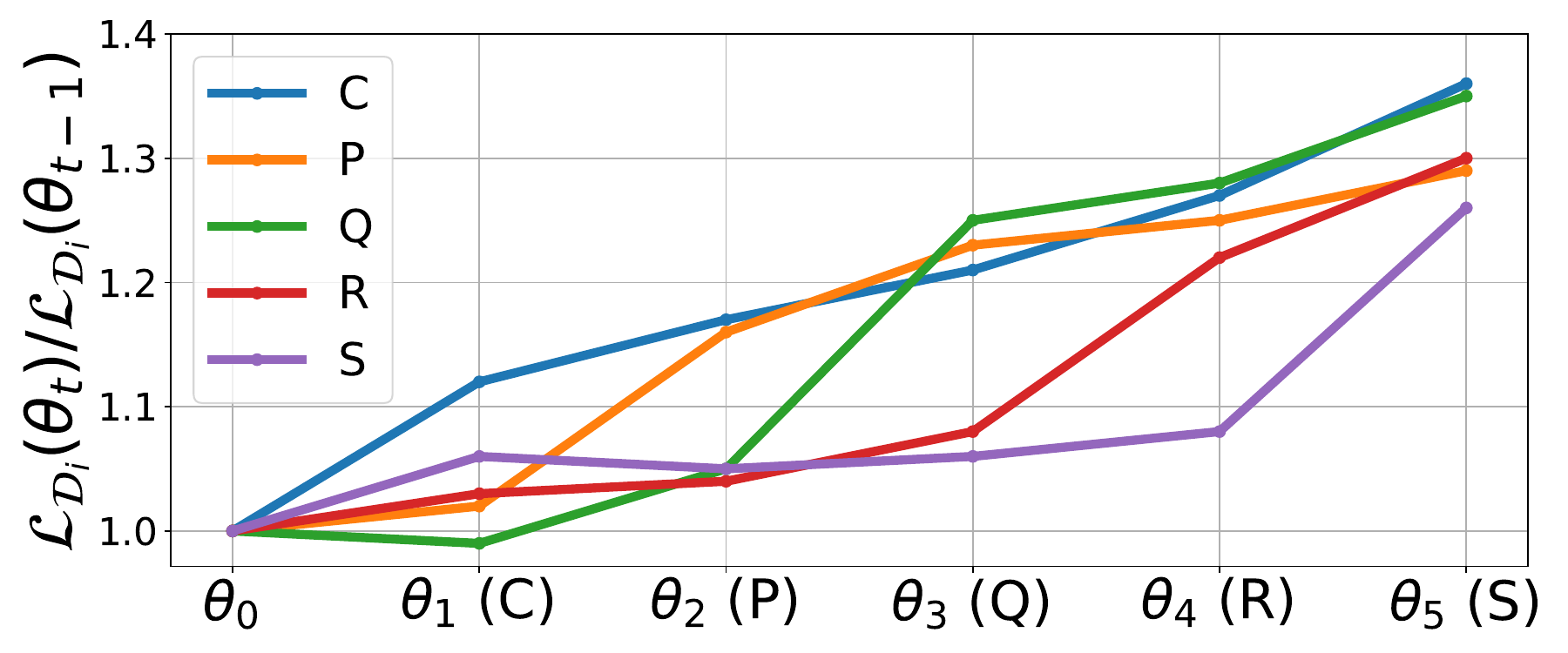}
    \caption{Perturbation by individual gradients.}\label{subfig:gradual_perturb}
  \end{subfigure}
  \caption{Domain-wise loss increments under different perturbation strategies.}
  \label{fig:loss_perturb}
\end{figure}

\subsection{Decreased-overhead Gradual SAM (DGSAM)}
We propose Decreased-overhead Gradual Sharpness-Aware Minimization (DGSAM), a novel algorithm that iteratively applies domain-specific perturbations to improve control over individual sharpness. Our update strategy is inspired by the sequential perturbation scheme proposed in Lookbehind-SAM~\citep{mordido2024lookbehind}, which also applies multiple ascent steps to find flatter regions in the standard i.i.d. setting. DGSAM adapts this idea to sequentially incorporate domain-specific gradients, enabling more effective control of individual sharpness across heterogeneous domains. 

The update rule of DGSAM is given by:
\begin{align}
    g_j &= \nabla \gL_{B_{l_j}}(\tilde{\theta}_{j-1}) \text{ for } j = 1, \dots, S, \quad
    g_{S+1} = \nabla \gL_{B_{l_1}}(\tilde{\theta}_S), \label{eq:dgsam_grad} \\
    \theta_{t+1} &= \theta_t - \gamma \left( \frac{S}{S+1} \right) \sum_{j=1}^{S+1} g_j. \label{eq:dgsam_update}
\end{align}
where $l = (l_1, \dots, l_S)$ denotes a random permutation of the $S$ source domain indices, and each $\gL_{B_{l_j}}$ is the loss computed over a mini-batch $B_{l_j}$ drawn from the $l_j$-th domain.

In the ascent phase, as defined in \eqref{eq:dgsam_grad}, DGSAM performs $S+1$ perturbation steps, each based on the gradient of an individual domain, followed by a descent step that updates the model using the aggregated gradients. Specifically, we begin with $\widetilde \theta_0 = \theta_t$ and at each step $j\in\{1,\ldots, S\}$, we compute the domain-specific gradient $g_j=\nabla \gL_{B_{l_j}}(\widetilde{\theta}_{j-1})$ for the $j$-th domain (sampled in random order) and apply the perturbation $\rho \frac{g_j}{\|g_j\|}$ to update $\widetilde \theta_j$ (See lines 7-9 in Algorithm~\ref{alg:algorithm1}). These gradients are stored and later reused during the descent update to reduce computational overhead.

Note that the gradient $g_1$ is computed at the unperturbed point $\theta_t$ so it does not reflect the curvature-aware structure. To correct for this inconsistency, we perform one additional gradient computation at the final perturbed point $\widetilde \theta_S$ using $\nabla \gL_{B_{l_1}}(\widetilde \theta_S)$ again (lines 10-11 in Algorithm~\ref{alg:algorithm1}).

As a result, DGSAM obtains a perturbation that accounts for all individual domain directions and collects $S+1$ gradients, which are then averaged to update the parameters  as in \eqref{eq:dgsam_update}. This design allows DGSAM to reflect individual domain geometry while requiring only \(S+1\) gradient computations per iteration, significantly lower than the $2S$ computations. 

Moreover, compared to SAM, which perturbs along the total loss gradient and may bias the update toward a dominant domain, DGSAM constructs a trajectory through multiple intermediate perturbations $\tilde{\theta}_1, \tilde{\theta}_2, \dots$ that sequentially incorporate gradients from all domains. This gradual update ensures that the ascent direction reflects the geometry of individual losses more uniformly. As a result, the subsequent descent step can effectively reduce sharpness across all source domains, rather than favoring only the dominant ones. See Figure~\ref{fig:algorithm} in Appendix~\ref{app:loss-landscape}.

The following theorem shows that DGSAM achieves $\epsilon$-stationarity under standard assumptions, aligning with the convergence guarantees recently established for SAM in non-convex settings~\cite{oikonomou2025sam_convergence}. 
\begin{theorem}[\(\epsilon\)-approximate stationary]
    Let Assumptions~\ref{ass:smoothness} and ~\ref{ass:ER} hold. Then, for any \(\epsilon > 0\), the iterates of DGSAM satisfy for \(\rho\leq \overline{\rho}\), \(\gamma\leq\overline{\gamma}\), $T\geq \overline T$
    \[
     \min_{t=0,\dots,T-1} \E\no{\nabla\gL_{\text{s}}(\theta_t)} \leq \epsilon
    \]
    where full expressions of $\overline\rho$, $\overline\gamma$, and $\overline T$ are given in Theorem~\ref{thm:stationary}. We refer to Appendix~\ref{app:convergence} for the proof. 
\end{theorem}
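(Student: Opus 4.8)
The plan is to follow the standard descent-lemma template for SAM-type convergence in smooth non-convex optimization, adapted to DGSAM's $(S{+}1)$-step gradual perturbation scheme. First I would unfold one iteration: write the effective update as $\theta_{t+1}=\theta_t-\gamma\bar g_t$ where $\bar g_t=\frac{S}{S+1}\sum_{j=1}^{S+1}g_j$, and apply the $L$-smoothness assumption (Assumption~\ref{ass:smoothness}) to get
\[
\gL_{\text{s}}(\theta_{t+1})\le \gL_{\text{s}}(\theta_t)-\gamma\ip{\nabla\gL_{\text{s}}(\theta_t)}{\bar g_t}+\tfrac{L\gamma^2}{2}\no{\bar g_t}^2.
\]
The crux is then to control the inner product term by showing $\bar g_t$ is, up to a controllable error, aligned with $\nabla\gL_{\text{s}}(\theta_t)$. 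Each perturbed point satisfies $\no{\tilde\theta_{j-1}-\theta_t}\le (j-1)\rho$, so by smoothness $\no{g_j-\nabla\gL_{B_{l_j}}(\theta_t)}\le L(j-1)\rho$; summing and using the extra correction gradient $g_{S+1}$ at $\tilde\theta_S$, I would show $\E[\bar g_t\mid\gF_t]$ equals $\nabla\gL_{\text{s}}(\theta_t)$ plus a bias of order $O(L\rho S)$ (the random permutation makes the mini-batch average over domains an unbiased estimate of the full source gradient, after taking Assumption~\ref{ass:ER} — bounded variance / unbiased stochastic gradients — into account). Hence $\ip{\nabla\gL_{\text{s}}(\theta_t)}{\E[\bar g_t]}\ge \no{\nabla\gL_{\text{s}}(\theta_t)}^2 - O(L\rho S)\no{\nabla\gL_{\text{s}}(\theta_t)}$, which after a Young's inequality split gives a clean $\tfrac12\no{\nabla\gL_{\text{s}}(\theta_t)}^2$ descent term minus an $O(L^2\rho^2 S^2)$ residual.

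Next I would bound $\E\no{\bar g_t}^2\le 2\no{\nabla\gL_{\text{s}}(\theta_t)}^2 + O(L^2\rho^2S^2)+\sigma^2$ using the same smoothness bounds plus the variance bound from Assumption~\ref{ass:ER}, so the $\tfrac{L\gamma^2}{2}\no{\bar g_t}^2$ term contributes at most $L\gamma^2\no{\nabla\gL_{\text{s}}(\theta_t)}^2$ (absorbed into the descent term once $\gamma\le\overline\gamma:=\tfrac1{4L}$ or similar) plus lower-order noise. Substituting back, taking conditional expectation, and rearranging yields
\[
\tfrac{\gamma}{4}\E\no{\nabla\gL_{\text{s}}(\theta_t)}^2 \le \E\bigl[\gL_{\text{s}}(\theta_t)-\gL_{\text{s}}(\theta_{t+1})\bigr] + \gamma\, C(L,\rho,S,\sigma),
\]
where $C = O(L^2\rho^2S^2+L\gamma\sigma^2)$. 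Telescoping over $t=0,\dots,T-1$, dividing by $T$, and bounding $\gL_{\text{s}}(\theta_0)-\gL_{\text{s}}^\star\le \Delta$ gives
\[
\min_t \E\no{\nabla\gL_{\text{s}}(\theta_t)}^2 \le \frac{4\Delta}{\gamma T} + 4C.
\]
Finally, I would choose $\overline\rho$ small enough that the $O(L^2\rho^2S^2)$ part of $4C$ is below $\epsilon^2/8$, $\overline\gamma$ small enough that the $O(L\gamma\sigma^2)$ part is below $\epsilon^2/8$, and $\overline T\ge 16\Delta/(\gamma\epsilon^2)$ so the first term is below $\epsilon^2/2$; then $\min_t\E\no{\nabla\gL_{\text{s}}(\theta_t)}^2\le\epsilon^2$, and Jensen ($\E\no{\cdot}\le\sqrt{\E\no{\cdot}^2}$) delivers the stated $\min_t\E\no{\nabla\gL_{\text{s}}(\theta_t)}\le\epsilon$.

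\textbf{Main obstacle.} The delicate step is the bias analysis of the aggregated gradient $\bar g_t$: unlike vanilla SAM where one perturbs once along $\nabla\gL_{\text{s}}$, here the perturbations are chained across a \emph{random} domain ordering, so each $g_j$ is evaluated at a point that depends on the previously sampled domains and mini-batches. I must carefully set up the filtration, verify that averaging over the permutation restores (approximate) unbiasedness with respect to $\nabla\gL_{\text{s}}(\theta_t)$, and track how the perturbation-induced errors accumulate as $O(L\rho\sum_{j}(j-1))=O(L\rho S^2)$ — this $S^2$ (rather than $S$) dependence is what forces $\overline\rho$ to scale like $1/S^2$ (or $\epsilon/(LS^2)$) and is the price of the gradual scheme. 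Handling the role of the extra correction gradient $g_{S+1}=\nabla\gL_{B_{l_1}}(\tilde\theta_S)$ — which replaces the "uncurved" $g_1$ evaluated at $\theta_t$ — in keeping the bias bounded is the part most likely to need a careful, non-routine argument; everything else is the standard smooth non-convex SGD/SAM bookkeeping.
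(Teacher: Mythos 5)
Your overall skeleton — descent lemma via $L$-smoothness, bounding the drift of each $g_j$ from $\nabla\gL_{B_{l_j}}(\theta_t)$ by $L(j-1)\rho \le LS\rho$ through the chained perturbations, a Young split to salvage a $\tfrac12\no{\nabla\gL_{\text{s}}(\theta_t)}^2$ descent term, a second-moment bound on the aggregated gradient, telescoping, and then tuning $\rho,\gamma,T$ — is exactly the paper's route (its Lemmas~\ref{lem:1} and~\ref{lem:2} are precisely your alignment and second-moment bounds, and the per-gradient bias is indeed $O(SL\rho)$, so $\overline\rho$ only needs to scale like $1/S$, not $1/S^2$ as you feared; the extra gradient $g_{S+1}$ requires no special treatment since each $g_j$ is bounded identically).

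The genuine gap is in how you read Assumption~\ref{ass:ER} and, consequently, in the telescoping step. You gloss the assumption as ``bounded variance / unbiased stochastic gradients'' and accordingly write $\E\no{\bar g_t}^2 \le 2\no{\nabla\gL_{\text{s}}(\theta_t)}^2 + O(L^2\rho^2S^2) + \sigma^2$. But the Expected Residual condition is weaker than bounded variance: it allows a term $2M_1\bigl[\gL_{\text{s}}(\theta_t)-\gL_{\text{s}}(\theta^*)\bigr]$ in the second-moment bound. Carrying this through the descent lemma, the per-step inequality becomes
\[
\tfrac{S\gamma}{2}\,\E\no{\nabla\gL_{\text{s}}(\theta_t)}^2 \;\le\; \bigl(1+2M_1 L S^2\gamma^2\bigr)\,\delta_t \;-\;\delta_{t+1}\;+\;N,
\qquad \delta_t := \tfrac{2}{S\gamma}\,\E\bigl[\gL_{\text{s}}(\theta_t)-\gL_{\text{s}}(\theta^*)\bigr],
\]
with a coefficient \emph{strictly greater than one} on $\delta_t$, so the sum over $t$ does not telescope as you describe; naively summing produces a geometrically growing contribution $\propto (1+2M_1LS^2\gamma^2)^T\delta_0$. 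The paper closes this by invoking a recursion lemma (its Lemma~\ref{lem:delta-N}, giving $\min_t r_t \le \tfrac{g^T}{T}\delta_0 + N$) and by imposing the additional, $T$-dependent constraint $\gamma \le \tfrac{1}{S\sqrt{2M_1LT}}$ so that $(1+2M_1LS^2\gamma^2)^T \le e \le 3$. Your proof as written only goes through if $M_1=0$ (i.e., under the stronger bounded-variance hypothesis); to prove the theorem under the stated assumption you need either this growth-controlled recursion or an equivalent device, and the coupling between $\overline\gamma$ and $\overline T$ that it forces is missing from your parameter choices.
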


\begin{algorithm}
  \small
  \caption{DGSAM}
  \begin{algorithmic}[1]
    \STATE {\bfseries Require:} Initial parameter $\theta_0$, learning rate $\gamma$, ; radius $\rho$; total iterations $N$; training sets $\{\gD_i\}_{i=1}^S$ 
    \FOR{$t \gets 0$ to $N-1$}
      \STATE Sample batches $B_i \sim \gD_i$ for $i = 1,\cdots,S$
      \STATE Set a random order $l = permute(\{1,\cdots,S\})$
      \STATE $\tilde{\theta}_0\gets\theta_t$
      \FOR {$j \gets 1$ to $S+1$}
        \IF {$j \leq S$}
            \STATE $g_j \gets  \nabla \gL_{B_{l_j}}(\tilde{\theta}_{j-1})$
            \STATE $\tilde{\theta}_{j} \gets \tilde{\theta}_{j-1}+\rho\dfrac{g_j}{\lVert g_j \rVert}$
        \ELSIF {$j = S+1$}
            \STATE $g_{S+1} \gets  \nabla \gL_{B_{l_1}}(\tilde{\theta}_{S})$
        \ENDIF 
      \ENDFOR
      \STATE $\theta_{t+1} \gets \theta_t - \gamma \left( \dfrac{S}{S+1} \right) \displaystyle \sum\limits_{j=1}^{S+1} g_j$
    \ENDFOR
  \end{algorithmic}
  \label{alg:algorithm1}
\end{algorithm}

\subsection{How DGSAM Controls Individual Sharpness}

Recently studies~\citep{ma2023survey, zhuang2022gsam} have pointed out that SAM's nested approximations may lead to suboptimal control of curvature. \citet{luo2024eigensam} showed that aligning the perturbation direction with an eigenvector can control the corresponding eigenvalue. However, relying solely on the top eigenvectors is insufficient in multi-domain settings, where the directions may conflict across domains. In such cases,  it is more desirable to incorporate a broader set of eigenvectors associated with large eigenvalues, capturing curvature shared across domains. Moreover, \citet{wen2023samtheory} demonstrated that controlling the entire eigenvalue spectrum yields tighter generalization bounds than focusing solely on the top eigenvalue.

In this regard, we analyze how DGSAM’s gradual perturbation mechanism implicitly controls the individual sharpness. At the $j$-th step of the ascent phase, the gradient $g_j$ is computed as:
\begin{align*}
   g_j = \nabla \gL_{B_{l_j}}(\tilde{\theta}_{j-1})
   &= \nabla\gL_{B_{l_j}}\left(\tilde{\theta}_0 + \sum\limits_{k=1}^{j-1} \rho \frac{g_k}{\|g_k\|} \right) \\
   &\approx \nabla \gL_{B_{l_j}}(\tilde{\theta}_0) 
   + \rho \nabla^2 \gL_{B_{l_j}}(\tilde{\theta}_0) 
     \sum\limits_{k=1}^{j-1} \frac{g_k}{\|g_k\|} + O(\rho^2).
\end{align*}
Since the Hessian $\nabla^2 \gL_{B_{l_j}}$ is symmetric and hence diagonalizable, we decompose it as \(\nabla^2 \gL_{B_{l_j}}(\tilde{\theta}_0) = \sum_n \lambda_n v_n v_n^\top,\)
where $E_j = (\lambda_n\,v_n)$ is the set of eigenpairs of $\nabla^2\gL_{B_{l_j}}(\theta_t)$. Then, the \(g_j\) can be approximated as
\begin{align}
g_j \approx \nabla \gL_{B_{l_j}}(\tilde{\theta}_0) 
+ \rho \sum_{(\lambda, v) \in E_j} \lambda \left( \sum_{k=1}^{j-1} \frac{v^\top g_k}{\|v\| \|g_k\|} \right) v,
\label{Eq:approximated_perturb}
\end{align}
In this approximation, the first term represents the standard ascent direction for the \(j\)-th domain, while the second term is a weighted sum of eigenvectors. The weights reflect both the corresponding eigenvalues and the similarity between the ascent directions from different domains. Thus, the gradual perturbation strategy of DGSAM effectively leverages eigenvector information across all domains, ensuring that the sharpness of individual domain losses is balanced and robustly controlled.

In Figure~\ref{fig:norm_comparison} of Appendix~\ref{app:eigens}, we empirically compare the magnitudes of the two terms in~\eqref{Eq:approximated_perturb}. We find that the second term contributes significantly to $g_j$, confirming that curvature-aware terms meaningfully modify the ascent direction. Furthermore, in the toy example from Section~\ref{sec:motivation}, DGSAM consistently converges to regions that are flat across all individual domains, thereby avoiding the fake flat minima phenomenon.

\begin{table*}[htbp]
\centering
\caption{Performance comparison on five DomainBed benchmarks. We report both trial-based standard deviation (\(\pm\)) and test-domain standard deviation (SD). Bold and underlined entries indicate the \textbf{best} and \underline{second-best} results, excluding DGSAM+SWAD. Results marked with \(\dag\), \(\ddag\), or unlabeled are sourced from \citet{wang2023sagm}, \citet{zhang2023fad}, or the original papers, respectively.}\label{tab:main}
\adjustbox{max width=\textwidth}{%
\begin{tabular}{lcccccccccc|cc}
\toprule
\multirow{2}{*}{Algorithm} & \multicolumn{2}{c}{PACS} & \multicolumn{2}{c}{VLCS} & \multicolumn{2}{c}{OfficeHome} & \multicolumn{2}{c}{TerraInc} & \multicolumn{2}{c|}{DomainNet} & \multicolumn{2}{c}{Avg}  \\
 &  Mean  &  SD  &  Mean  &  SD  &  Mean  &  SD  &  Mean  &  SD  &  Mean  &  SD  &  Mean  &  SD   \\
\midrule
IRM\(^\dag\) \cite{arjovsky2019irm}& 83.5\tiny$\pm$1.0 & 8.4 & 78.6\tiny$\pm$0.6 & 12.4 & 64.3\tiny$\pm$2.3 & \underline{9.1} & 47.6\tiny$\pm$1.4 & 7.9 & 33.9\tiny$\pm$2.9 & \underline{15.2} &  {61.6} &  {10.6} \\
ARM\(^\dag\) \cite{zhang2021arm}& 85.1\tiny$\pm$0.6 & 8.0 & 77.6\tiny$\pm$0.7 & 13.1 & 64.8\tiny$\pm$0.4 & 10.2 & 45.5\tiny$\pm$1.3 & 7.4 & 35.5\tiny$\pm$0.5 & 16.7 &  {61.7} &  {11.1} \\
VREx\(^\dag\) \cite{krueger2021vrex}& 84.9\tiny$\pm$1.1 & 7.6 & 78.3\tiny$\pm$0.8 & 12.4 & 66.4\tiny$\pm$0.6 & 9.9 & 46.4\tiny$\pm$2.4 & 6.9 & 33.6\tiny$\pm$3.0 & \textbf{15.0} &  {61.9} &  {10.4} \\
CDANN\(^\dag\) \cite{li2018cdann}& 82.6\tiny$\pm$0.9 & 9.2 & 77.5\tiny$\pm$1.0 & 12.1 & 65.7\tiny$\pm$1.4 & 10.6 & 45.8\tiny$\pm$2.7 & \textbf{5.9} & 38.3\tiny$\pm$0.5 & 17.3 &  {62.0} &  {11.0} \\
DANN\(^\dag\) \cite{ganin2016dann}& 83.7\tiny$\pm$1.1 & 9.2 & 78.6\tiny$\pm$0.6 & 12.6 & 65.9\tiny$\pm$0.7 & 9.8 & 46.7\tiny$\pm$1.6 & 7.9 & 38.3\tiny$\pm$0.4 & 17.0 &  {62.6} &  {11.3} \\
RSC\(^\dag\) \cite{huang2020rsc}& 85.2\tiny$\pm$1.0 & 7.6 & 77.1\tiny$\pm$0.7 & 13.0 & 65.5\tiny$\pm$1.0 & 10.0 & 46.6\tiny$\pm$1.0 & 7.0 & 38.9\tiny$\pm$0.7 & 17.3 &  {62.7} &  {11.0} \\
MTL\(^\dag\) \cite{blanchard2021mtl}& 84.6\tiny$\pm$1.0 & 8.0 & 77.2\tiny$\pm$0.8 & 12.5 & 66.4\tiny$\pm$0.5 & 10.0 & 45.6\tiny$\pm$2.4 & 7.3 & 40.6\tiny$\pm$0.3 & 18.4 &  {62.9} &  {11.2} \\
MLDG\(^\dag\) \cite{li2018mldg}& 84.9\tiny$\pm$1.1 & 7.9 & 77.2\tiny$\pm$0.8 & 12.2 & 66.8\tiny$\pm$0.8 & 9.9 & 47.8\tiny$\pm$1.7 & 7.6 & 41.2\tiny$\pm$1.7 & 18.4 &  {63.6} &  {11.2} \\
ERM\(^\dag\) & 85.5\tiny$\pm$0.6 & 7.0 & 77.3\tiny$\pm$1.1 & 12.5 & 67.0\tiny$\pm$0.4 & 10.5 & 47.0\tiny$\pm$1.0 & 7.6 & 42.3\tiny$\pm$0.4 & 19.1 &  {63.8} &  {11.4} \\
SagNet\(^\dag\) \cite{nam2021sagnet}& 86.3\tiny$\pm$0.5 & 6.9 & 77.8\tiny$\pm$0.7 & 12.5 & 68.1\tiny$\pm$0.3 & 9.5 & 48.6\tiny$\pm$0.3 & 7.1 & 40.3\tiny$\pm$0.3 & 17.9 &  {64.2} &  {10.8} \\
CORAL\(^\dag\) \cite{sun2016coral}& 86.2\tiny$\pm$0.6 & 7.5 & 78.8\tiny$\pm$0.7 & \underline{12.0} & 68.7\tiny$\pm$0.4 & 9.6 & 47.7\tiny$\pm$0.4 & 7.0 & 41.5\tiny$\pm$0.3 & 18.3 &  {64.6} &  {10.9} \\
SWAD \cite{cha2021swad}& 88.1\tiny$\pm$0.4 & 5.9 & 79.1\tiny$\pm$0.4 & 12.8 & \underline{70.6}\tiny$\pm$0.3 & 9.2 & \textbf{50.0}\tiny$\pm$0.3 & 7.9 & \textbf{46.5}\tiny$\pm$0.2 & 19.9 &  {\underline{66.9}} &  {11.2} \\
\midrule
GAM\(^\ddag\) \cite{zhang2023gam}& 86.1\tiny$\pm$1.3 & 7.4 & 78.5\tiny$\pm$1.2 & 12.5 & 68.2\tiny$\pm$0.8 & 12.8 & 45.2\tiny$\pm$1.7 & 9.1 & 43.8\tiny$\pm$0.3 & 20.0 &  {64.4} &  {12.4} \\
SAM\(^\dag\) \cite{foret2021sam} & 85.8\tiny$\pm$1.3 & 6.9 & 79.4\tiny$\pm$0.6 & 12.5 & 69.6\tiny$\pm$0.3 & 9.5 & 43.3\tiny$\pm$0.3 & 7.5 & 44.3\tiny$\pm$0.2 & 19.4 &  {64.5} &  {11.2} \\
Lookbehind-SAM \cite{mordido2024lookbehind} & 86.0\tiny$\pm$0.4 & 7.2 & 78.9\tiny$\pm$0.8 & 12.4 & 69.2\tiny$\pm$0.6 & 11.2 & 44.5\tiny$\pm$1.0 & 8.2 & 44.2\tiny$\pm$0.3 & 19.6 &  {64.7} &  {11.8} \\
GSAM\(^\dag\) \cite{zhuang2022gsam} & 85.9\tiny$\pm$0.3 & 7.4 & 79.1\tiny$\pm$0.3 & 12.3 & 69.3\tiny$\pm$0.1 & 9.9 & 47.0\tiny$\pm$0.1 & 8.8 & 44.6\tiny$\pm$0.3 & 19.8 &  {65.2} &  {11.6} \\
FAD \cite{zhang2023fad} & \underline{88.2}\tiny$\pm$0.6 & 6.3 & 78.9\tiny$\pm$0.9 & 12.1 & 69.2\tiny$\pm$0.7 & 13.4 & 45.7\tiny$\pm$1.6 & 9.6 & 44.4\tiny$\pm$0.3 & 19.5 &  {65.3} &  {12.2} \\
DISAM \cite{zhang2024disam} & 87.1\tiny$\pm$0.5 & \underline{5.6} & 79.9\tiny$\pm$0.2 & 12.3 & 70.3\tiny$\pm$0.2 & 10.3 & 46.6\tiny$\pm$1.4 & 6.9 & 45.4\tiny$\pm$0.3 & 19.5 &  {65.9} &  {10.9} \\
SAGM \cite{wang2023sagm} & 86.6\tiny$\pm$0.3 & 7.2 & \underline{80.0}\tiny$\pm$0.4 & 12.3 & 70.1\tiny$\pm$0.3 & 9.4 & 48.8\tiny$\pm$0.3 & 7.5 & 45.0\tiny$\pm$0.2 & 19.8 &  {66.1} &  {11.2} \\
\midrule
DGSAM & \textbf{88.5}\tiny$\pm$0.4 & \textbf{5.2} & \textbf{81.4}\tiny$\pm$0.5 & \textbf{11.5} & \textbf{70.8}\tiny$\pm$0.3 & \textbf{8.5} & \underline{49.9}\tiny$\pm$0.7 & \underline{6.9} & \underline{45.5}\tiny$\pm$0.3 & 19.4 &  {\textbf{67.2}} &  {\textbf{10.3}} \\
DGSAM + SWAD & 88.7\tiny$\pm$0.4 & 5.4 & 80.9\tiny$\pm$0.5 & 11.6 & 71.4\tiny$\pm$0.4 & 8.7 & 51.1\tiny$\pm$0.8 & 6.8 & 47.1\tiny$\pm$0.3 & 19.6 &  {67.8} &  {10.4} \\
\bottomrule
\end{tabular}}
\end{table*}

\section{Numerical Experiments}
\subsection{Experimental Settings}\label{subsec:settings}

\paragraph{Evaluation protocols, Baselines and Datasets}
For all main experiments, we adhere to the DomainBed protocol \cite{gulrajani2021domainbed}, including model initialization, hyperparameter tuning, and validation methods, to ensure a fair comparison. Our experiments are conducted on five widely used DG benchmarks: PACS \citep{li2017pacs}, VLCS \citep{fang2013VLCS}, OfficeHome \citep{venkateswara2017office}, TerraIncognita \citep{beery2018terra}, and DomainNet \citep{peng2019domainnet}.

We adopt the standard leave-one-domain-out setup: one domain is held out for testing, while the model is trained on the remaining source domains \cite{gulrajani2021domainbed}.  Model selection is based on validation accuracy computed over the source domains. In addition to the average test accuracy commonly reported in DG, we also report the standard deviation of per-domain performance across test domains. This metric captures robustness to domain shifts and highlights potential overfitting to domains that are similar to the training distribution. Each experiment is repeated three times, and standard errors are reported.

\paragraph{Implementation Details}
We use a ResNet-50 \cite{he2016resnet} backbone pretrained on ImageNet, and Adam \cite{Kingma2015Adam} as the base optimizer. We use the hyperparameter space, the total number of iterations, and checkpoint frequency based on \cite{wang2023sagm}. The specific hyperparameter settings and search ranges are described in Appendix~\ref{subsec:implementation}.

\subsection{Accuracy and Domain-wise Variance Across Benchmarks}\label{sec:exp_main}

We compare DGSAM with 20 baseline algorithms across five widely used benchmark datasets: PACS, VLCS, OfficeHome, TerraIncognita, and DomainNet. The complete experimental setup and evaluation protocol follow DomainBed \citep{gulrajani2021domainbed} and are detailed in Section~5.1.

Table~\ref{tab:main} reports the average test accuracy and two types of standard deviation: (1) trial-based standard deviation across three random seeds, denoted by \(\pm\), and (2) domain-wise standard deviation, measuring performance variance across held-out domains. Higher accuracy and lower standard deviation indicate better and more robust generalization.

DGSAM achieves the highest average accuracy 67.2\% and the lowest domain-level variance 10.3 among all methods, outperforming baselines on PACS, VLCS, and OfficeHome, and ranking second on TerraIncognita and DomainNet. We include SWAD in our comparison as a widely recognized state-of-the-art baseline in domain generalization. DGSAM outperforms SWAD on more datasets and achieves higher average accuracy with lower domain-wise variance. In addition, as DGSAM and SWAD operate under fundamentally different mechanisms, they can be naturally combined. DGSAM combined with SWAD yields additional performance gains, reaching 67.8\% accuracy and highlighting the complementary nature of the two approaches. Detailed results for each dataset, including per-source and per-test domain accuracy and standard deviation, are provided in Appendix~\ref{app:full_results}.

\subsection{Sharpness Analysis}

To evaluate whether DGSAM effectively achieves flat minima at the individual domain level, we compare the sharpness of the converged solutions obtained by DGSAM and SAM. Table~\ref{tab:zeroth_sharpness} presents the zeroth-order sharpness on the DomainNet dataset. This demonstrates that DGSAM does not merely reduce global sharpness, but explicitly minimizes individual sharpness across domains. In contrast, SAM primarily focuses on reducing global sharpness, but often fails to lower individual sharpness, leading to suboptimal robustness under domain shift. Moreover, DGSAM yields substantially lower sharpness in unseen domains, suggesting that minimizing individual sharpness during training leads to improved generalization under distribution shift. This observation aligns with our theoretical analysis in Section~\ref{subsec:worst-case}, which showed that individual sharpness provides a tighter upper bound on the worst-case domain risk than global sharpness. Additional analyses based on Hessian spectrum density and loss landscape visualizations further support these findings and are provided in Appendix~\ref{app:sensitive analysis}.

\begin{table}[h!]
\footnotesize
\centering\captionsetup{justification=centering, skip=5pt}
\caption{The zeroth-order sharpness result at converged minima}
\setlength{\tabcolsep}{3.5pt} 
\begin{tabular}{l@{}cccccccc}
\toprule
 & \multicolumn{5}{c}{Individual domains} & \multirow{2}{*}{Mean (Std)} & \multirow{2}{*}{Total} & \multirow{2}{*}{Unseen} \\
 & Clipart & Painting & Quickdraw & Real & Sketch &  &  &  \\ \midrule
SAM    & 1.63    & 6.22     & 7.86      & 4.89 & 3.38   & 4.79 (2.17)   & 19.68 & 70.59 \\
DGSAM  & \textbf{1.17} & \textbf{2.78} & \textbf{4.74} & \textbf{4.39} & \textbf{1.80}   & \textbf{2.98 (1.40)}   & \textbf{6.41} & \textbf{42.46} \\ \bottomrule
\end{tabular}
\label{tab:zeroth_sharpness}
\end{table}

\subsection{Computational Cost}

\begin{figure}
    
    \centering 
    \includegraphics[width=0.8\linewidth]{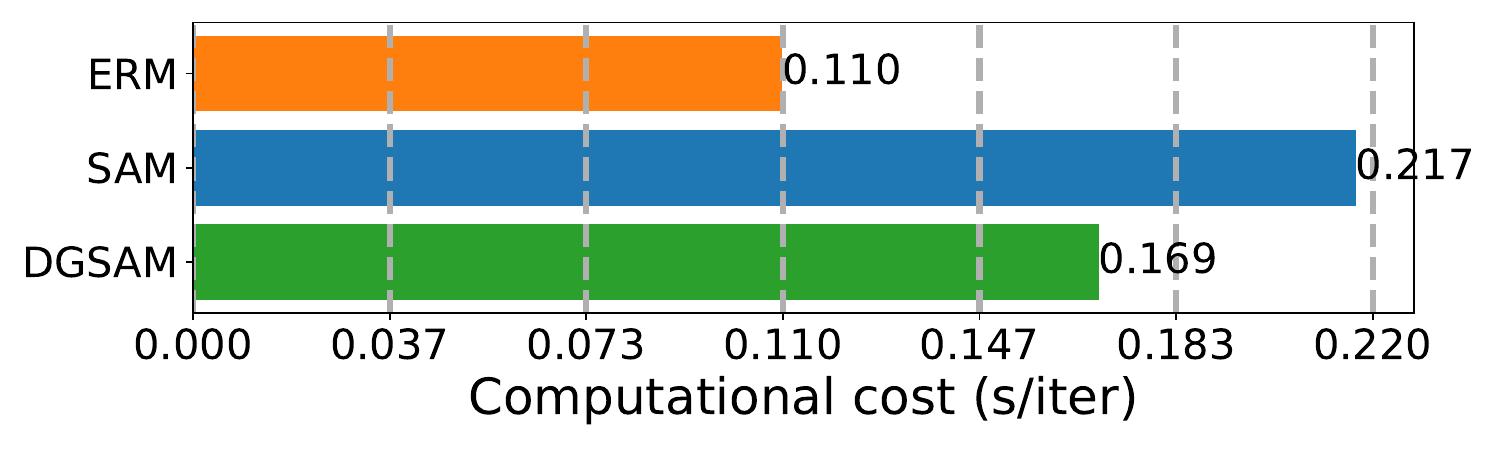} 
    \caption{Comparison of empirical computational cost measured by training time per iteration.}
    \label{fig:cost_exp} 
\end{figure}

In addition to performance improvements, DGSAM significantly reduces the computational overhead commonly associated with SAM variants. Let $S$ denote the number of source domains and $c$ the unit cost of computing gradients for one mini-batch. Then, the per-iteration cost of ERM is $S \times c$, as it requires one gradient computation per domain. SAM performs two backpropagations per domain, one for perturbation and another for the update, yielding a cost of approximately $2S \times c$. In contrast, DGSAM requires only $S+1$ gradient computations per iteration, resulting in a theoretical cost of $(S+1) \times c$. Further details are provided in the Appendix~\ref{app:cost_comparison_with_figure}.

To validate this, we measure the actual training time per iteration on the PACS dataset. With $S = 3$ source domains, ERM takes approximately $c = 0.074$ seconds per iteration. SAM incurs a cost of $0.217$ seconds, nearly double that of ERM, while DGSAM achieves $0.169$ seconds per iteration. Although slightly higher than its theoretical cost $(S+1) \times c \approx 0.148$, the deviation is primarily due to additional overheads such as gradient aggregation. These results confirm that DGSAM achieves competitive performance with significantly lower computational burden compared to SAM. Full results on all datasets are included in Appendix~\ref{app:full_results}.

\section{Discussion and Future Directions}\label{sec:discussion}

This paper revisits the role of sharpness minimization in domain generalization. While prior approaches have naively applied SAM to the aggregated loss across source domains, we reveal that this strategy can converge to \textit{fake flat minima}—solutions that appear flat globally but remain sharp in individual domains, leading to poor generalization. To better capture the structure of domain-specific risks, we introduced a new perspective based on the \textit{average worst-case domain risk}, showing that minimizing individual sharpness offers more meaningful control over robustness to distribution shift than minimizing global sharpness. This insight offers a fundamentally new direction for the DG community, shifting the sharpness-aware optimization paradigm from global to domain-specific objectives. Based on this finding, we proposed DGSAM, an algorithm that gradually applies perturbations along domain-specific directions and reuses gradients to efficiently reduce individual sharpness. Experiments on five DG benchmarks showed that DGSAM not only improves average accuracy but also significantly reduces domain-wise variance, achieving flatter minima across individual domains and better generalization to unseen distributions.

While our results open up a new direction for sharpness-aware domain generalization, several open questions remain. For instance, in settings where all local minima correspond to fake flat minima, it is unclear which solutions are truly optimal or how to guide the model toward them. Moreover, developing a more systematic and direct approach to minimizing individual sharpness, beyond sequential perturbation, would further improve training stability and theoretical guarantees.

Finally, our analysis has implications beyond domain generalization. Since SAM has been widely used in multiple-loss settings such as multi-task learning \cite{le2024sammtl1,phan2022sammtl2} and federated learning \cite{lee2024samfl1, qu2022samfl2, caldarola2022samfl3}, our findings suggest that careful consideration of individual sharpness may also enhance generalization in these broader contexts. 



\bibliographystyle{plainnat} 
\bibliography{reference} 






\newpage
\appendix
\textbf{\Large{Appendix}}

\section{Visualization of Loss Landscapes}\label{app:loss-landscape}

Figure~\ref{fig:landscape_combined} shows the 3D loss landscapes of converged solutions obtained by SAM and our proposed DGSAM on the PACS dataset using ResNet-50. Each subplot corresponds to a different domain or the aggregated total loss. While SAM finds flat minima in the total loss, it fails to flatten the loss surfaces in individual domains. In contrast, DGSAM successfully reduces individual sharpness as well as the total sharpness, demonstrating its ability to achieve flatter minima at the domain level.

\begin{figure*}[htb!]
  \centering
  \begin{tabular}{c c c c c}
    & \textbf{Art} & \textbf{Photo} & \textbf{Sketch} & \textbf{Total} \\ \midrule

    \raisebox{40pt}{\rotatebox{90}{SAM}} &
    \begin{subfigure}[b]{0.19\textwidth}
      \includegraphics[width=\textwidth]{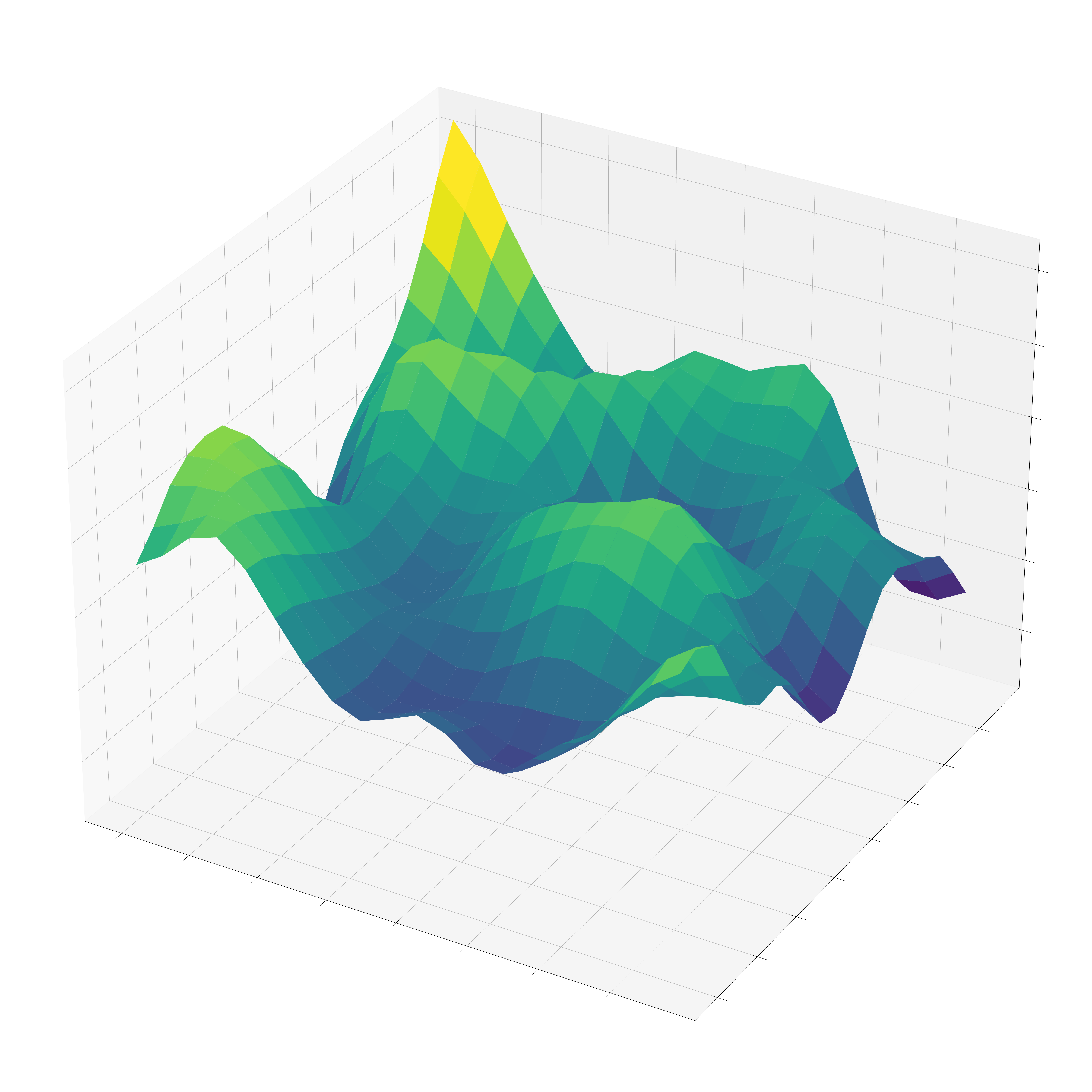}
    \label{subfig:land_sam_art}
    \end{subfigure} &
    \begin{subfigure}[b]{0.19\textwidth}
      \includegraphics[width=\textwidth]{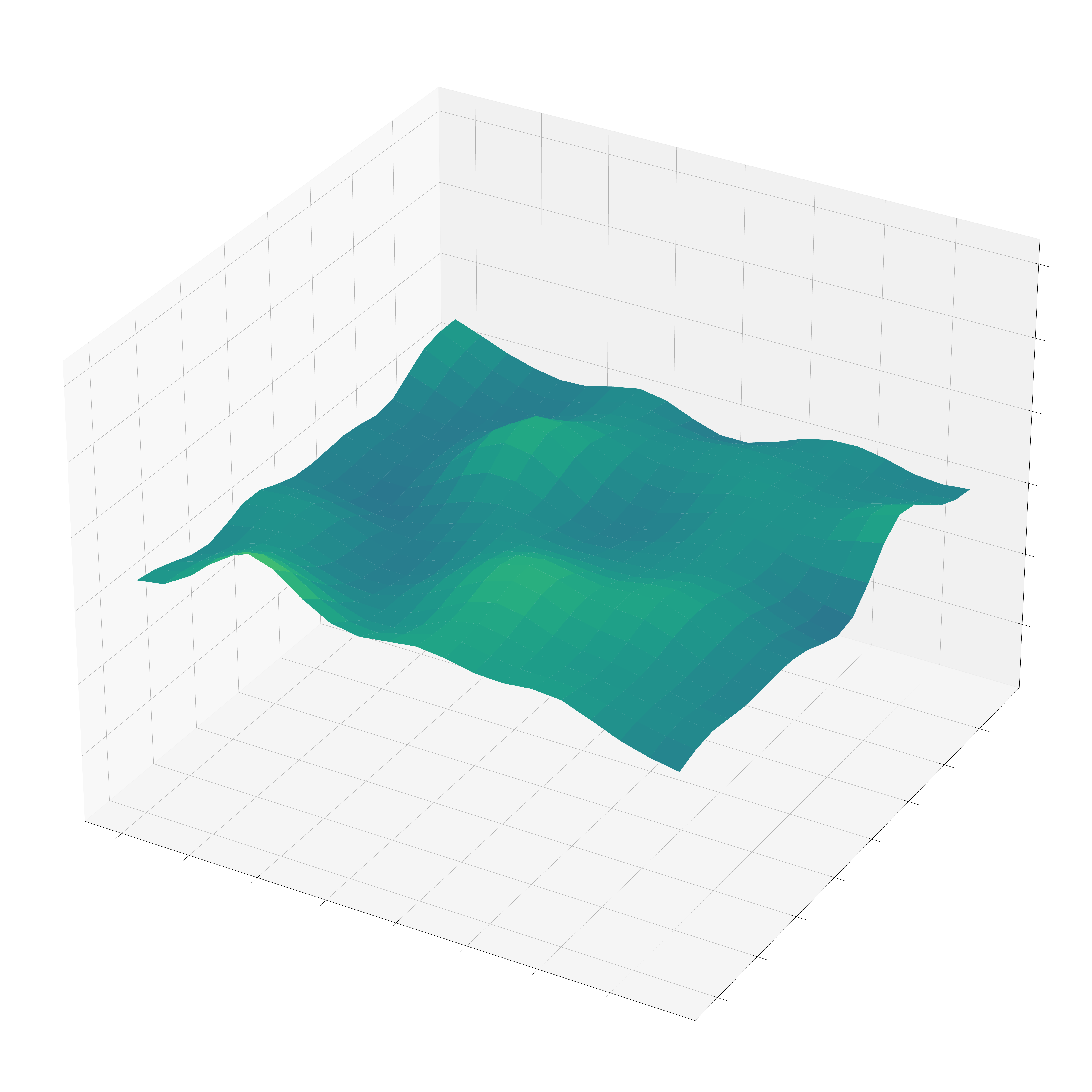}
      \label{subfig:land_sam_photo}
    \end{subfigure} &
    \begin{subfigure}[b]{0.19\textwidth}
      \includegraphics[width=\textwidth]{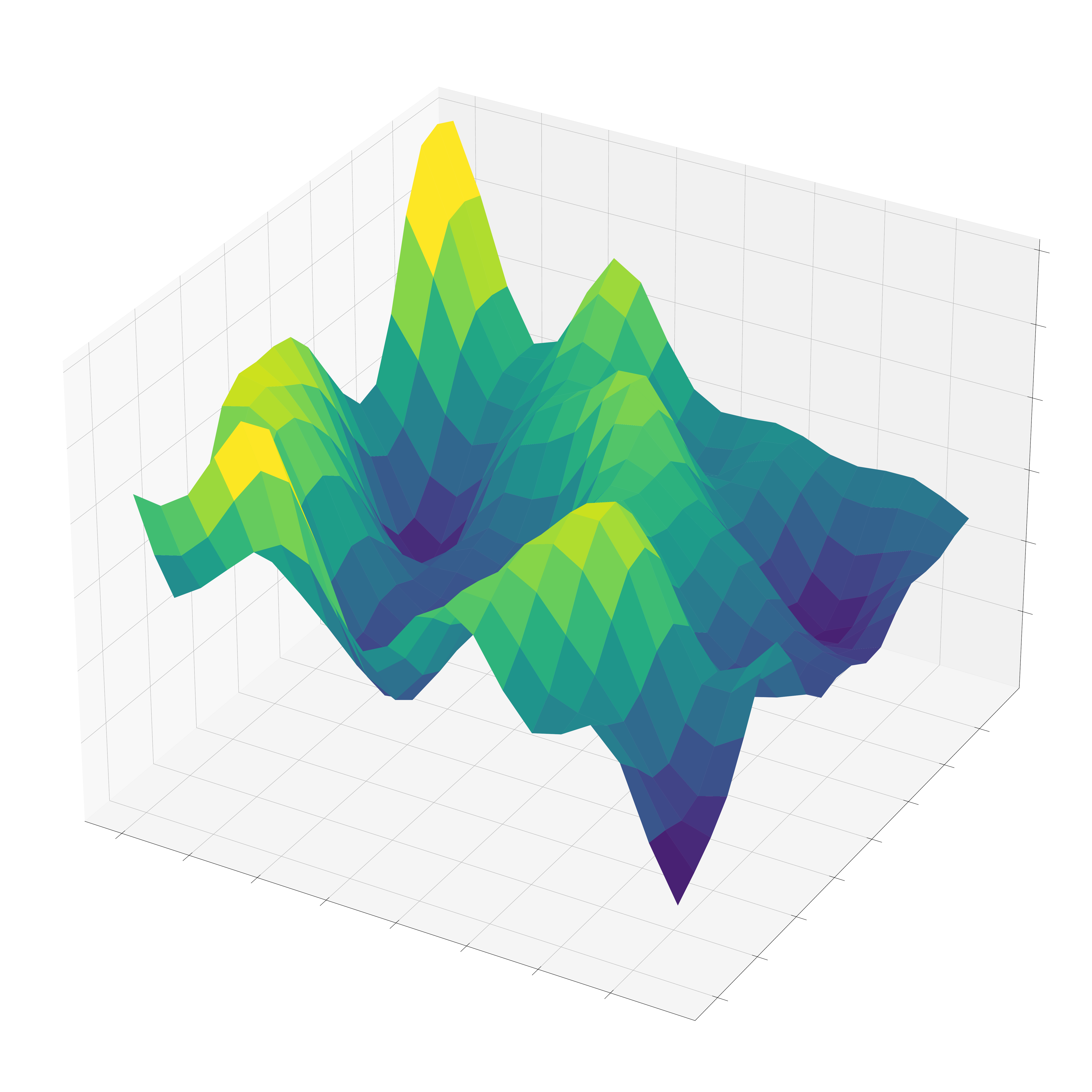}
      \label{subfig:land_sam_sketch}
    \end{subfigure} &
    \begin{subfigure}[b]{0.19\textwidth}
      \includegraphics[width=\textwidth]{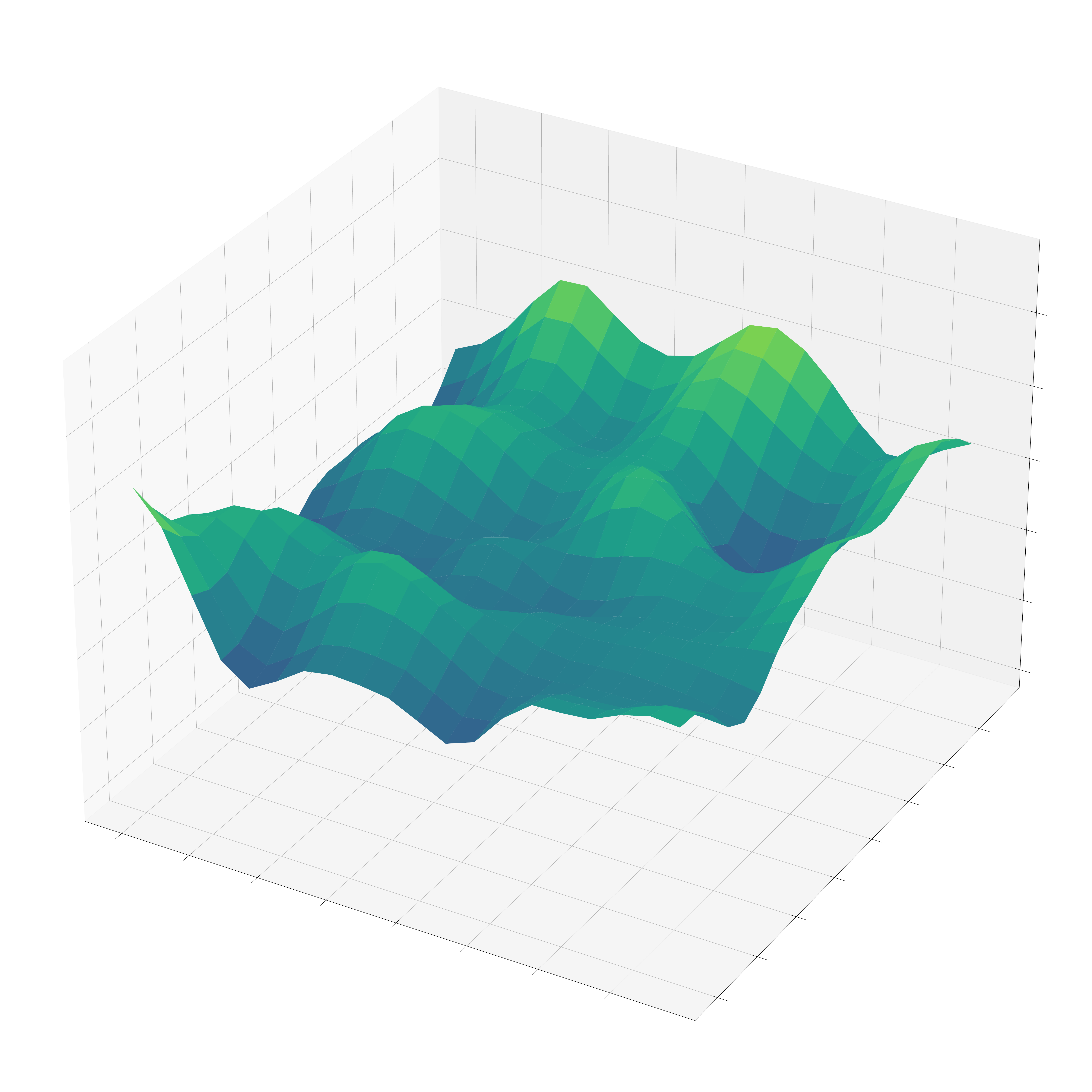}
      \label{subfig:land_sam_total}
    \end{subfigure} \\ 

    \raisebox{40pt}{\rotatebox{90}{DGSAM}} &
    \begin{subfigure}[b]{0.19\textwidth}
      \includegraphics[width=\textwidth]{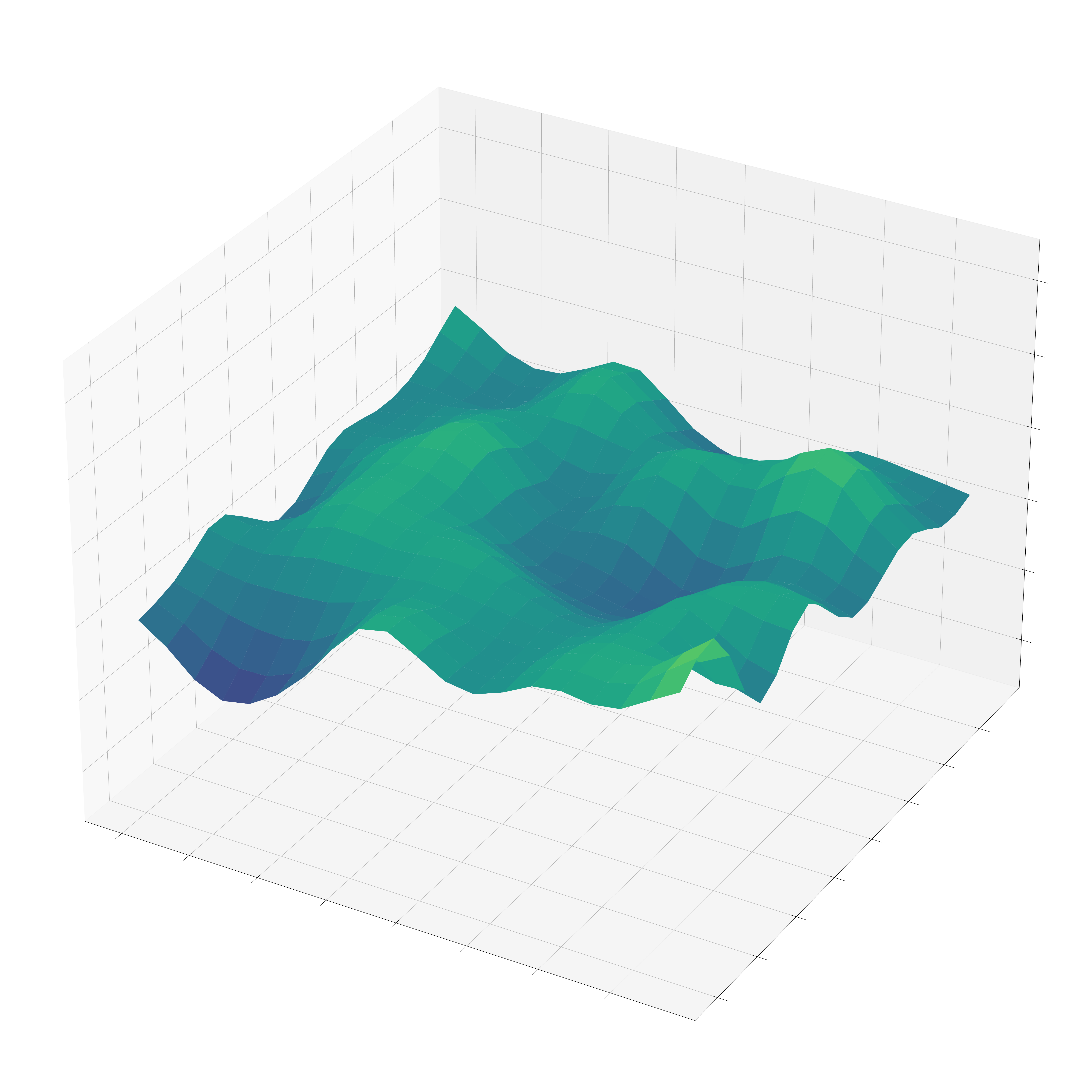}
      \label{subfig:land_dgsam_art}
    \end{subfigure} &
    \begin{subfigure}[b]{0.19\textwidth}
      \includegraphics[width=\textwidth]{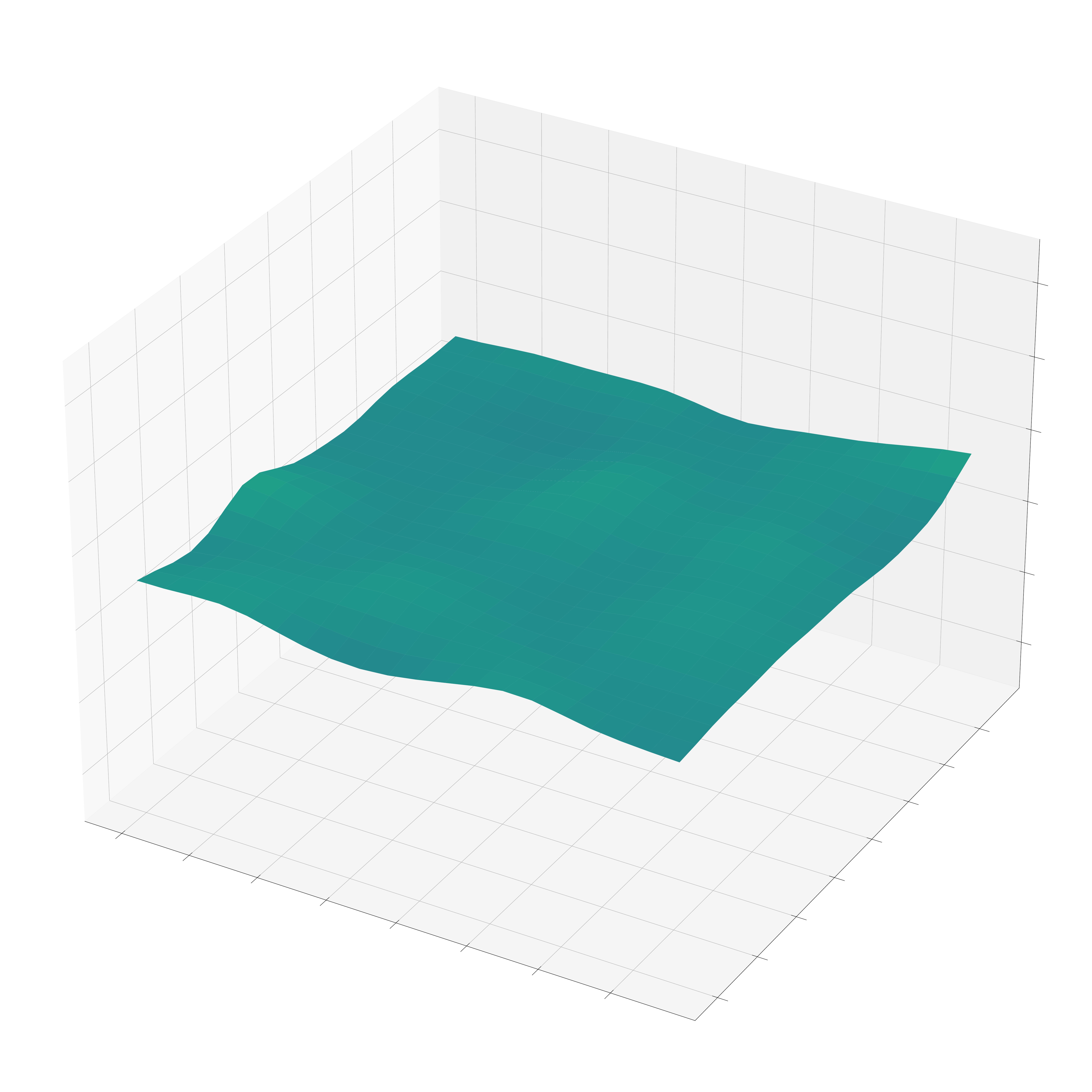}
     \label{subfig:land_dgsam_photo}
    \end{subfigure} &
    \begin{subfigure}[b]{0.19\textwidth}
      \includegraphics[width=\textwidth]{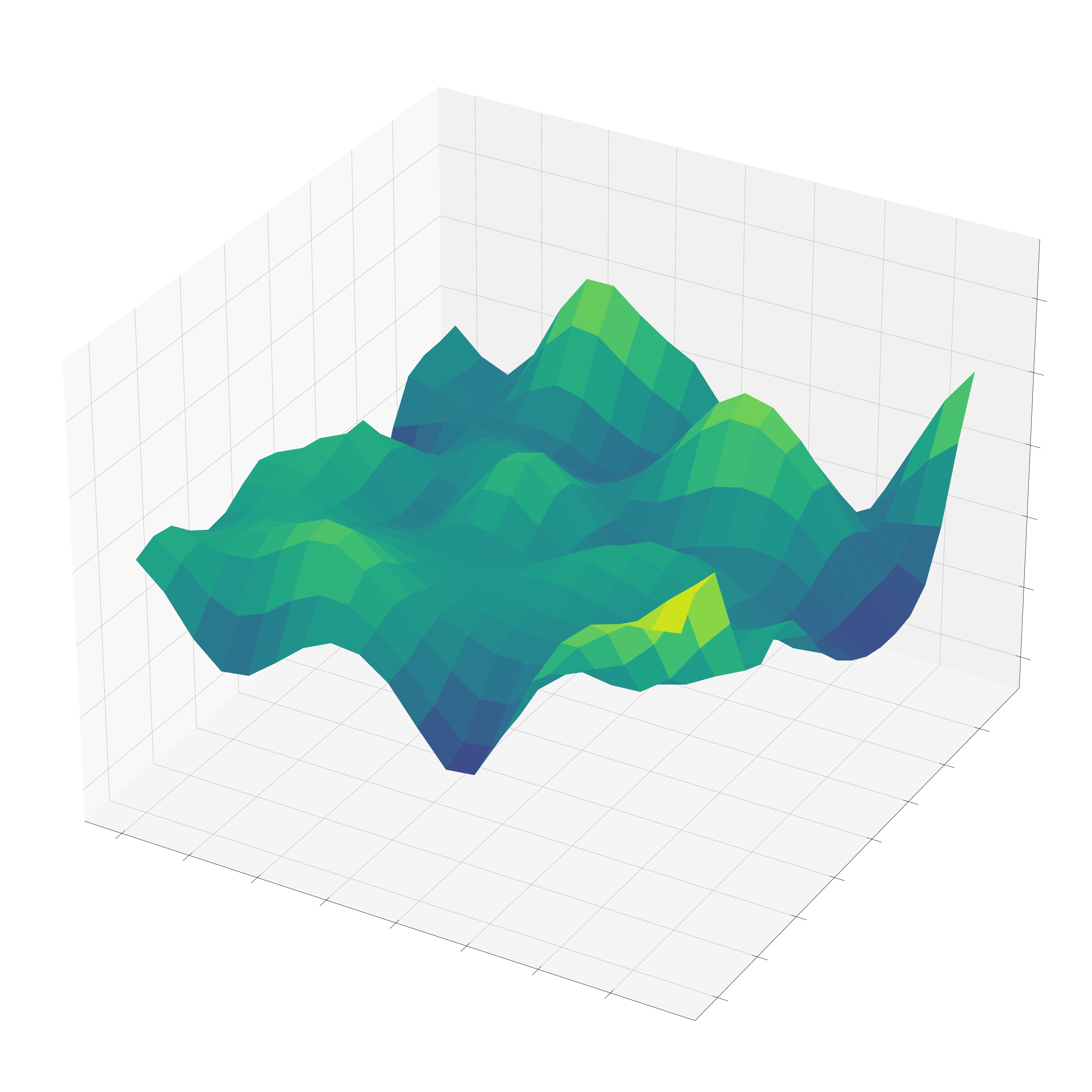}
      \label{subfig:land_dgsam_sketch}
    \end{subfigure} &
    \begin{subfigure}[b]{0.19\textwidth}
      \includegraphics[width=\textwidth]{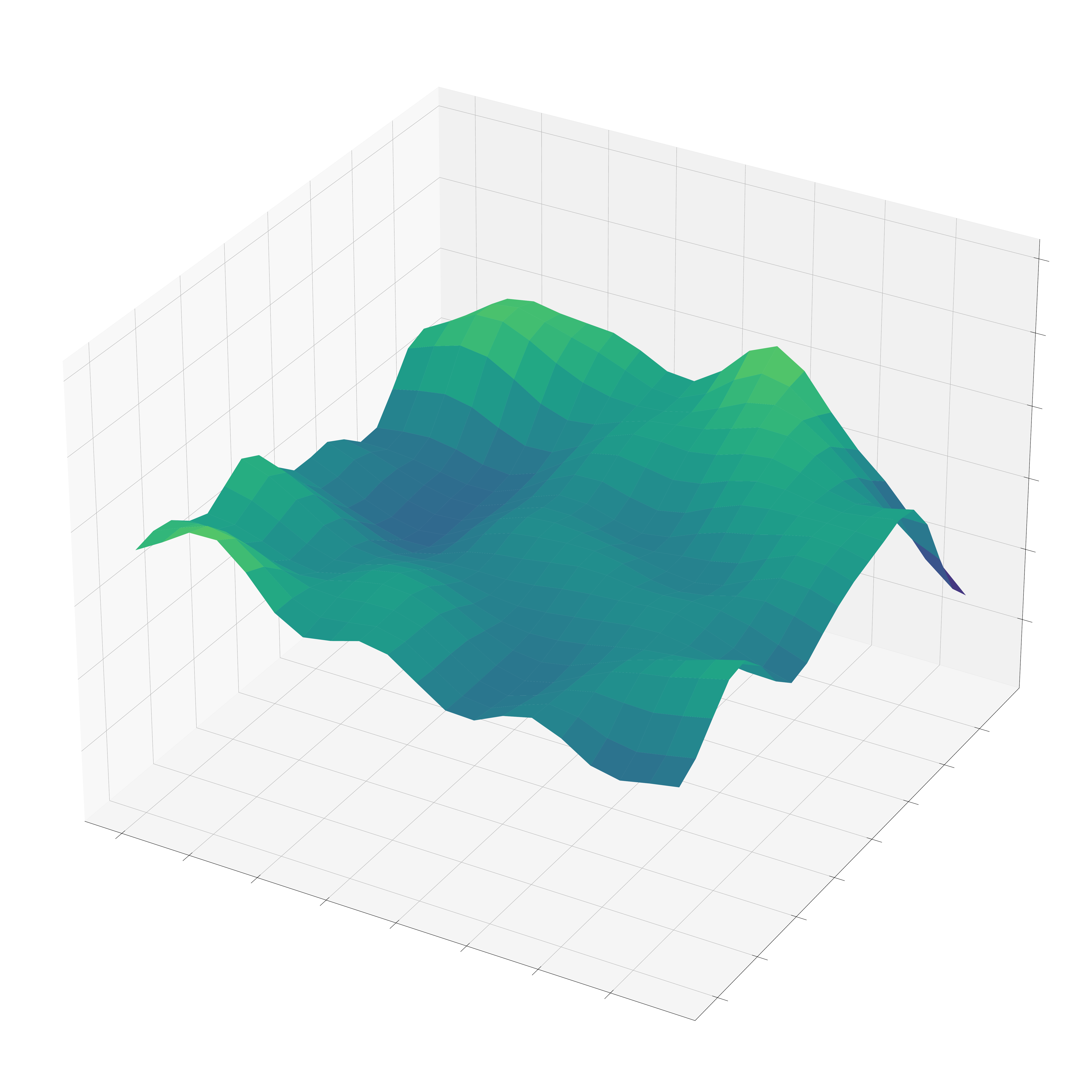}
      \label{subfig:land_dgsam_total}
    \end{subfigure} \\
  \end{tabular}
  \caption{Comparison of loss landscapes of converged minima using SAM and DGSAM across different domains on the PACS dataset. We set the grid with two random direction. DGSAM performs better than SAM in reducing individual sharpness in all three individual domains, and total sharpness.}
  \label{fig:landscape_combined}
\end{figure*}

Figure~\ref{fig:algorithm} illustrates how DGSAM sequentially applies domain-specific perturbations and aggregates gradients to update the model.

\begin{figure}[htb!]
    \centering
    \includegraphics[width=0.9\linewidth]{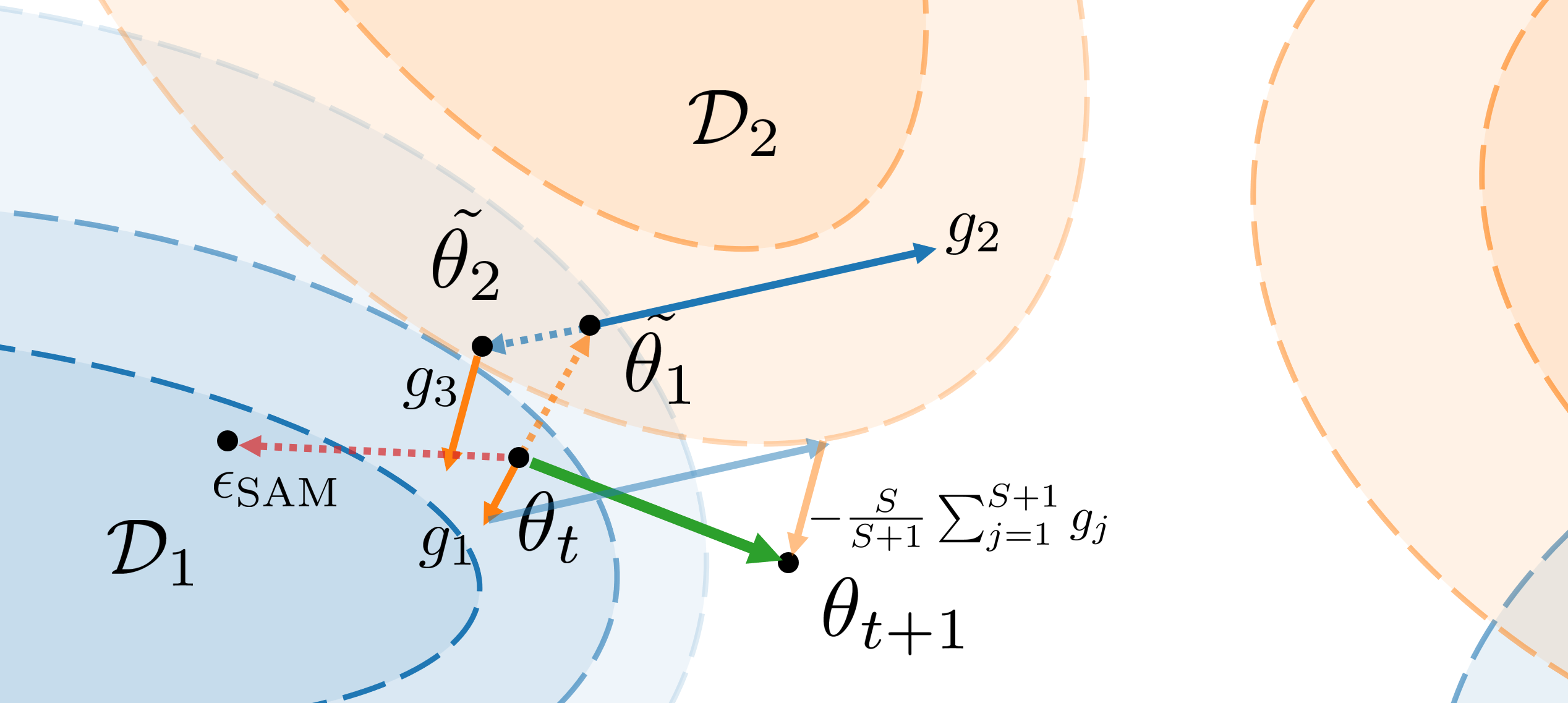}
    \caption{A visualization of DGSAM algorithm.}
    \label{fig:algorithm}
\end{figure}

\section{Proofs of Section~\ref{sec:motivation}}\label{sec:proof}
\subsection{Proof of Proposition~\ref{thm:cancel_out}}\label{app:proof}

\begin{proof}[Proof of Proposition~\ref{thm:cancel_out}]
Let $\theta$ be a strict local minimum such that $\nabla L_s (\theta)=0$ and $H(\theta)=\nabla^2 L_s(\theta) \succ 0$. Suppose $\rho$ is sufficiently small. Then, the second-order Taylor expansion for $\gL_{\text{s}}$ and $\gL_{i}$ gives:
\[\gL_{\text{s}}(\theta+\epsilon) = \gL_{\text{s}}(\theta) + \nabla \gL_{\text{s}}(\theta)^\top\epsilon+\frac{1}{2}\epsilon^\top H(\theta)\epsilon + o(\|\epsilon\|^2) \]
and
\[\gL_{i}(\theta+\epsilon) = \gL_{i}(\theta) + \nabla \gL_{i}(\theta)^\top\epsilon+\frac{1}{2}\epsilon^\top H_i(\theta)\epsilon + o(\|\epsilon\|^2),\; i=1,\dots,S \]
where $H$ and $H_i$ are the Hessian matrices for $\gL_{\text{s}}$ and $\gL_{i}$, respectively, evaluated at $\theta$. 

Then, using $\nabla \gL_{\text{s}}(\theta)=0$ and $H(\theta)=\frac{1}{S}\sum_{i=1}^S H_i(\theta)$, we have 
\[\gL_{\text{s}}(\theta+\epsilon) - \gL_{\text{s}}(\theta) = \frac{1}{2}\epsilon^\top \left(\frac{1}{S}\sum_{i=1}^S H_i(\theta)\right)\epsilon + o(\|\epsilon\|^2)\]
which yields the zeroth-order sharpness for $\gL_{\text{s}}$:
$$
\gS_{\text{global}}(\theta;\rho) = \max_{\|\epsilon\| \leq \rho }(\gL_{\text{s}}(\theta+\epsilon)-\gL_{\text{s}}(\theta)) = \frac{1}{2S}\rho^2\sigma_{max}\left(\sum_{i=1}^S H_i(\theta)\right) + o(\|\rho\|^2)
$$
where $\sigma_{max}(A)$ denotes the largest eigenvalue of the matrix $A$.


To show that the statement does not hold in general, it suffices to provide a counterexample. First, we consider the case where $\|\nabla \gL_{i}(\theta)\| = 0$ for all $i=1,2,\ldots, S$. Then, the zeroth-order sharpness of the $i$-th individual loss function is given by 
$$
\gS_i(\theta;\rho) = \frac{1}{2}\rho^2\sigma_{max}\left(H_i(\theta)\right) + o(\|\rho\|^2).
$$
This leads to the following expression of the average sharpness over all individual loss functions:
$$
\frac{1}{S}\sum_{i=1}^S \gS_i(\theta;\rho) = \frac{1}{2S}\rho^2\sum_{i=1}^S \sigma_{max}\left(H_i(\theta)\right) + o(\|\rho\|^2).
$$

Next, consider two different local minima $\theta_1$ and $\theta_2$. For sufficiently small $\rho$, we can write:
\begin{align}
\gS_{\text{global}}(\theta_1;\theta) &< \gS_{\text{global}}(\theta_2;\rho) \label{ineq:sp_total}\\
&\Leftrightarrow \nonumber \\ \sigma_{max}\left(\sum_{i=1}^S H_i(\theta_1)\right) &< \sigma_{max}\left(\sum_{i=1}^S H_i(\theta_2)\right). \label{ineq:sp_total1}
\end{align}

Similarly, for sufficiently small $\rho$, we have the following relationship between the average individual sharpnesses at $\theta_1$ and $\theta_2$:
\begin{align}
\frac{1}{S}\sum_{i=1}^S \gS_i(\theta;\rho)&< \frac{1}{S}\sum_{i=1}^S \gS_i(\theta;\rho) \label{ineq:sp_average}\\ &\Leftrightarrow \nonumber\\
\sum_{i=1}^S \sigma_{max}\left(H_i(\theta_1)\right) &< \sum_{i=1}^S \sigma_{max}\left(H_i(\theta_2)\right).\label{ineq:sp_average2}
\end{align}
Consequently, we conclude that \Eqref{ineq:sp_total} does not imply \Eqref{ineq:sp_average} since the largest eigenvalue of a sum of matrices, $\sigma_{max}\left(\sum_{i=1}^S H_i(\theta)\right)$, is not generally equal to the sum of the largest eigenvalues of the individual matrices, $\sum_{i=1}^S \sigma_{max}\left( H_i(\theta)\right)$. 

Secondly, let us consider the case where $\nabla \gL_{\text{s}}(\theta)=0$, but there exists at least two elements such that $\nabla \gL_{i}(\theta) \neq 0$. For simplicity, let $S=2$. Without loss of generality, assume $\nabla \gL_{1}(\theta)>0$ and $\nabla \gL_{2}(\theta) = -\nabla \gL_{1}(\theta)$. Then, the sharpness for $\gL_{1}(\theta)$ is given by 
$$
\gS_{1}(\theta;\rho) = \|\nabla \gL_{1}(\theta)\|\rho + o(\|\rho\|). 
$$

Now, consider two local minima $\theta_1$ and $\theta_2$ satisfying the following inequality:
$$
\gS_{\text{global}}(\theta_1;\rho) < \gS_{\text{global}}(\theta_2;\rho). 
$$
A counterexample can be constructed such that for some $G>0$ and $0<c<1$,
$$
\nabla \gL_{1}(\theta_1) = G = -\nabla \gL_{2}(\theta_1),
$$
and 
$$
\nabla \gL_{1}(\theta_2) = cG = -\nabla \gL_{2}(\theta_2).
$$
In this example, we find that \(\frac{1}{S} \sum_{i=1}^{S} \gS_i(\theta_1;\rho) > \frac{1}{S} \sum_{i=1}^{S} \gS_i(\theta_2;\rho),\).  However, such a choice of gradients does not affect the Hessian matrices, and thus the inequality for the sharpness of the total loss remains unchanged. Therefore, the sharpness for the total loss does not generally follow the same ordering as the average sharpness of the individual losses. 
\end{proof}

\subsection{Proof of Theorem~\ref{thm:worst-case}}\label{app:proof_worst_case}

We begin by imposing some standard conditions on the loss function. 

\begin{assumption}\label{ass:ell}
 For each $i$, let $\gD_i$ be the $i$-th source domain distribution and $\gL_{\gD_i}(\theta) = \E_{X\sim \gD_i}[\ell(\theta,X)]$ where $\ell$ is a loss function. Assume that $\ell(\theta, x)$ is uniformly bounded for all $\theta$ and $x$ and Lipschitz continuous in $\theta$. That is, there exist $M$ and $G$ such that 
\[
  |\ell(\theta, x)|\leq M,\quad |\ell(\theta, x)-\ell(\theta', x)|\leq G\|\theta-\theta'\|
  \quad\text{for all }\theta,\theta', x.
\]
Moreover, if \(\Div=W_1\) (the Wasserstein-1 distance), assume additionally that for each \(\theta\), the map \(x\mapsto\ell(\theta, x)\) is \(L_x\)–Lipschitz, i.e.
\[
  |\ell(\theta, x)-\ell(\theta, x')|\le L_x\,d(x,x')
  \quad\text{for all }\theta,\theta', x.  
\]
\end{assumption}

Under Assumption~\ref{ass:ell}, the following lemma states the relationship between distribution shifts and parameter perturbations. 

\begin{lemma}\label{lem:shift-perturb}
Let Assumption~\ref{ass:ell} hold, and let $\gD_i$ be the $i$th source distribution with
\[
  \gL_i(\theta)=\E_{x\sim\gD_i}[\ell(\theta;x)].
\]
Fix a divergence or distance $\Div$ and threshold $\delta>0$, and set
\[
  \gU_i^\delta=\{\,D:\Div(D\|\gD_i)\le\delta\}.
\]
Define the perturbation radius
\begin{align}
  \rho(\delta)=
  \begin{cases}
    \displaystyle\frac{M}{G}\,\sqrt{\frac{\delta}{2}}, 
      &\mbox{if } \Div=\KL,\\[1ex]
    \displaystyle\frac{M}{G}\,\delta, 
      &\mbox{if } \Div=\|\cdot\|_{TV},\\[1ex]
    \displaystyle\frac{L_x}{G}\,\delta,
      &\mbox{if } \Div=W_1 .
  \end{cases}\label{eq:rho-delta}
\end{align}
Then for all $\theta$ and any $\rho\ge\rho(\delta)$,
\[
  \sup_{D\in\gU_i^\delta}\gL_D(\theta)
  \;\le\;
  \max_{\|\eps\|\le\rho}\gL_i(\theta+\eps).
\]
\end{lemma}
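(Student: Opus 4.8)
The plan is to prove Lemma~\ref{lem:shift-perturb} in two stages. First, for a fixed source index $i$ and an arbitrary target $D\in\gU_i^\delta$, I would bound the excess risk $\gL_D(\theta)-\gL_i(\theta)$ by a quantity depending only on $\delta$ and the constants $M,G,L_x$ of Assumption~\ref{ass:ell}. Second, I would convert that distributional excess into the effect of a parameter perturbation of norm at most $\rho(\delta)$, using the $G$-Lipschitz continuity of $\theta\mapsto\gL_i(\theta)$ (inherited from the uniform Lipschitz continuity of $\ell(\cdot\,,x)$). Since $\max_{\|\eps\|\le\rho}\gL_i(\theta+\eps)$ is nondecreasing in $\rho$, it is enough to establish the bound at $\rho=\rho(\delta)$ and then take the supremum over $D\in\gU_i^\delta$.

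For the first stage I would treat the three divergences in parallel, always starting from $\gL_D(\theta)-\gL_i(\theta)=\E_{x\sim D}[\ell(\theta;x)]-\E_{x\sim\gD_i}[\ell(\theta;x)]$. When $\Div=\KL$, Pinsker's inequality gives $\|D-\gD_i\|_{TV}\le\sqrt{\delta/2}$, and since $|\ell|\le M$ the difference of expectations is at most $M\|D-\gD_i\|_{TV}\le M\sqrt{\delta/2}$. When $\Div=\|\cdot\|_{TV}$, the same boundedness argument gives directly at most $M\delta$. When $\Div=W_1$, the extra hypothesis that $x\mapsto\ell(\theta;x)$ is $L_x$-Lipschitz lets me invoke Kantorovich--Rubinstein duality, so the difference is at most $L_x\,W_1(D,\gD_i)\le L_x\delta$. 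In each case the resulting bound is exactly $G\,\rho(\delta)$ for the $\rho(\delta)$ defined in \eqref{eq:rho-delta}; this is precisely what that calibration is designed to achieve.

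For the second stage, $G$-Lipschitzness of $\gL_i$ means that displacing $\theta$ by a vector of norm $\rho(\delta)$ can change the loss by as much as $G\,\rho(\delta)$, so the loss increase incurred by replacing $\gD_i$ with $D$—bounded by $G\,\rho(\delta)$ in stage one—is matched by a perturbation lying inside the admissible ball. Chaining the two estimates gives $\gL_D(\theta)\le\gL_i(\theta)+G\,\rho(\delta)\le\max_{\|\eps\|\le\rho(\delta)}\gL_i(\theta+\eps)\le\max_{\|\eps\|\le\rho}\gL_i(\theta+\eps)$ for every $\rho\ge\rho(\delta)$. Taking the supremum over $D\in\gU_i^\delta$ on the left yields the lemma, which is exactly the term-by-term inequality that, after averaging over $i$, produces Theorem~\ref{thm:worst-case}.

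The crux is this conversion step: one must fix the constant linking the divergence radius $\delta$ to the perturbation radius $\rho$ so that the worst-case reweighting of the data never escapes the sharpness neighborhood, and apply the Lipschitz estimate in the direction that actually certifies the inequality rather than merely bounding both sides. The remaining ingredients—Pinsker's inequality, Kantorovich--Rubinstein duality, and the measurability and boundedness checks that keep every expectation and supremum well defined—are standard, so beyond the calibration the proof is a routine case analysis on $\Div$.
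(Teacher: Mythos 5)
Your first stage is exactly the paper's: Pinsker's inequality for $\KL$, the defining property of total variation, and Kantorovich--Rubinstein duality for $W_1$ each give $|\gL_D(\theta)-\gL_i(\theta)|\le G\,\rho(\delta)$ for every $D\in\gU_i^\delta$, and the calibration of $\rho(\delta)$ in \eqref{eq:rho-delta} is designed precisely so that these three bounds coincide. That part is correct and matches the paper's proof line by line.

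The gap is in your second stage. You assert $\gL_i(\theta)+G\,\rho(\delta)\le\max_{\|\eps\|\le\rho(\delta)}\gL_i(\theta+\eps)$ on the grounds that a displacement of norm $\rho(\delta)$ ``can change the loss by as much as $G\,\rho(\delta)$.'' Lipschitz continuity yields only the \emph{reverse} inequality, $\max_{\|\eps\|\le\rho}\gL_i(\theta+\eps)\le\gL_i(\theta)+G\rho$: it is an upper bound on how fast $\gL_i$ can vary and says nothing about whether the perturbed maximum actually attains $\gL_i(\theta)+G\rho$. If, say, $\ell(\theta,x)$ is constant in $\theta$ (any $G>0$ remains an admissible Lipschitz constant), the right-hand side of your claimed inequality equals $\gL_i(\theta)$, while the distributional supremum on the left of the lemma can strictly exceed it; certifying the lemma would require a \emph{lower} bound on the growth of $\gL_i$ under perturbation in some direction, which Assumption~\ref{ass:ell} does not supply. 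For what it is worth, the paper's own proof has the same structure --- it establishes $\sup_{D\in\gU_i^\delta}\gL_D(\theta)\le\gL_i(\theta)+G\rho$ and $\max_{\|\eps\|\le\rho}\gL_i(\theta+\eps)\le\gL_i(\theta)+G\rho$ and then ``combines'' two upper bounds by the same quantity, which likewise does not yield the stated conclusion --- so you have reproduced the paper's argument faithfully, gap included; but as written the chaining step is a non sequitur rather than a proof.
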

\begin{proof}
Fix \(\rho\ge\rho(\delta)\) where 
\[
\rho(\delta)=
\begin{cases}
\displaystyle \frac{M}{G}\sqrt{\frac{\delta}{2}}, & \Div=\KL,\\[1ex]
\displaystyle \frac{M}{G}\,\delta,               & \Div=\|\cdot\|_{TV},\\[1ex]
\displaystyle \frac{L_x}{G}\,\delta,             & \Div=W_1.
\end{cases}
\]
We will show in each case that for all \(\gD\) with \(\Div(\gD\|\gD_i)\le\delta\),
\[
\bigl|\gL_D(\theta)-\gL_i(\theta)\bigr|\le G\,\rho(\delta).
\]

\medskip\noindent\textbf{Case (i):} \(\Div=\KL\) and \(\rho(\delta)=\tfrac{M}{G}\sqrt{\delta/2}\).  
Pinsker’s inequality gives
\[
\|\gD-\gD_i\|_{TV}\le\sqrt{\tfrac12\,\KL(\gD\|\gD_i)}\le\sqrt{\tfrac{\delta}{2}},
\]
which leads to 
\[
\bigl|\gL_\gD(\theta)-\gL_i(\theta)\bigr|
\le M\,\|\gD-\gD_i\|_{TV}
\le M\sqrt{\tfrac{\delta}{2}}
=G\,\rho(\delta).
\]

\medskip\noindent\textbf{Case (ii): \(\Div=\|\cdot\|_{TV}\)} and \(\rho(\delta)=\tfrac{M}{G}\,\delta\).
The definition of total variation directly yields
\[
\bigl|\gL_\gD(\theta)-\gL_i(\theta)\bigr|
\le M\|\gD-\gD_i\|_{TV}
\le M\delta
=G\rho(\delta).
\]

\medskip\noindent\textbf{Case (iii): \(\Div=W_1\)} and \(\rho(\delta)=\tfrac{L_x}{G}\,\delta\).
Assume in addition that \(x\mapsto\ell(\theta;x)\) is \(L_x\)-Lipschitz.  Then by the Kantorovich–Rubinstein duality, we have 
\[
\bigl|\gL_\gD(\theta)-\gL_i(\theta)\bigr|
\leq L_x\,W_1(\gD,\gD_i)
\leq L_x\,\delta
=G\rho(\delta).
\]

In each case, therefore, we obtain for all \(\gD\in\gU_i^\delta\) 
\begin{equation}
\gL_\gD(\theta)\le\gL_i(\theta)+G\rho    \label{lem:eq1}
\end{equation}

On the other hand, for any perturbation $\epsilon$ with $\|\epsilon\|\leq \rho$, using the Lipschitz continuity of $\ell(\cdot, x)$, we obtain
\begin{align*}
\gL_{i}(\theta+\eps)-\gL_{i}(\theta)&=\E_{x\sim\gD_i}\bigl[\ell(\theta+\eps,x)-\ell(\theta,x)\bigr]\leq G\|\eps\|
\end{align*}
which yields 
\begin{align}
\max_{\|\eps\|\le\rho}\gL_{i}(\theta+\eps)
\leq \gL_{i}(\theta)+G\rho. \label{lem:eq2}
\end{align}

Combining \eqref{lem:eq1} and \eqref{lem:eq2} and then taking the supremum over $\gD\in \gU_i^\delta$ gives 
\[
\sup_{D\in\gU_i^\delta}\gL_D(\theta) \leq \max_{\|\eps\|\le\rho}\gL_{\gD_i}(\theta+\eps).
\]

\end{proof}




Now, we are ready to prove Theorem~\ref{thm:worst-case}. 

\begin{proof}[\textbf{Proof of Theorem~\ref{thm:worst-case}}]
Recall that 
\[
  \gE(\theta;\delta)=\frac{1}{S}\sum_{i=1}^S \sup_{\gD\in\gU_i^\delta}\gL_\gD(\theta),
\]
and
\[
  \gL_s(\theta)=\frac{1}{S}\sum_{i=1}^S \gL_i(\theta).  
\]
By Lemma~\ref{lem:shift-perturb}, for each $i$ and $\rho \geq \rho(\delta)$, we have 
\[
  \sup_{D\in\gU_i^\delta}\gL_D(\theta) \leq \max_{\|\eps\|\le\rho}\gL_i(\theta+\eps)=  \gL_i(\theta) + S_i(\theta;\rho).
\]
where $\gS_i(\theta;\rho) = \max_{\|\epsilon\|\leq \rho} \gL_i(\theta+\epsilon)-\gL_i(\theta)$ is the individual sharpness for domain $i$. Averaging over $i=1,\ldots, S$ directly gives 
\begin{align*}
  \gE(\theta;\delta)
  &=\frac1S\sum_{i=1}^S \sup_{D\in\gU_i^\delta}\gL_D(\theta)\\
  &\leq \frac1S\sum_{i=1}^S\bigl[\gL_i(\theta)+S_i(\theta;\rho)\bigr] \\
  &=L_s(\theta)+\frac1S\sum_{i=1}^S S_i(\theta;\rho).
\end{align*}

It remains to show that no analogous bound in terms of the global sharpness $\gS_{\text{global}}(\theta;\rho)$ can hold uniformly. To this end, it is enough to find a counterexample. Let $S=2$ and $\Div=\KL$. Fix the source distributions $\gD_1 =\gD_2=\mathrm{Uni}\{-1,+1\}$ and define $\ell(\theta, x) = \theta x, \theta \in [0,1]$.  Then, one can compute 
\[
  \gL_1(\theta)=\gL_2(\theta)
  =\E_{X\sim\gD_i}[\theta X]
  =0,
  \quad
  L_s(\theta)=\tfrac{\gL_1(\theta)+\gL_2(\theta)}2=0.
\]

If we take $\delta = \ln 2$, the adversarial set $\gU_i^{\delta}$ contains both point-masses $\delta_{+1}$ and $\delta_{-1}$. Hence, we have 
\[
  \sup_{D\in\gU_i^\delta}\gL_D(\theta)
  =\max_{x\in\{+1,-1\}}\theta\,x
  =\theta,
\]
and therefore $\gE(\theta;\delta) =\theta$. On the other hand, the global sharpness is trivially zero since $\gL_s(\theta)=0$. Thus for any $\theta$, we find 
\[
  \gE(\theta;\delta)=\theta
  >0 = L_s(\theta)+S_{\mathrm{global}}(\theta;\rho),
\]
showing that no uniform bound of the form
\(\gE(\theta;\delta)\le \gL_s(\theta)+\gS_{\mathrm{global}}(\theta;\rho)\)
can hold.

\end{proof}

\section{Comparison of two terms in Eq~\ref{Eq:approximated_perturb}}\label{app:eigens}
Figure~\ref{fig:norm_comparison} shows that the second term tends to be slightly smaller than the first term, but the two are comparable in magnitude. This indicates that both terms contribute to the gradual perturbation. 
\begin{figure}[htb!]
    \centering
    \includegraphics[width=0.7\linewidth]{figures/normratio.pdf}
    \caption{Comparison of magnitude of two terms in Eq~\ref{Eq:approximated_perturb} on the PACS}
    \label{fig:norm_comparison}
\end{figure}

\section{Convergence Analysis}\label{app:convergence}
Our convergence analysis builds upon the techniques developed in \citep{gower2019sgd_convergence, khaled2020better_sgd, oikonomou2025sam_convergence}.

\subsection{Preliminaries}

\begin{definition}[Domain-wise Subsampling and Stochastic Gradient, \cite{gower2019sgd_convergence, khaled2020better_sgd}]\label{def:subsampling}
Let \( \mathcal{D}_1, \ldots, \mathcal{D}_S \) be \( S \) source domains, and \(i\)-th data point is associated with individual loss functions \( \gL^i(\theta) \), where \( \theta \in \mathbb{R}^p \) denotes the model parameters.  
We define the total loss function as:
\[
\gL_{\text{s}}(\theta) := \frac{1}{n} \sum_{i=1}^n \gL^i(\theta),
\]
where \( n \) is the total number of training samples aggregated from all domains. 

We consider a two-level sampling process:  
First, a domain index \( r \in \{1, \ldots, S\} \) is selected uniformly at random. Then, a minibatch \( B_r \subset \mathcal{D}_r \) of fixed size \( \tau \) is sampled uniformly from within the selected domain. The domain-wise sampling vector \( v^\gQ = (v^\gQ_1, \dots, v^\gQ_n) \) is drawn from a distribution \( \gQ \) defined by this two-level process. For each sample \( i \), the sampling weight is given by:
\[
v^{\gQ}_i := \frac{S \cdot 1_{i \in B_r}}{\tau},
\]
where \( 1_{i \in B_r} \) is the indicator function that equals 1 if sample \( i \) is included in the minibatch and 0 otherwise. The resulting domain-wise stochastic gradient estimator is:
\[
g^{\gQ}(\theta) := \sum_i v^\gQ_i \nabla \gL^{(i)}(\theta).
\]
where $\gL^{(i)}$ is the loss evaluated on the $i$-th sample. According to the general arbitrary sampling paradigm~\cite{gower2019sgd_convergence}, since \( v^\gQ \sim \gQ \) satisfies \( \mathbb{E}[v^\gQ_i] = 1 \) for all \( i \), the estimator \( g^{\gQ}(\theta) \) is unbiased:
\[
\mathbb{E}_\gQ[g^{\gQ}(\theta)] = \nabla \gL_{\text{s}}(\theta).
\]
Furthermore, the second moment \( \mathbb{E}[\|v^\gQ_i\|^2] \) is finite under this scheme.
\end{definition}

\begin{assumption}\label{ass:smoothness}
Let $\gB$ be a minibatch sampled from the domain-wise subsampling distribution the domain-wise subsampling distribution $\gQ$ defined in Definition~\ref{def:subsampling}, and let $\gL_\gB$ denote the loss evaluated on $\gB$. We assume that $\gL_\gB$ is $L$-smooth. That is, there exists a constant $L > 0$ such that for all $\theta, \theta' $ and any $\gB$, 
\begin{equation}
    \lVert \nabla \gL_\gB(\theta) - \nabla \gL_\gB(\theta') \rVert \leq L \lVert \theta - \theta' \rVert. \label{Eq:L-smooth_bound_1}
\end{equation}
\end{assumption}

\begin{definition}[Expected Residual Condition]\label{ass:ER}
    Let \(\theta^*=\argmin_{\theta}\gL_{\text{s}}(\theta)\). We say the Expected Residual condition is satisfied if there exist nonnegative constants \(M_1, M_2, M_3\geq0\) such that, for any point \(\theta\), the following inequality holds for an unbiased estimator (stochastic gradient) \(g(\theta)\) of the true gradient \(\nabla \gL_{\text{s}}(\theta)\):
    \begin{align*}
        \mathbb{E}\lVert g(\theta)\rVert^2 \leq 2M_1[\gL_{\text{s}}(\theta)-\gL_{\text{s}}(\theta^*)]+M_2\lVert\nabla \gL_{\text{s}}(\theta)\rVert^2+M_3.
    \end{align*}
\end{definition}

\begin{corollary}\label{cor:ER}
    Let Assumption~\ref{ass:smoothness} holds and let the domain-wise stochastic gradient by $g^\gQ(\theta)$ which is an unbiased estimator of $\gL_s(\theta)$ for all \(\theta\) with \( \mathbb{E}[\|v^\gQ_i\|^2] \leq\infty\). Then, it holds that 
    \begin{align*}
        \mathbb{E}_{\gQ}\lVert g^\gQ(\theta)\rVert^2 \leq 2M_1[\gL_{\text{s}}(\theta)-\gL_{\text{s}}(\theta^*)]+M_2\lVert\nabla \gL_{\text{s}}(\theta)\rVert^2+M_3.
    \end{align*}
\end{corollary}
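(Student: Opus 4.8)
The plan is to view Corollary~\ref{cor:ER} as the standard implication ``$L$-smoothness of each sampled loss, together with lower boundedness, yields the Expected Residual condition of Definition~\ref{ass:ER}'', while tracking how the domain-wise two-level sampling of Definition~\ref{def:subsampling} enters the constants. First I would fix notation: by construction the estimator $g^\gQ(\theta)=\sum_i v^\gQ_i\nabla\gL^{(i)}(\theta)$ is the gradient $\nabla\gL_\gB(\theta)$ of the reweighted minibatch loss $\gL_\gB:=\sum_i v^\gQ_i\gL^{(i)}$ attached to the sample $\gB\sim\gQ$, and the unbiasedness $\E[v^\gQ_i]=1$ gives simultaneously $\E_\gQ[\nabla\gL_\gB(\theta)]=\nabla\gL_{\text{s}}(\theta)$ and $\E_\gQ[\gL_\gB(\theta)]=\gL_{\text{s}}(\theta)$ for every $\theta$. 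The hypothesis $\E[\|v^\gQ_i\|^2]<\infty$ is what makes $\gL_\gB(\theta)$ and $g^\gQ(\theta)$ square-integrable, so every expectation below is finite; this part is essentially bookkeeping.

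Second, the core estimate. For each realization $\gB$, Assumption~\ref{ass:smoothness} says $\gL_\gB$ is $L$-smooth, and the uniform bound on $\ell$ from Assumption~\ref{ass:ell} (equivalently, any assumption that each minibatch loss is bounded below) gives $\gL_\gB^\star:=\inf_\theta\gL_\gB(\theta)>-\infty$ uniformly in $\gB$. Applying the $L$-smoothness descent inequality for $\gL_\gB$ at the point $\theta-\tfrac{1}{L}\nabla\gL_\gB(\theta)$ and using $\gL_\gB\!\bigl(\theta-\tfrac{1}{L}\nabla\gL_\gB(\theta)\bigr)\ge\gL_\gB^\star$ yields the convexity-free bound $\no{\nabla\gL_\gB(\theta)}^2\le 2L\bigl(\gL_\gB(\theta)-\gL_\gB^\star\bigr)$. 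Taking $\E_\gQ$ and using $\E_\gQ[\gL_\gB(\theta)]=\gL_{\text{s}}(\theta)$ gives $\E_\gQ\no{g^\gQ(\theta)}^2\le 2L\bigl(\gL_{\text{s}}(\theta)-\E_\gQ[\gL_\gB^\star]\bigr)$. Finally I would split off the optimum by writing $\E_\gQ[\gL_\gB^\star]=\gL_{\text{s}}(\theta^*)-\Delta_*$ with $\Delta_*:=\gL_{\text{s}}(\theta^*)-\E_\gQ[\gL_\gB^\star]$; since $\gL_\gB^\star\le\gL_\gB(\theta^*)$ pointwise and $\E_\gQ[\gL_\gB(\theta^*)]=\gL_{\text{s}}(\theta^*)$ we get $\Delta_*\ge0$, and $\Delta_*<\infty$ because $\E_\gQ[\gL_\gB^\star]$ is finite while $\gL_{\text{s}}(\theta^*)$ is finite. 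Substituting produces
\[
  \E_\gQ\no{g^\gQ(\theta)}^2 \;\le\; 2L\bigl(\gL_{\text{s}}(\theta)-\gL_{\text{s}}(\theta^*)\bigr) + 2L\,\Delta_*,
\]
which is exactly Definition~\ref{ass:ER} with $M_1=L$, $M_2=0$, and $M_3=2L\,\Delta_*$.

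I expect the only mildly delicate points to be the following. First, one must make sure the object whose squared norm is being controlled is genuinely the gradient of the $L$-smooth minibatch loss rather than an unnormalized reweighting, i.e. pin down the factor $S/\tau$ in $v^\gQ_i$ so that $\E_\gQ$ collapses both the loss and its gradient onto $\gL_{\text{s}}$ and $\nabla\gL_{\text{s}}$; this is where care is needed with the paper's normalization $\gL_{\text{s}}=\tfrac1n\sum_i\gL^{(i)}$. Second, lower boundedness of every minibatch loss, hence finiteness of $\Delta_*$, is genuinely used but is not literally part of Assumption~\ref{ass:smoothness}, so I would make its source explicit (the uniform bound on $\ell$). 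An alternative route that sidesteps the descent inequality is to use $\no{a}^2\le 2\no{a-b}^2+2\no{b}^2$ with $a=g^\gQ(\theta)$ and $b=g^\gQ(\theta^*)$, absorbing the constant $\E_\gQ\no{g^\gQ(\theta^*)}^2$ into $M_3$; but then bounding $\E_\gQ\no{g^\gQ(\theta)-g^\gQ(\theta^*)}^2$ by the suboptimality gap in the nonconvex regime reduces again to the same smoothness-plus-lower-bound argument, so the direct route above is the cleaner one.
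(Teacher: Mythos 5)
Your proof is correct and follows essentially the same route as the paper: the paper's proof is a one-line citation of Proposition~2 of Khaled and Richt\'arik (2020), and your argument (the descent lemma applied at $\theta-\tfrac{1}{L}\nabla\gL_\gB(\theta)$ plus lower-boundedness of each minibatch loss, yielding the Expected Residual condition with $M_1=L$, $M_2=0$, $M_3=2L\,\Delta_*$) is precisely the proof of that cited result. You also usefully make explicit two points the paper leaves implicit, namely that lower-boundedness of the minibatch losses must be imported from the uniform bound in Assumption~\ref{ass:ell} rather than from Assumption~\ref{ass:smoothness} alone, and that the normalization of $v^\gQ_i$ needs to be checked for the claimed unbiasedness to hold.
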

\begin{proof}
In Proposition 2 of \citep{khaled2020better_sgd}, it is proved that \(L\)-smoothness and unbiased stochastic gradient with \(\mathbb{E}_{\mathcal{D}}[v_i^2]<\infty\) imply Expected Residual condition (\ref{ass:ER}). 
\end{proof}

We collect a few basic inequalities that are frequently used throughout the proofs: For any \( a, b \in \mathbb{R}^d \) and any \( \beta > 0 \), we have:
\begin{align}
    |\langle a, b \rangle| \leq \frac{1}{2\beta} \lVert a \rVert^2 + \frac{\beta}{2} \lVert b \rVert^2,
    \label{Eq:young_ineq_1}
\end{align}
\begin{align}
    \lVert a + b \rVert^2 \leq (1 + \beta^{-1}) \lVert a \rVert^2 + (1 + \beta) \lVert b \rVert^2,
    \label{Eq:young_ineq_2}
\end{align}
\begin{align}
    \lVert a + b \rVert^2 \leq 2 \lVert a \rVert^2 + 2 \lVert b \rVert^2,
    \label{Eq:young_ineq_3}
\end{align}
\begin{align}
    \left\lVert \sum_{i=1}^n x_i \right\rVert^2 \leq n \sum_{i=1}^n \lVert x_i \rVert^2.
    \label{Eq:young_ineq_4}
\end{align}

\subsection{Lemmas}

We use a uniformly random permutation \( \{l_1, \ldots, l_{S}\} \) over the domain indices. \(B_{l_j}\) means minibatch from j-th chosen domain and the choice of order is initialized at every step. Thus \(B_{l_j}\) is the domain-wise subsampling with definition~\ref{def:subsampling}. For notational simplicity, we will write \(g_j^t = \nabla\gL_{ B_{l_j}}\left(\theta_t+\sum\limits_{k=1}^{j-1}\rho\frac{g_k^t}{\lVert g_k^t \rVert}\right)\). 

\begin{lemma}\label{lem:1}
    Let Assumption~\ref{ass:smoothness} hold. Then the following inequality holds:
    \[
    \EgQ\lV g_j^t\rV^2 \leq 2S^2L^2\rho^2+2\EgQ\lV g^{\gQ}(\theta_t)\rV^2,
    \]
    where \(S\) is the number of domains.
\end{lemma}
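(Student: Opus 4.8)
The plan is to split $g_j^t$ into a curvature-induced drift term and a stochastic-gradient term evaluated at the base point $\theta_t$, bound the former using $L$-smoothness together with the explicit size of the ascent perturbations, and identify the latter with a domain-wise stochastic gradient in the sense of Definition~\ref{def:subsampling}.

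First I would make the perturbation bookkeeping explicit. By construction $g_j^t=\nabla\gL_{B_{l_j}}(\widetilde\theta_{j-1})$ with $\widetilde\theta_{j-1}=\theta_t+\sum_{k=1}^{j-1}\rho\,g_k^t/\lV g_k^t\rV$ and $\widetilde\theta_0=\theta_t$. Since each summand has norm exactly $\rho$, the triangle inequality yields $\lV\widetilde\theta_{j-1}-\theta_t\rV\le(j-1)\rho\le S\rho$ for every admissible index $j$ (the bound being tight at $j=S+1$). If some intermediate gradient vanishes the corresponding ascent step is simply dropped, which only strengthens this estimate.

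Next I would decompose $g_j^t=\bigl(\nabla\gL_{B_{l_j}}(\widetilde\theta_{j-1})-\nabla\gL_{B_{l_j}}(\theta_t)\bigr)+\nabla\gL_{B_{l_j}}(\theta_t)$ and apply \eqref{Eq:young_ineq_3} to get $\lV g_j^t\rV^2\le 2\lV\nabla\gL_{B_{l_j}}(\widetilde\theta_{j-1})-\nabla\gL_{B_{l_j}}(\theta_t)\rV^2+2\lV\nabla\gL_{B_{l_j}}(\theta_t)\rV^2$. By Assumption~\ref{ass:smoothness} the first summand is at most $2L^2\lV\widetilde\theta_{j-1}-\theta_t\rV^2\le 2L^2S^2\rho^2$; crucially this bound is deterministic (it holds pointwise in the sampling randomness), so it passes through $\EgQ$ unchanged and cleanly decouples from the second summand.

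Finally I would treat the second summand by noting that, since $l=(l_1,\dots,l_S)$ is a uniform random permutation and $B_{l_j}$ is a fresh minibatch drawn within the selected domain, for each fixed step $j$ the pair $(l_j,B_{l_j})$ has exactly the law of the two-level sample in Definition~\ref{def:subsampling}; hence $\nabla\gL_{B_{l_j}}(\theta_t)$ is a domain-wise stochastic gradient at $\theta_t$ and $\EgQ\lV\nabla\gL_{B_{l_j}}(\theta_t)\rV^2=\EgQ\lV g^{\gQ}(\theta_t)\rV^2$. Substituting into the previous display gives $\EgQ\lV g_j^t\rV^2\le 2S^2L^2\rho^2+2\EgQ\lV g^{\gQ}(\theta_t)\rV^2$, as claimed. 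The one step that needs care is this last identification — verifying that the per-step marginal of the permuted domain sampling really coincides with the distribution $\gQ$ — together with the observation that the drift bound is genuinely deterministic; the remaining ingredients are just the triangle inequality and \eqref{Eq:young_ineq_3}.
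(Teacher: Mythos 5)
Your proposal is correct and follows essentially the same route as the paper's proof: the same decomposition of $g_j^t$ into $\bigl(\nabla\gL_{B_{l_j}}(\widetilde\theta_{j-1})-\nabla\gL_{B_{l_j}}(\theta_t)\bigr)+\nabla\gL_{B_{l_j}}(\theta_t)$, the same application of \eqref{Eq:young_ineq_3}, the same $L$-smoothness bound on the drift, and the same identification of $\EgQ\lV\nabla\gL_{B_{l_j}}(\theta_t)\rV^2$ with $\EgQ\lV g^{\gQ}(\theta_t)\rV^2$. The only cosmetic difference is that you bound the accumulated perturbation by the triangle inequality directly, while the paper routes it through \eqref{Eq:young_ineq_4}; both yield the same $S^2\rho^2$ factor.
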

\begin{proof}
It follows that 
\begin{align*} 
\EgQ\lV g_j^t\rV^2 &= \EgQ\left\lV \nabla\gL_{B_{l_j}}\left(\theta_t+\sum\limits_{k=1}^{j-1}\rho\frac{g_k^t}{\lVert g_k^t \rVert}\right)\right\rV^2 \\
&= \EgQ\left\lV \nabla\gL_{B_{l_j}}\left(\theta_t+\sum\limits_{k=1}^{j-1}\rho\frac{g_k^t}{\lVert g_k^t \rVert}\right)-\nabla\gL_{B_{l_j}}(\theta_t)+\nabla\gL_{B_{l_j}}(\theta_t)\right\rV^2 \\
&\overset{(\ref{Eq:young_ineq_3})}{\leq} 2\EgQ\left\lV \nabla\gL_{B_{l_j}}\left(\theta_t+\sum\limits_{k=1}^{j-1}\rho\frac{g_k^t}{\lVert g_k^t \rVert}\right)-\nabla\gL_{B_{l_j}}(\theta_t)\right\rV^2 + 2\EgQ\left\lV \nabla\gL_{B_{l_j}}(\theta_t)\right\rV^2\\
&\overset{(\ref{Eq:L-smooth_bound_1})}{\leq} 2L^2\rho^2 \EgQ\left\lV \sum\limits_{k=1}^{j-1}\frac{g_k^t}{\lVert g_k^t \rVert} \right\rV^2 + 2\EgQ\lV g^{\gQ}(\theta_t)\rV^2 \\
&\overset{(\ref{Eq:young_ineq_4})}{\leq} 2L^2\rho^2 (j-1)\sum\limits_{k=1}^{j-1}\EgQ\left\lV \frac{g_k^t}{\lVert g_k^t \rVert} \right\rV^2 + 2\EgQ\lV g^{\gQ}(\theta_t)\rV^2 \\
&\leq 2S^2L^2\rho^2+2\EgQ\lV g^{\gQ}(\theta_t)\rV^2.
\end{align*}
\end{proof}

\begin{lemma}\label{lem:2}
    Let Assumption~\ref{ass:smoothness} hold. Then the following inequality holds:
    \[
    \EgQ\ip{g_j^t}{\nabla \gL_{\text{s}}(\theta_t)} \geq -SL\rho + (1-\frac{SL\rho}{4})\no{\nabla \gL_{\text{s}}(\theta_t)}^2,
    \]
    where \(S\) is the number of domains.
\end{lemma}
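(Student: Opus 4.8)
The plan is to bound each inner product $\EgQ\ip{g_j^t}{\nabla\gL_{\text{s}}(\theta_t)}$ by comparing $g_j^t$ to a gradient that is both unbiased for $\nabla\gL_{\text{s}}(\theta_t)$ and evaluated at the unperturbed point $\theta_t$. Concretely, I would write
\[
g_j^t = \nabla\gL_{B_{l_j}}(\theta_t) + \Bigl(g_j^t - \nabla\gL_{B_{l_j}}(\theta_t)\Bigr),
\]
so that $\EgQ\ip{g_j^t}{\nabla\gL_{\text{s}}(\theta_t)} = \EgQ\ip{\nabla\gL_{B_{l_j}}(\theta_t)}{\nabla\gL_{\text{s}}(\theta_t)} + \EgQ\ip{g_j^t - \nabla\gL_{B_{l_j}}(\theta_t)}{\nabla\gL_{\text{s}}(\theta_t)}$. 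Since $l_j$ is the $j$-th entry of a uniformly random permutation, its marginal is uniform over the $S$ domains, so $B_{l_j}$ is a domain-wise subsample in the sense of Definition~\ref{def:subsampling}; hence $\EgQ[\nabla\gL_{B_{l_j}}(\theta_t)\mid\theta_t] = \nabla\gL_{\text{s}}(\theta_t)$ and the first term equals exactly $\no{\nabla\gL_{\text{s}}(\theta_t)}^2$. (A subtlety: $g_j^t$ for $j\ge 2$ depends on earlier batches $B_{l_1},\dots,B_{l_{j-1}}$, so one must be slightly careful that conditioning does not destroy this unbiasedness; taking the expectation over $l_j$ last, given the past, still gives a uniform marginal, so this is fine.)

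For the error term, I would use Cauchy–Schwarz together with Young's inequality \eqref{Eq:young_ineq_1}: for any $\beta>0$,
\[
\bigl|\EgQ\ip{g_j^t - \nabla\gL_{B_{l_j}}(\theta_t)}{\nabla\gL_{\text{s}}(\theta_t)}\bigr|
\le \frac{1}{2\beta}\,\EgQ\no{g_j^t-\nabla\gL_{B_{l_j}}(\theta_t)}^2 + \frac{\beta}{2}\no{\nabla\gL_{\text{s}}(\theta_t)}^2,
\]
and then bound $\EgQ\no{g_j^t-\nabla\gL_{B_{l_j}}(\theta_t)}^2$ by $L$-smoothness (Assumption~\ref{ass:smoothness}) exactly as in the proof of Lemma~\ref{lem:1}: since $g_j^t-\nabla\gL_{B_{l_j}}(\theta_t) = \nabla\gL_{B_{l_j}}\bigl(\theta_t+\sum_{k<j}\rho\frac{g_k^t}{\|g_k^t\|}\bigr)-\nabla\gL_{B_{l_j}}(\theta_t)$, its norm is at most $L\rho\bigl\|\sum_{k<j}\frac{g_k^t}{\|g_k^t\|}\bigr\|\le L\rho(j-1)\le SL\rho$, giving a deterministic bound $\no{g_j^t-\nabla\gL_{B_{l_j}}(\theta_t)}^2 \le S^2L^2\rho^2$. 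Choosing $\beta = SL\rho/2$ then yields $\frac{1}{2\beta}S^2L^2\rho^2 = SL\rho$ and $\frac{\beta}{2} = SL\rho/4$, so
\[
\EgQ\ip{g_j^t}{\nabla\gL_{\text{s}}(\theta_t)} \ge \no{\nabla\gL_{\text{s}}(\theta_t)}^2 - SL\rho - \frac{SL\rho}{4}\no{\nabla\gL_{\text{s}}(\theta_t)}^2,
\]
which is precisely the claimed inequality after regrouping the $\no{\nabla\gL_{\text{s}}(\theta_t)}^2$ terms.

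The main obstacle I anticipate is the probabilistic bookkeeping around the unbiasedness of $\nabla\gL_{B_{l_j}}(\theta_t)$: one has to argue that, even though $g_j^t$ is built from a chain of dependent perturbations, the \emph{batch} $B_{l_j}$ at step $j$ is still drawn from the domain-wise subsampling distribution independently of the previously consumed batches (fresh mini-batch, fresh permutation position), so that its conditional expectation given the trajectory is $\nabla\gL_{\text{s}}(\theta_t)$. Once that is pinned down, the rest is a routine Young/Cauchy–Schwarz + $L$-smoothness computation mirroring Lemma~\ref{lem:1}. A secondary point worth stating explicitly is that the bound on $\|g_j^t-\nabla\gL_{B_{l_j}}(\theta_t)\|$ is deterministic (it does not require any control on $\EgQ\|g^\gQ(\theta_t)\|^2$), which is what lets the final bound depend only on $\rho$, $L$, $S$ and $\no{\nabla\gL_{\text{s}}(\theta_t)}$ rather than on second-moment constants.
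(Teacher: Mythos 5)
Your proposal is correct and follows essentially the same route as the paper's proof: the same decomposition $g_j^t = \nabla\gL_{B_{l_j}}(\theta_t) + \bigl(g_j^t - \nabla\gL_{B_{l_j}}(\theta_t)\bigr)$, the same use of unbiasedness for the first term, and the same Young's-inequality plus $L$-smoothness bound with $\beta = SL\rho/2$ for the error term. The two points you flag as subtleties (conditional unbiasedness of $\nabla\gL_{B_{l_j}}(\theta_t)$ and the deterministic nature of the perturbation bound) are indeed used implicitly in the paper's argument.
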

\begin{proof}
\begin{align*} 
\EgQ\ip{g_j^t}{\nabla \gL_{\text{s}}(\theta_t)} &= \EgQ\ip{ \nabla\gL_{B_{l_j}}\left(\theta_t+\sum\limits_{k=1}^{j-1}\rho\frac{g_k^t}{\lVert g_k^t \rVert}\right)}{\nabla \gL_{\text{s}}(\theta_t)} \\
&= \EgQ\ip{ \nabla\gL_{B_{l_j}}\left(\theta_t+\sum\limits_{k=1}^{j-1}\rho\frac{g_k^t}{\lVert g_k^t \rVert}\right)-\nabla\gL_{B_{l_j}}(\theta_t)}{\nabla \gL_{\text{s}}(\theta_t)} \\&+ \EgQ\ip{\nabla\gL_{B_{l_j}}(\theta_t)}{\nabla \gL_{\text{s}}(\theta_t)}.
\end{align*}
We have
\begin{align*}
    \EgQ\ip{\nabla\gL_{B_{l_j}}(\theta_t)}{\nabla \gL_{\text{s}}(\theta_t)} &= \ip{\EgQ[\nabla\gL_{B_{l_j}}(\theta_t)]}{\nabla \gL_{\text{s}}(\theta_t)} \\ 
    &= \ip{\EgQ[g^{\gQ}(\theta_t)]}{\nabla \gL_{\text{s}}(\theta_t)} \\
    & = \no{\nabla \gL_{\text{s}}(\theta_t)}^2,
\end{align*}
and for \(\beta>0\)
\begin{align*}
    &-\EgQ\ip{\nabla\gL_{B_{l_j}}\left(\theta_t+\sum\limits_{k=1}^{j-1}\rho\frac{g_k^t}{\lVert g_k^t \rVert}\right)-\nabla\gL_{B_{l_j}}(\theta_t)}{\nabla \gL_{\text{s}}(\theta_t)} \\ 
    &\overset{(\ref{Eq:young_ineq_1})}{\leq} \frac{1}{2\beta}\EgQ\no{\nabla\gL_{B_{l_j}}\left(\theta_t+\sum\limits_{k=1}^{j-1}\rho\frac{g_k^t}{\lVert g_k^t \rVert}\right)-\nabla\gL_{B_{l_j}}(\theta_t)}^2+\frac{\beta}{2}\EgQ\no{\nabla \gL_{\text{s}}(\theta_t)}^2 \\
    &\overset{(\ref{Eq:L-smooth_bound_1})}{\leq} \frac{L^2\rho^2}{2\beta}\EgQ\left\lV \sum\limits_{k=1}^{j-1}\frac{g_k^t}{\lVert g_k^t \rVert} \right\rV^2+\frac{\beta}{2}\no{\nabla \gL_{\text{s}}(\theta_t)}^2\\
    &\leq \frac{S^2L^2\rho^2}{2\beta}+\frac{\beta}{2}\no{\nabla \gL_{\text{s}}(\theta_t)}^2.
\end{align*}
In sum,
\begin{align*}
    \EgQ\ip{g_j^t}{\nabla \gL_{\text{s}}(\theta_t)} &\geq -\frac{S^2L^2\rho^2}{2\beta}-\frac{\beta}{2}\no{\nabla \gL_{\text{s}}(\theta_t)}^2+\no{\nabla \gL_{\text{s}}(\theta_t)}^2 \\
    &= -\frac{S^2L^2\rho^2}{2\beta} + (1-\frac{\beta}{2})\no{\nabla \gL_{\text{s}}(\theta_t)}^2 \\
    &= -SL\rho + (1-\frac{SL\rho}{4})\no{\nabla \gL_{\text{s}}(\theta_t)}^2
\end{align*}
with \(\beta = \frac{SL\rho}{2}\).
\end{proof}

\begin{lemma}[Lemma A.8 \cite{oikonomou2025sam_convergence}]\label{lem:delta-N}
    Let \( (r_t)_{t \geq 0} \) and \( (\delta_t)_{t \geq 0} \) be sequences of non-negative real numbers and let \( g > 1 \) and \( N \geq 0 \). Assume that the following recursive relationship holds:
\begin{equation}
    r_t \leq g \delta_t - \delta_{t+1} + N
\end{equation}
Then it holds
\[
\min_{0 \leq t \leq T-1} r_t \leq \frac{g^T}{T} \delta_0 + N.
\]
\end{lemma}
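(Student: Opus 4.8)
The plan is to convert the one-step recursion into a telescoping inequality by weighting each instance by an appropriate power of $g$, and then to extract the minimum through a weighted average. A naive unweighted summation over $t=0,\dots,T-1$ fails here: it produces a leftover term $(g-1)\sum_{t=1}^{T-1}\delta_t$ that is nonnegative and cannot be controlled without further structure on the $\delta_t$. The fix is to divide the hypothesis $r_t \leq g\delta_t - \delta_{t+1} + N$ by $g^{t+1}$, which rewrites it as
\[
\frac{r_t}{g^{t+1}} \leq \frac{\delta_t}{g^t} - \frac{\delta_{t+1}}{g^{t+1}} + \frac{N}{g^{t+1}}.
\]
The first two terms on the right are consecutive entries of the sequence $a_t := \delta_t/g^t$, so summation will telescope cleanly.

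Summing the displayed inequality over $t=0,\dots,T-1$ collapses the middle terms to $a_0 - a_T = \delta_0 - \delta_T/g^T$. Since $\delta_T \geq 0$ and $g>1$, the term $-\delta_T/g^T$ is nonpositive and may be discarded, giving
\[
\sum_{t=0}^{T-1}\frac{r_t}{g^{t+1}} \;\leq\; \delta_0 + N\,c_T, \qquad c_T := \sum_{s=1}^{T} g^{-s} = \frac{1 - g^{-T}}{g-1}.
\]
On the left I lower-bound each $r_t$ by $r_{\min} := \min_{0\leq t\leq T-1} r_t$, observing that $\sum_{t=0}^{T-1} g^{-(t+1)}$ is precisely the same geometric sum $c_T$. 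This yields $r_{\min}\,c_T \leq \delta_0 + N\,c_T$, and dividing by $c_T>0$ produces the tight estimate $r_{\min} \leq \delta_0/c_T + N$.

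It then remains only to relax $\delta_0/c_T$ to the stated $g^T\delta_0/T$. Since $1/c_T = (g-1)g^T/(g^T-1)$, the inequality $1/c_T \leq g^T/T$ is equivalent to the elementary bound $T(g-1) \leq g^T - 1$. This follows from the factorization $g^T - 1 = (g-1)\sum_{k=0}^{T-1} g^k$ together with $g^k \geq 1$ for every $k\geq 0$ (as $g>1$), which forces $\sum_{k=0}^{T-1} g^k \geq T$. Combining, $r_{\min} \leq \delta_0/c_T + N \leq (g^T/T)\,\delta_0 + N$, which is exactly the assertion.

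As for the main obstacle, there is no deep one, since this is an elementary discrete lemma. The only non-mechanical insight is recognizing that the unweighted sum is useless and that the correct telescoping requires the weight $g^{-(t+1)}$; after that the bookkeeping is routine. The single point deserving care is the \emph{direction} of the final relaxation: the argument in fact establishes the sharper bound $r_{\min} \leq (g-1)\delta_0/(1-g^{-T}) + N$, and the stated $(g^T/T)\,\delta_0 + N$ is merely a cleaner but weaker form obtained via $g^T - 1 \geq T(g-1)$, so one must verify that $1/c_T \leq g^T/T$ rather than the reverse.
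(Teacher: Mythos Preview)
Your proof is correct. The paper does not supply its own proof of this lemma; it simply quotes the result as Lemma~A.8 of \cite{oikonomou2025sam_convergence}, so there is no in-paper argument to compare against. Your weighted-telescoping approach (divide by $g^{t+1}$, sum, drop $-\delta_T/g^T$, lower-bound the left side by $r_{\min}c_T$, then relax $1/c_T\le g^T/T$ via $g^T-1=(g-1)\sum_{k=0}^{T-1}g^k\ge T(g-1)$) is clean and complete, and in fact yields the sharper bound $r_{\min}\le (g-1)\delta_0/(1-g^{-T})+N$ before the final relaxation.
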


\subsection{Proof of Theorem}
\begin{theorem}[\(\epsilon\)-approximate stationary]\label{thm:stationary}
    Let Assumption~\ref{ass:smoothness} hold. Define 
    \begin{align*}
        T_{\min}&= \frac{12M_4}{\epsilon^2S}\max\{1,\frac{24M_1M_4SL}{\epsilon^2},4M_2L,12M_3SL\},\\
        \overline{\rho}&=\frac{1}{SL}\min\{1,\frac{\epsilon^2}{12},\frac{\epsilon}{2\sqrt{6L}}\},\\
        \overline{\gamma}&=\min\{1,\frac{1}{S\sqrt{2M_1LT}},\frac{1}{4M_2L},\frac{\epsilon^2}{12M_3SL}\}.
    \end{align*}
    For all \(\epsilon > 0\), if the DGSAM iteration(\ref{eq:dgsam_update}) is employed, then for \(\rho\leq \overline{\rho}\), \(\gamma\leq\overline{\gamma}\), $T\geq T_{\min}$
    \[
     \min_{t=0,\dots,T-1} \E\no{\nabla\gL_{\text{s}}(\theta_t)} \leq \epsilon
    \]
    where the initial optimality gap \(M_4 = \gL_{\text{s}}(\theta_0)-\gL_{\text{s}}(\theta^*)\), \(S\) is the number of domains, \(M_1,M_2,M_3\) are the constants for the expected residual condition. 
\end{theorem}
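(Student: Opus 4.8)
The plan is to follow the now-standard descent-lemma argument for SAM-type methods in the nonconvex setting, adapted to the $S+1$-step gradual ascent of DGSAM. Write the aggregated update direction as $G_t := \frac{S}{S+1}\sum_{j=1}^{S+1} g_j^t$, so that $\theta_{t+1} = \theta_t - \gamma G_t$. First I would apply $L$-smoothness of $\gL_{\text{s}}$ (which follows from Assumption~\ref{ass:smoothness} by taking expectations over the domain-wise subsampling, cf. Corollary~\ref{cor:ER}) to obtain the one-step inequality
\[
\E\gL_{\text{s}}(\theta_{t+1}) \le \E\gL_{\text{s}}(\theta_t) - \gamma\,\E\ip{G_t}{\nabla\gL_{\text{s}}(\theta_t)} + \frac{L\gamma^2}{2}\E\no{G_t}^2 .
\]
The cross term is lower-bounded by summing Lemma~\ref{lem:2} over $j=1,\dots,S+1$ and dividing appropriately: each $\E\ip{g_j^t}{\nabla\gL_{\text{s}}(\theta_t)} \ge -SL\rho + (1-\tfrac{SL\rho}{4})\no{\nabla\gL_{\text{s}}(\theta_t)}^2$, so $\E\ip{G_t}{\nabla\gL_{\text{s}}(\theta_t)} \ge \frac{S}{S+1}(S+1)\big[-SL\rho + (1-\tfrac{SL\rho}{4})\no{\nabla\gL_{\text{s}}(\theta_t)}^2\big]$. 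The second-moment term is bounded via $\no{G_t}^2 \le (\frac{S}{S+1})^2 (S+1)\sum_j \no{g_j^t}^2$ (by \eqref{Eq:young_ineq_4}) and then Lemma~\ref{lem:1} for each $\E\no{g_j^t}^2 \le 2S^2L^2\rho^2 + 2\E\no{g^{\gQ}(\theta_t)}^2$; finally the Expected Residual condition (Corollary~\ref{cor:ER}) converts $\E\no{g^{\gQ}(\theta_t)}^2$ into $2M_1[\gL_{\text{s}}(\theta_t)-\gL_{\text{s}}(\theta^*)] + M_2\no{\nabla\gL_{\text{s}}(\theta_t)}^2 + M_3$.

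Assembling these pieces, I would set $\delta_t := \E\gL_{\text{s}}(\theta_t) - \gL_{\text{s}}(\theta^*)$ and $r_t := c\,\gamma S\,\E\no{\nabla\gL_{\text{s}}(\theta_t)}^2$ for a suitable absolute constant $c$, and arrange the one-step bound into the recursive form
\[
r_t \le g\,\delta_t - \delta_{t+1} + N,
\]
where $g = 1 + \text{(const)}\cdot M_1 L \gamma^2 S^2 > 1$ captures the $\delta_t$-dependence coming from the Expected Residual term, and $N$ collects the $\rho$-dependent bias terms ($SL\rho$, $S^2L^2\rho^2\gamma$, etc.) plus the $M_3\gamma^2$ noise floor. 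This is exactly the shape needed for Lemma~\ref{lem:delta-N}, which then yields $\min_{t<T} r_t \le \frac{g^T}{T}\delta_0 + N$, i.e. $\min_{t<T}\E\no{\nabla\gL_{\text{s}}(\theta_t)}^2 \le \frac{1}{c\gamma S}\big(\frac{g^T}{T}M_4 + N\big)$. The final step is a calibration argument: choosing $\rho \le \overline\rho$ forces $N$'s $\rho$-contributions below $\epsilon^2/3$ (say); choosing $\gamma \le \overline\gamma$ ensures $g^T \le e^{\text{const}\cdot M_1 L\gamma^2 S^2 T} \le 2$ (using $\gamma \le 1/(S\sqrt{2M_1LT})$) and pushes the $M_3\gamma^2$ floor below another $\epsilon^2/3$; and taking $T \ge T_{\min}$ makes $\frac{g^T}{T}M_4 \le \frac{2M_4}{T}$ contribute the last $\epsilon^2/3$. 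Summing gives $\min_t \E\no{\nabla\gL_{\text{s}}(\theta_t)}^2 \le \epsilon^2$, and Jensen's inequality ($\E\no{\cdot} \le \sqrt{\E\no{\cdot}^2}$) converts this to the stated bound on $\E\no{\nabla\gL_{\text{s}}(\theta_t)}$.

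The main obstacle I anticipate is bookkeeping the $S$-dependence consistently through the chain of lemmas so that the final thresholds match the precise forms of $\overline\rho$, $\overline\gamma$, $T_{\min}$ stated in Theorem~\ref{thm:stationary}. In particular, the $\rho$-bias terms enter both additively (through Lemma~\ref{lem:1}, scaling like $S^2L^2\rho^2$) and through the cross-term degradation $1 - \tfrac{SL\rho}{4}$ in Lemma~\ref{lem:2} — one must check that this coefficient stays bounded away from, say, $3/4$, which is where the constraint $\rho \le \tfrac{1}{SL}$ originates. The other delicate point is that the extra $(S+1)$-th gradient $g_{S+1}^t = \nabla\gL_{B_{l_1}}(\widetilde\theta_S)$ reuses the first domain, so it is \emph{not} an independent fresh sample; however, since Lemmas~\ref{lem:1} and~\ref{lem:2} are stated for an arbitrary $g_j^t$ of the form $\nabla\gL_{B_{l_j}}(\theta_t + \sum_{k<j}\rho g_k^t/\no{g_k^t})$ and only use $L$-smoothness plus unbiasedness of the domain-wise estimator at the \emph{base} point (not conditional independence across $j$), the same bounds apply verbatim to $g_{S+1}^t$ with the understanding that the perturbation sum runs over all $S$ prior steps — this is why the $S^2$ rather than $(S+1)^2$ appears. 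Everything else is routine Young's-inequality manipulation of the type already carried out in Lemmas~\ref{lem:1}–\ref{lem:2}.
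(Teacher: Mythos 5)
Your proposal follows essentially the same route as the paper's proof: the $L$-smoothness descent lemma applied to the aggregated DGSAM update, Lemma~\ref{lem:2} for the cross term and Lemma~\ref{lem:1} plus \eqref{Eq:young_ineq_4} and Corollary~\ref{cor:ER} for the second moment, assembly into the recursion $r_t \le g\delta_t - \delta_{t+1} + N$ with $g = 1 + O(M_1 L S^2\gamma^2)$, Lemma~\ref{lem:delta-N}, the bound $g^T \le e^{2TM_1LS^2\gamma^2} \le 3$, and a final calibration of $\rho$, $\gamma$, $T$ (the paper splits the budget as $\epsilon^2/2 + \epsilon^2/2$ with $\epsilon^2/6$ sub-terms rather than your thirds, but this is immaterial). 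Your side remarks on the origin of the $\rho \le 1/(SL)$ constraint and on why the reused gradient $g_{S+1}^t$ causes no issue are both accurate and consistent with the paper's argument.
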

\begin{proof}
    For simplicity, we assume that the effect of the batch size is absorbed into the learning rate \(\gamma\), i.e., \(\gamma\) is defined as the product of the base learning rate and the batch size.

    From the $L$-smoothness of $\gL_s$, we have 
    \begin{align*}
        \gL_{\text{s}}(\theta_{t+1})&\leq \gL_{\text{s}}(\theta_t)+\ip{\nabla \gL_{\text{s}}(\theta_t)}{\theta_{t+1}-\theta_t}+\frac{L}{2}\no{\theta_{t+1}-\theta_t}^2 \\
        &= \gL_{\text{s}}(\theta_t)-\gamma\frac{S}{S+1}\ip{\nabla \gL_{\text{s}}(\theta_t)}{\sum\limits_{j=1}^{S+1}g_j^t}+\frac{L\gamma^2}{2}\left(\frac{S}{S+1}\right)^2\no{\sum\limits_{j=1}^{S+1}g_j^t}^2, 
    \end{align*}
    since the DGSAM update is defined as \(\theta_{t+1} = \theta_t - \gamma\frac{S}{S+1}\sum\limits_{j=1}^{S+1}g_j^t\).

    By taking the expectation,
    \begin{align*}
        &\EgQ[\gL_{\text{s}}(\theta_{t+1})-\gL_{\text{s}}(\theta^*)\mid\theta_t] - [\gL_{\text{s}}(\theta_t)-\gL_{\text{s}}(\theta^*)] \\
        &\leq -\gamma\frac{S}{S+1}\sum\limits_{j=1}^{S+1}\EgQ\ip{\nabla \gL_{\text{s}}(\theta_t)}{g_j^t}+\frac{L\gamma^2}{2}\left(\frac{S}{S+1}\right)^2\EgQ\no{\sum\limits_{j=1}^{S+1}g_j^t}^2 \\
        &\overset{(\ref{Eq:young_ineq_4})}{\leq} -\gamma S\EgQ\ip{\nabla \gL_{\text{s}}(\theta_t)}{g_j^t}+\frac{L\gamma^2S^2}{2}\EgQ\no{g_j^t}^2 \\
        &\overset{Lem. \ref{lem:1}, \ref{lem:2}}{\leq} -\gamma S \left(-SL\rho + (1-\frac{SL\rho}{4})\no{\nabla \gL_{\text{s}}(\theta_t)}^2\right)+\frac{L\gamma^2S^2}{2}\left(2S^2L^2\rho^2+2\EgQ\lV g^{\gQ}(\theta_t)\rV^2\right) \\
        &=-S\gamma \left(1-\frac{SL\rho}{4}\right)\no{\nabla \gL_{\text{s}}(\theta_t)}^2+LS^2\gamma^2\EgQ\lV g^{\gQ}(\theta_t)\rV^2+S^2L\gamma\rho(1+S^2L^2\gamma\rho)\\
        &\overset{Cor. \ref{cor:ER}}{\leq}-S\gamma \left(1-\frac{SL\rho}{4}\right)\no{\nabla \gL_{\text{s}}(\theta_t)}^2 + 2M_1LS^2\gamma^2[\gL_{\text{s}}(\theta_t)-\gL_{\text{s}}(\theta^*)]+M_2LS\gamma^2\no{\nabla \gL_{\text{s}}(\theta_t)}^2\\&+M_3LS^2\gamma^2+S^2L\gamma\rho(1+S^2L^2\gamma\rho) \\
        &= -S\gamma \left(1-\frac{SL\rho}{4}-M_2L\gamma\right)\no{\nabla \gL_{\text{s}}(\theta_t)}^2 + 2M_1LS^2\gamma^2[\gL_{\text{s}}(\theta_t)-\gL_{\text{s}}(\theta^*)]+S^2L\gamma(\rho+S^2L^2\gamma\rho^2+M_3\gamma)\\
        &\leq -\frac{S\gamma}{2}\no{\nabla \gL_{\text{s}}(\theta_t)}^2 + 2M_1LS^2\gamma^2[\gL_{\text{s}}(\theta_t)-\gL_{\text{s}}(\theta^*)]+S^2L\gamma(\rho+S^2L^2\gamma\rho^2+M_3\gamma).
    \end{align*}
    The final inequality follows from the inequality \(1-\frac{SL\rho}{4}-M_2L\gamma\geq\frac{1}{2}\), which is obtained from our assumptions \(\rho\leq\frac{1}{SL}\) and \(\gamma\leq\frac{1}{4M_2L}\).\\
    In sum, 
    \begin{align}
    &\mathbb{E}_{\mathcal{D}}[\gL_{\text{s}}(\theta_{t+1}) - \gL_{\text{s}}(\theta^*)] - [\gL_{\text{s}}(\theta_t) - \gL_{\text{s}}(\theta^*)] \notag \\
    &\leq -\frac{S\gamma}{2} \|\nabla \gL_{\text{s}}(\theta_t)\|^2 + 2M_1LS^2\gamma^2 [\gL_{\text{s}}(\theta_t) - \gL_{\text{s}}(\theta^*)]  + S^2L\gamma(\rho + S^2L^2\gamma\rho^2 + M_3\gamma) \notag \\
    &\implies \frac{S\gamma}{2} \|\nabla \gL_{\text{s}}(\theta_t)\|^2 \leq (1 + 2M_1LS^2\gamma^2)[\gL_{\text{s}}(\theta_t) - \gL_{\text{s}}(\theta^*)]- \mathbb{E}_{\mathcal{D}}[\gL_{\text{s}}(\theta_{t+1}) - \gL_{\text{s}}(\theta^*)] \notag \\&+ S^2L\gamma(\rho + S^2L^2\gamma\rho^2 + M_3\gamma). 
    \label{eq:fnormineq}
    \end{align}   
    
    By taking expectation and applying the tower property, we can conclude that
    \begin{align}\label{eq:expectation_ineq}
         &\E\|\nabla \gL_{\text{s}}(\theta_t)\|^2 \leq (1 + 2M_1LS^2\gamma^2)\frac{2}{S\gamma}\E[\gL_{\text{s}}(\theta_t) - \gL_{\text{s}}(\theta^*)]-  \frac{2}{S\gamma}\E[\gL_{\text{s}}(\theta_{t+1}) - \gL_{\text{s}}(\theta^*)] \notag\\ &+ 2SL(\rho + S^2L^2\gamma\rho^2 + M_3\gamma).
    \end{align}
    We now define the following auxiliary quantities:
    \begin{align*}
        r_t &:= \E\|\nabla \gL_{\text{s}}(\theta_t)\|^2\geq0, \\
        \delta_t&:= \frac{2}{S\gamma}\E[\gL_{\text{s}}(\theta_t) - \gL_{\text{s}}(\theta^*)]\geq0,\\
        g &:= (1+2M_1LS^2\gamma^2) > 1, \\
        N &:= 2SL(\rho + S^2L^2\gamma\rho^2 + M_3\gamma).
    \end{align*}
    With these definitions, inequality~\ref{eq:expectation_ineq} becomes:
    \begin{equation*}\label{eq:lem_ineq}
        r_t\leq g\delta_t-\delta_{t+1}+N.
    \end{equation*}
    By applying Lemma~\ref{lem:delta-N}, we have
    \begin{align*}\label{eq:final_ineq_2}
        \min_{t=0,\dots,T-1} \E\no{\nabla\gL_{\text{s}}(\theta_t)}^2 \leq\frac{2(1+2M_1LS^2\gamma^2)^T}{TS\gamma}[\gL_{\text{s}}(\theta_0) - \gL_{\text{s}}(\theta^*)]+2SL(\rho + S^2L^2\gamma\rho^2 + M_3\gamma).
    \end{align*}
    From \(1+x\leq e^x\), we can get
    \begin{align*}
        (1+2M_1LS^2\gamma^2)^T\leq \exp(2TM_1LS^2\gamma^2) \leq \exp(1)\leq 3, 
    \end{align*}
    since we have \(\gamma\leq\frac{1}{S\sqrt{2M_1LT}}\) which imply \(2TM_1LS^2\gamma^2 \leq 1\).\\
    Therefore,
    \begin{align*}
        \min_{t=0,\dots,T-1} \E\no{\nabla\gL_{\text{s}}(\theta_t)}^2 \leq\frac{6M_4}{TS\gamma}+2SL(\rho + S^2L^2\gamma\rho^2 + M_3\gamma).
    \end{align*}
    The second term is less than \(\frac{\epsilon^2}{2}\) with assumptions:
    \begin{align*}
        &2SL\rho \leq \frac{\epsilon^2}{6} \iff \rho \leq \frac{\epsilon^2}{12SL},\\
        &\gamma\leq1,\\
        &4S^2L^3\gamma\rho^2\leq\frac{\epsilon^2}{6} \iff\rho \leq \frac{\epsilon}{2SL\sqrt{6L}} \quad \text{with} \;\gamma\leq1,\\
        &2SLM_3\gamma \leq \frac{\epsilon^2}{6} \iff\gamma \leq \frac{\epsilon^2}{12SLM_3}.
        \end{align*}
    Likewise, we have the inequality for the first term:
    \begin{equation}
        \frac{6M_4}{TS\gamma} \leq \frac{\epsilon^2}{2} \iff T \geq \frac{12M_4}{\epsilon^2 S\gamma} \label{eq:T_ineq}
    \end{equation}
    We have so far imposed the following inequalities on \(\gamma\):
    \begin{align*}
        \gamma&\leq \min\left\{\frac{1}{4M_2L},\frac{1}{S\sqrt{2M_1LT}}, 1,\frac{\epsilon^2}{12M_3SL}\right\}
    \end{align*}
    Consequently, \(T\) must satisfy the following conditions for (\ref{eq:T_ineq}).
    \begin{align*}
        T&\geq\max\left\{\frac{48M_2M_4L}{\epsilon^2S},\frac{288M_1M_4^2L}{\epsilon^4}, \frac{12M_4}{\epsilon^2S}, \frac{144M_3M_4L}{\epsilon^2}\right\}        
    \end{align*}
    Finally, we have:
    \begin{align*}
        \min_{t=0,\dots,T-1} \E\no{\nabla\gL_{\text{s}}(\theta_t)}^2 \leq \epsilon^2.
    \end{align*}
    withe these assumptions:
    \begin{align*}
        T&\geq \frac{12M_4}{\epsilon^2S}\max\{1,\frac{24M_1M_4SL}{\epsilon^2},4M_2L,12M_3SL\},\\
        \rho&\leq\frac{1}{SL}\min\{1,\frac{\epsilon^2}{12},\frac{\epsilon}{2\sqrt{6L}}\},\\
        \gamma&\leq\min\{1,\frac{1}{S\sqrt{2M_1LT}},\frac{1}{4M_2L},\frac{\epsilon^2}{12M_3SL}\}.
    \end{align*}
    
\end{proof}

\section{Sensitivity Analysis}\label{app:sensitive analysis}

\subsection{Sensitivity of DGSAM with respect to $\rho$}
To analyze the sensitivity of DGSAM to $\rho$, we evaluated the performance of SAM and DGSAM across different $\rho$ values $\{0.001, 0.005, 0.01, 0.05, 0.1, 0.2\}$ on the PACS and TerraIncognita datasets. As shown in Figure~\ref{fig:sensitivity}, DGSAM consistently outperformed SAM and demonstrated superior performance over a wider range of $\rho$ values.

\begin{figure}[htb!]
  \centering

  \begin{subfigure}[b]{0.47\textwidth}
    \includegraphics[width=\textwidth]{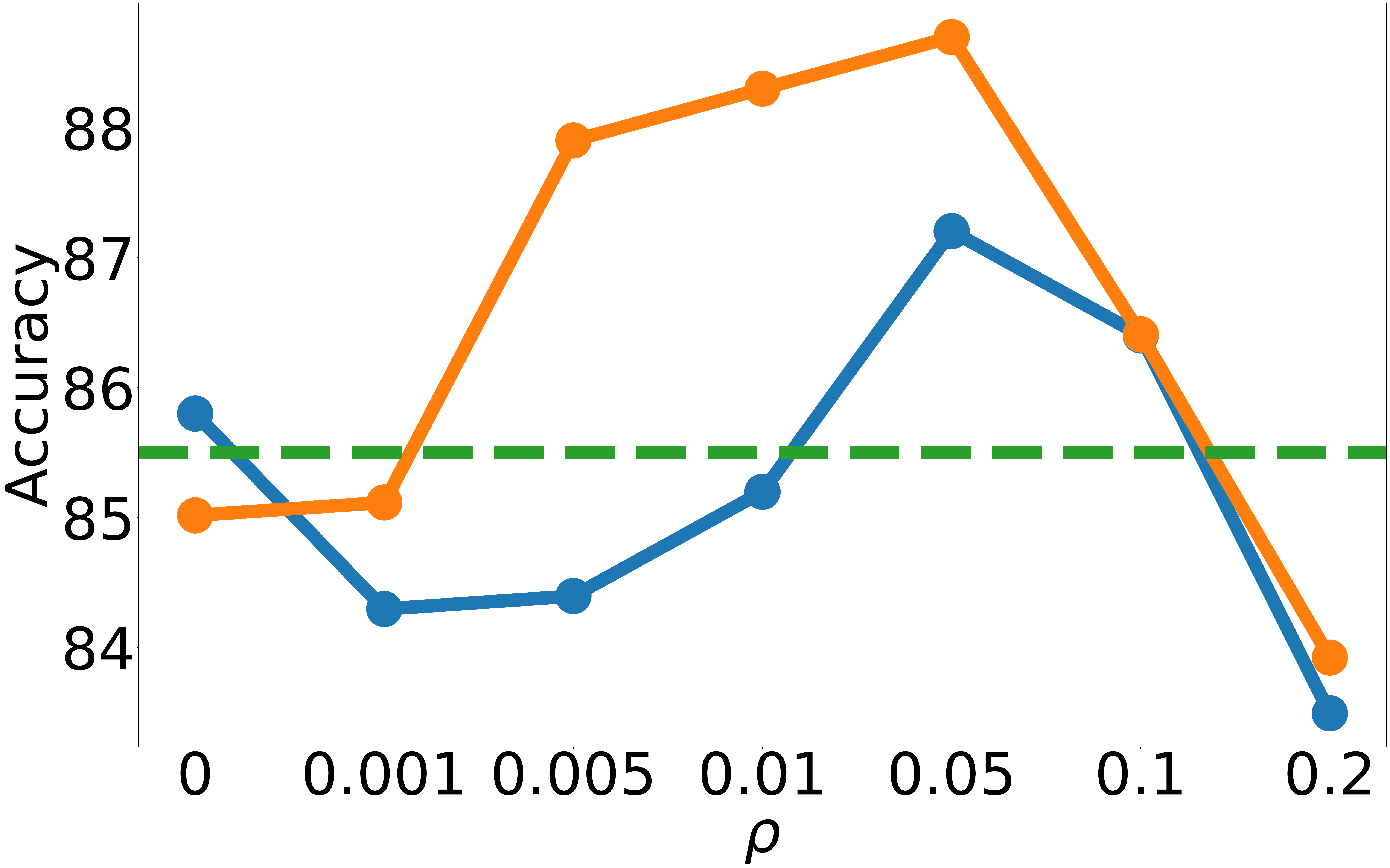}
    \caption{PACS}\label{subfig:sensitivity_pacs}
  \end{subfigure}
  \begin{subfigure}[b]{0.47\textwidth}
    \includegraphics[width=\textwidth]{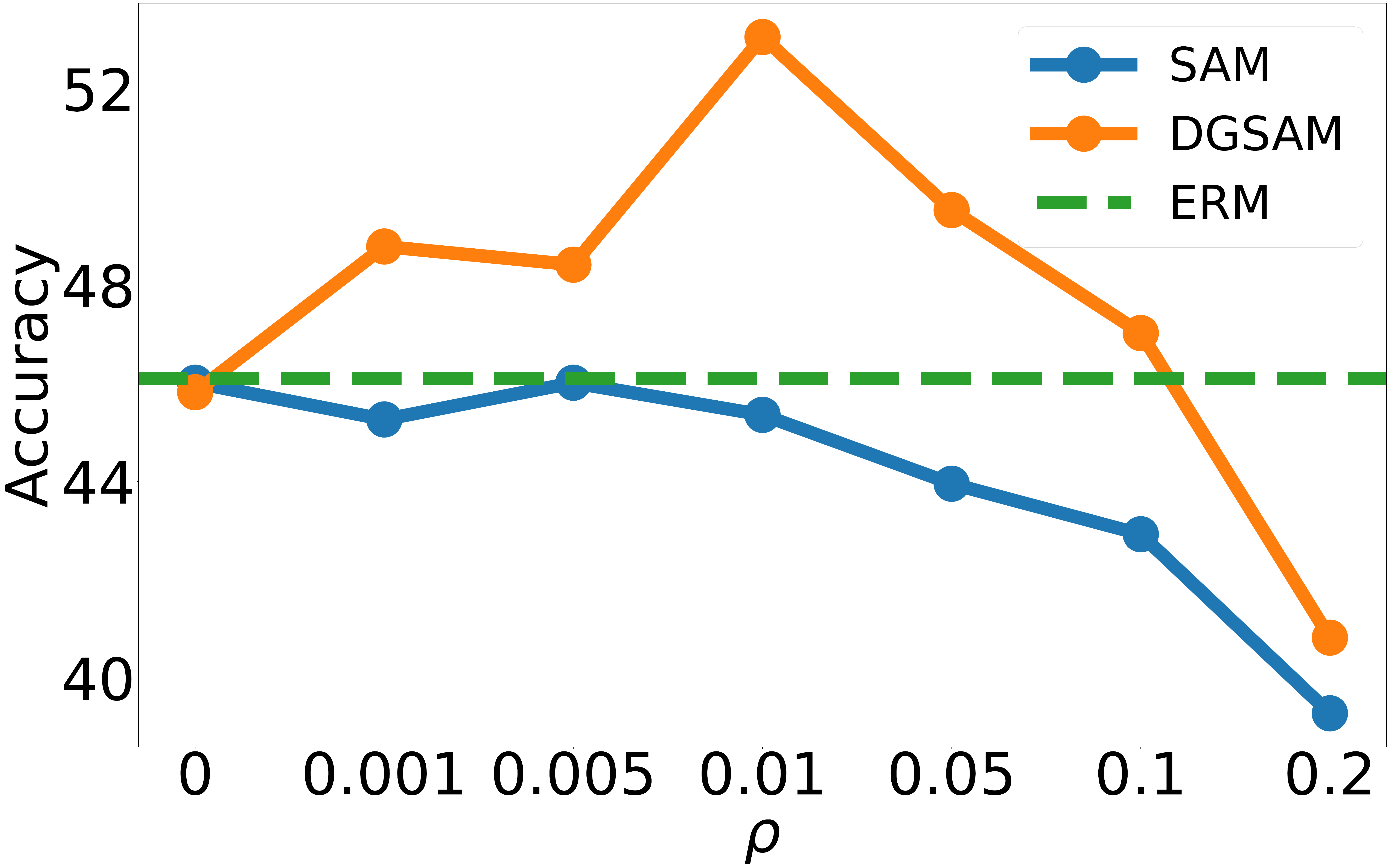}
    \caption{TerraIncognita}\label{subfig:sensitivity_terra}
  \end{subfigure}

  \caption{Sensitivity analysis}
  \label{fig:sensitivity}
\end{figure}

\subsection{Hessian spectrum density}

We further demonstrate the effectiveness of our approach by estimating the Hessian spectrum density of the converged minima using stochastic Lanczos quadrature \citep{ghorbani2019investigation}. As shown in Figure~\ref{fig:eigenvalue}, DGSAM not only suppresses high eigenvalues but also those near zero, indicating an overall control of the eigenvalue spectrum—consistent with our design goals.

Figure~\ref{fig:landscape_combined} visualizes the loss landscape around the solutions for SAM and DGSAM across different domains on the PACS dataset. The loss values are evaluated using random directional perturbations. While the total loss landscape for DGSAM and SAM remains similar, DGSAM finds significantly flatter minima at the individual domain level, whereas SAM converges to fake flat minima. 

\begin{figure}[htb!]
    \centering
    \begin{subfigure}[b]{0.47\textwidth}
    \includegraphics[width=\textwidth]{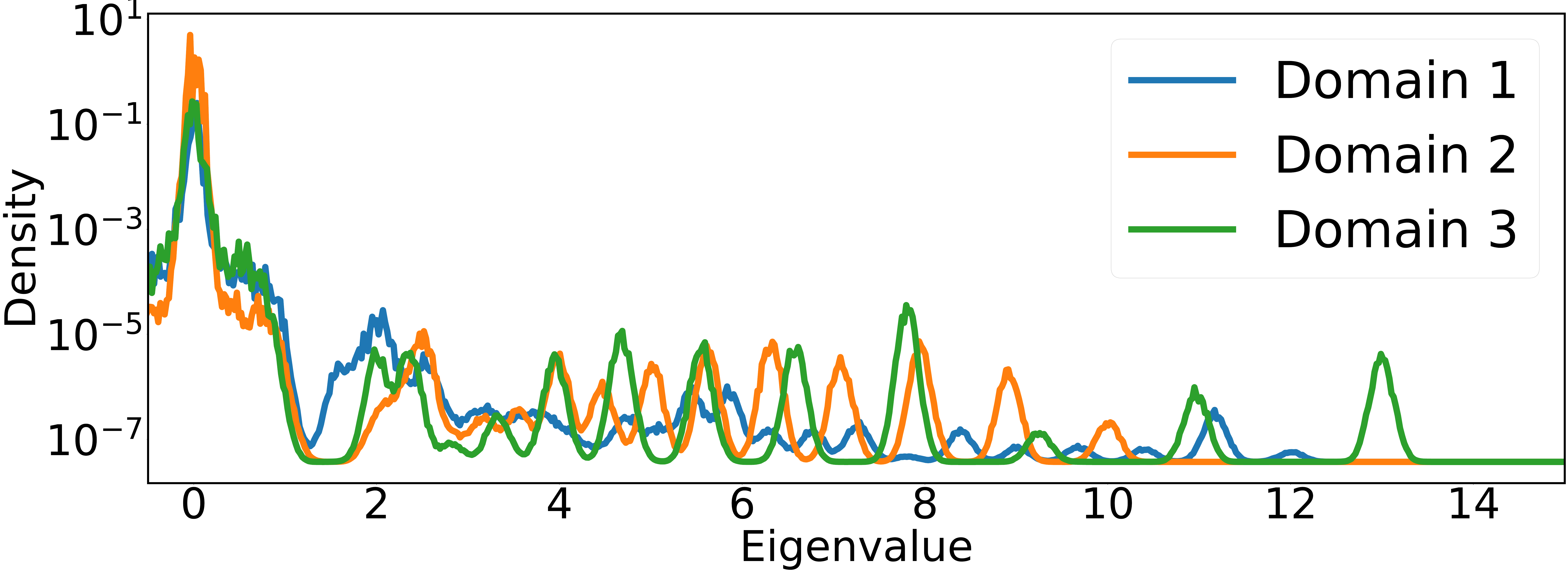}
    \caption{SAM}\label{subfig:eigen density sam}
    \end{subfigure}
    \begin{subfigure}[b]{0.47\textwidth}
    \includegraphics[width=\textwidth]{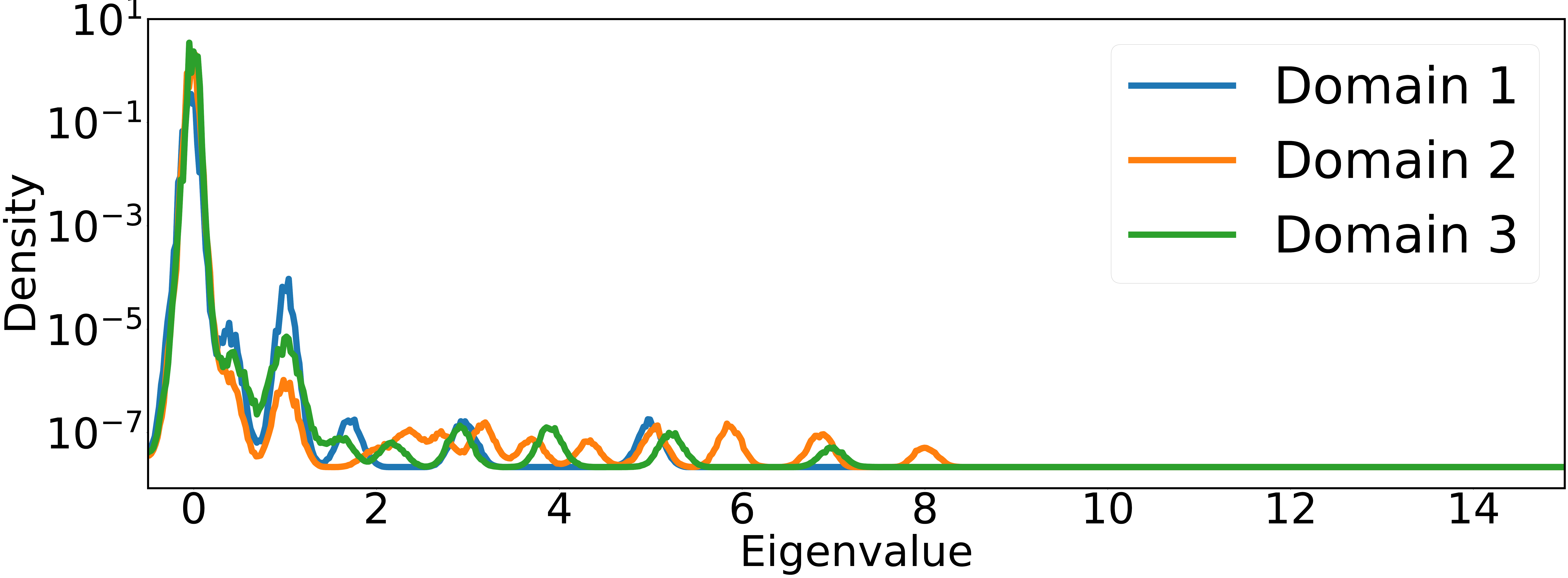}
    \caption{DGSAM}\label{subfig:eigen density dgsam}
    \end{subfigure}
    \caption{Hessian Spectrum Density at Converged Minima: (a) SAM and (b) DGSAM.}
    \label{fig:eigenvalue}
\end{figure}

\section{Illustration of Computational Cost Comparison}\label{app:cost_comparison_with_figure}
\begin{figure}[htb!]
  \centering
  \begin{subfigure}[b]{0.6\textwidth}
  \includegraphics[width=\textwidth]{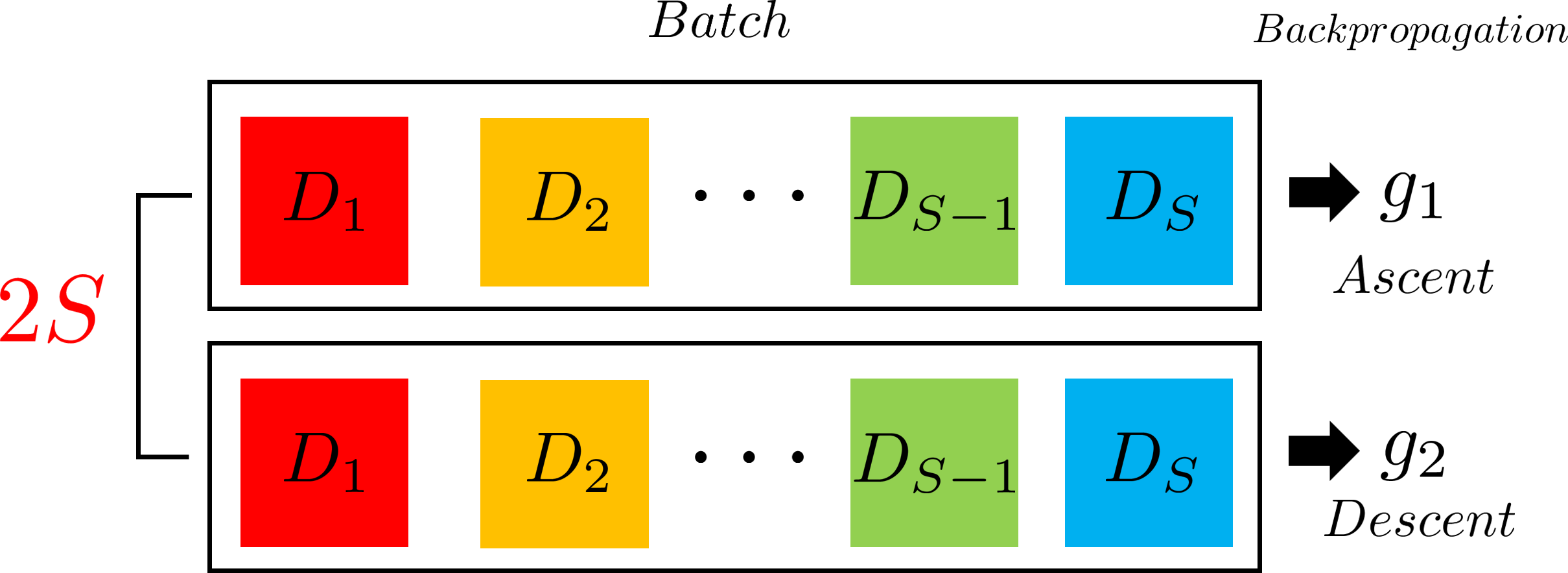}
   \caption{SAM}\label{subfig:cost_sam}
  \end{subfigure} 
  \begin{subfigure}[b]{0.82\textwidth}
  \includegraphics[width=\textwidth]{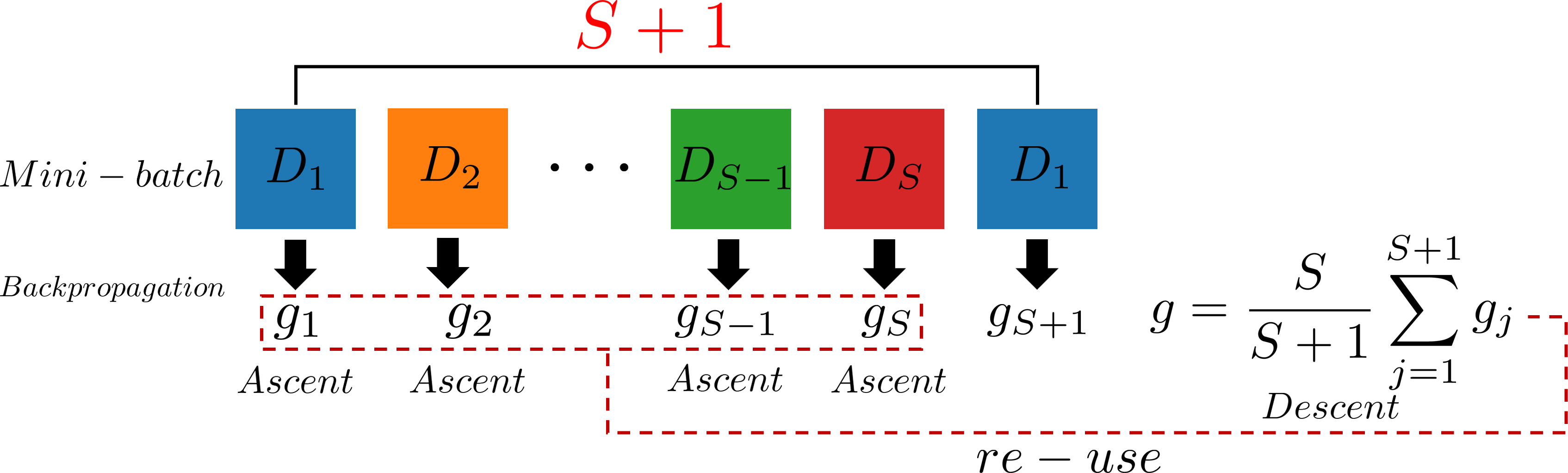} 
  \caption{DGSAM}\label{subfig:cost_dgsam}
  \end{subfigure}

  \caption{Computational cost of SAM and DGSAM.}
  \label{fig:cost}
\end{figure}
In standard domain generalization tasks, a single update step operates on a batch that comprises mini-batches from all source domains. While the number of data samples per domain-specific mini-batch may vary, we follow the DomainBed protocol \cite{gulrajani2021domainbed}, where each mini-batch contains an equal number of samples. Throughout this paper, we assume uniform mini-batch sizes across domains.

Let the computational cost of computing the loss and performing backpropagation on a single domain-specific mini-batch from one domain be denoted as \(c\). In the standard SAM algorithm, both an ascent and a descent gradient must be computed for each of the \(S\) domain-specific mini-batches, resulting in a total gradient computation cost of \(2S \times c\) per update theoretically.

In contrast, as illustrated in the Figure~\ref{fig:cost}, DGSAM computes gradients separately for each mini-batch, using \(g_1, \dots, g_S\) not only as ascent gradients but also directly for the parameter update. Due to this efficient reuse of gradients, DGSAM requires only \((S+1) \times c\) in gradient computation cost per update theoretically.

\section{Details of Experiments}\label{app:experiments}

\subsection{Implementation Details}\label{subsec:implementation}
We searched hyperparameters in the following ranges: the learning rate was chosen from \(\{10^{-5}, 2 \times 10^{-5}, 3 \times 10^{-5}, 5 \times 10^{-5}\}\), the dropout rate from \(\{0.0, 0.2, 0.5\}\), the weight decay from \(\{10^{-4}, 10^{-6}\}\), and \(\rho\) from \(\{0.03, 0.05, 0.1\}\). Each experiment was repeated three times, using 20 randomly initialized models sampled from this space, following the DomainBed protocol \cite{gulrajani2021domainbed}. The optimal hyperparameters selected based on DomainBed criteria for each dataset are provided in Table~\ref{tab:optimal_hparams} to ensure replicability.  All our experiments were conducted on an NVIDIA A100 GPU, using Python 3.11.5, PyTorch 2.0.0, Torchvision 0.15.1, and CUDA 11.7.

\begin{table}[!ht]
\centering\captionsetup{justification=centering, skip=2pt}
\small
\caption{Optimal hyperparameter settings for each dataset}\label{tab:optimal_hparams}
\begin{tabular}{l|cccc}
\hline
Dataset          & Learning Rate  & Dropout Rate & Weight Decay  & $\rho$  \\ \toprule
PACS             & \(3 \times 10^{-5}\) & 0.5          & \(10^{-4}\)      & 0.03     \\
VLCS             & \(10^{-5}\)         & 0.5          & \(10^{-4}\)      & 0.03     \\
OfficeHome       & \(10^{-5}\)         & 0.5          & \(10^{-6}\)      & 0.1      \\
TerraIncognita   & \(10^{-5}\)         & 0.2          & \(10^{-6}\)      & 0.05     \\
DomainNet        & \(2 \times 10^{-5}\) & 0.5          & \(10^{-4}\)      & 0.1      \\ \bottomrule
\end{tabular}

\end{table}

\subsection{Full Results}\label{app:full_results}
Here are the detailed results of the main experiment in Section~\ref{sec:exp_main} for each dataset. The outcomes are marked with \(\dag\) if sourced from \citet{wang2023sagm}, \(\ddag\) if sourced from \citet{zhang2023fad}, and are unlabeled if sourced from individual papers. We note that all results were conducted in the same experimental settings as described in their respective papers. The value shown next to the performance for each test domain represents the standard error across three trials.

\begin{table}[!ht]
\centering
\normalsize
\caption{The performance of DGSAM with 20 baseline algorithms on PACS.}
\begin{tabular}{l|cccc|ccc}
\hline
Algorithm        & A    & C    & P   & S   & Avg   & SD   & (s/iter) \\ \toprule
CDANN\(^\dag\) \cite{li2018cdann}         & {84.6\scriptsize$\pm$1.8} & {75.5\scriptsize$\pm$0.9} & {96.8\scriptsize$\pm$0.3} & {73.5\scriptsize$\pm$0.6} & 82.6 & 9.2  & 0.11 \\
IRM\(^\dag\) \cite{arjovsky2019irm}         & {84.8\scriptsize$\pm$1.3} & {76.4\scriptsize$\pm$1.1} & {96.7\scriptsize$\pm$0.6} & {76.1\scriptsize$\pm$1.0} & 83.5 & 8.4  & 0.12 \\
DANN\(^\dag\) \cite{ganin2016dann}         & {86.4\scriptsize$\pm$0.8} & {77.4\scriptsize$\pm$0.8} & {97.3\scriptsize$\pm$0.4} & {73.5\scriptsize$\pm$2.3} & 83.7 & 9.2  & 0.11 \\
MTL\(^\dag\) \cite{blanchard2021mtl}         & {87.5\scriptsize$\pm$0.8} & {77.1\scriptsize$\pm$0.5} & {96.4\scriptsize$\pm$0.8} & {77.3\scriptsize$\pm$1.8} & 84.6 & 8.0  & 0.12 \\
VREx\(^\dag\) \cite{krueger2021vrex}         & {86.0\scriptsize$\pm$1.6} & {79.1\scriptsize$\pm$0.6} & {96.9\scriptsize$\pm$0.5} & {77.7\scriptsize$\pm$1.7} & 84.9 & 7.6  & 0.11 \\
MLDG\(^\dag\) \cite{li2018mldg}         & {85.5\scriptsize$\pm$1.4} & {80.1\scriptsize$\pm$1.7} & {97.4\scriptsize$\pm$0.3} & {76.6\scriptsize$\pm$1.1} & 84.9 & 7.9  & 0.13 \\
ARM\(^\dag\) \cite{zhang2021arm}         & {86.8\scriptsize$\pm$0.6} & {76.8\scriptsize$\pm$0.5} & {97.4\scriptsize$\pm$0.3} & {79.3\scriptsize$\pm$1.2} & 85.1 & 8.0  & 0.11 \\
RSC\(^\dag\) \cite{huang2020rsc}         & {85.4\scriptsize$\pm$0.8} & {79.7\scriptsize$\pm$1.8} & {97.6\scriptsize$\pm$0.3} & {78.2\scriptsize$\pm$1.2} & 85.2 & 7.6  & 0.14 \\
ERM\(^\dag\)                 & {84.7\scriptsize$\pm$0.4} & {80.8\scriptsize$\pm$0.6} & {97.2\scriptsize$\pm$0.3} & {79.3\scriptsize$\pm$1.0} & 85.5 & 7.0  & 0.11 \\
CORAL\(^\dag\) \cite{sun2016coral}         & {88.3\scriptsize$\pm$0.2} & {80.0\scriptsize$\pm$0.5} & {97.5\scriptsize$\pm$0.3} & {78.8\scriptsize$\pm$1.3} & 86.2 & 7.5  & 0.12 \\
SagNet\(^\dag\) \cite{nam2021sagnet}         & {87.4\scriptsize$\pm$1.0} & {80.7\scriptsize$\pm$0.6} & {97.1\scriptsize$\pm$0.1} & {80.0\scriptsize$\pm$0.4} & 86.3 & 6.9  & 0.32 \\
SWAD \cite{cha2021swad}         & {89.3\scriptsize$\pm$0.2} & {83.4\scriptsize$\pm$0.6} & {97.3\scriptsize$\pm$0.3} & {82.5\scriptsize$\pm$0.5} & 88.1 & 5.9  & 0.11 \\ \midrule
SAM\(^\dag\) \cite{foret2021sam}         & {85.6\scriptsize$\pm$2.1} & {80.9\scriptsize$\pm$1.2} & {97.0\scriptsize$\pm$0.4} & {79.6\scriptsize$\pm$1.6} & 85.8 & 6.9  & 0.22 \\
GSAM\(^\dag\) \cite{zhuang2022gsam}         & {86.9\scriptsize$\pm$0.1} & {80.4\scriptsize$\pm$0.2} & {97.5\scriptsize$\pm$0.0} & {78.7\scriptsize$\pm$0.8} & 85.9 & 7.4  & 0.22 \\
Lookbehind-SAM \cite{mordido2024lookbehind}         & {86.8\scriptsize$\pm$0.2} & {80.2\scriptsize$\pm$0.3} & {97.4\scriptsize$\pm$0.8} & {79.7\scriptsize$\pm$0.2} & 86.0 & 7.2  & 0.50 \\
GAM\(^\ddag\) \cite{zhang2023gam}         & {85.9\scriptsize$\pm$0.9} & {81.3\scriptsize$\pm$1.6} & {98.2\scriptsize$\pm$0.4} & {79.0\scriptsize$\pm$2.1} & 86.1 & 7.4  & 0.43 \\
SAGM \cite{wang2023sagm}         & {87.4\scriptsize$\pm$0.2} & {80.2\scriptsize$\pm$0.3} & {98.0\scriptsize$\pm$0.2} & {80.8\scriptsize$\pm$0.6} & 86.6 & 7.2  & 0.22 \\
DISAM \cite{zhang2024disam}         & {87.1\scriptsize$\pm$0.4} & {81.9\scriptsize$\pm$0.5} & {96.2\scriptsize$\pm$0.3} & {83.1\scriptsize$\pm$0.7} & 87.1 & 5.6  & 0.33 \\
FAD \cite{zhang2023fad}         & {88.5\scriptsize$\pm$0.5} & {83.0\scriptsize$\pm$0.8} & {98.4\scriptsize$\pm$0.2} & {82.8\scriptsize$\pm$0.9} & 88.2 & 6.3  & 0.38 \\
DGSAM (Ours)         & {88.9\scriptsize$\pm$0.2} & {84.8\scriptsize$\pm$0.7} & {96.9\scriptsize$\pm$0.2} & {83.5\scriptsize$\pm$0.3} & 88.5 & 5.2  & 0.17 \\
DGSAM + SWAD         & {89.1\scriptsize$\pm$0.5} & {84.6\scriptsize$\pm$0.4} & {97.3\scriptsize$\pm$0.1} & {83.6\scriptsize$\pm$0.4} & 88.7 & 5.4  & 0.17 \\\bottomrule
\end{tabular}
\end{table}

\begin{table}[!ht]
\centering
\normalsize
\caption{The performance of DGSAM with 20 baseline algorithms on VLCS}
\begin{tabular}{l|cccc|ccc}
\hline
Algorithm        &  C   & L   & S   & V   & Avg   & SD   & (s/iter) \\ \toprule
RSC\(^\dag\) \cite{huang2020rsc}         & {97.9\scriptsize$\pm$0.1} & {62.5\scriptsize$\pm$0.7} & {72.3\scriptsize$\pm$1.2} & {75.6\scriptsize$\pm$0.8} & 77.1 & 13.0 & 0.13 \\
MLDG\(^\dag\) \cite{li2018mldg}         & {97.4\scriptsize$\pm$0.2} & {65.2\scriptsize$\pm$0.7} & {71.0\scriptsize$\pm$1.4} & {75.3\scriptsize$\pm$1.0} & 77.2 & 12.2 & 0.12 \\
MTL\(^\dag\) \cite{blanchard2021mtl}         & {97.8\scriptsize$\pm$0.4} & {64.3\scriptsize$\pm$0.3} & {71.5\scriptsize$\pm$0.7} & {75.3\scriptsize$\pm$1.7} & 77.2 & 12.5 & 0.12 \\
ERM\(^\dag\)         & {98.0\scriptsize$\pm$0.3} & {64.7\scriptsize$\pm$1.2} & {71.4\scriptsize$\pm$1.2} & {75.2\scriptsize$\pm$1.6} & 77.3 & 12.5 & 0.11 \\
CDANN\(^\dag\) \cite{li2018cdann}         & {97.1\scriptsize$\pm$0.3} & {65.1\scriptsize$\pm$1.2} & {70.7\scriptsize$\pm$0.8} & {77.1\scriptsize$\pm$1.5} & 77.5 & 12.1 & 0.11 \\
ARM\(^\dag\) \cite{zhang2021arm}         & {98.7\scriptsize$\pm$0.2} & {63.6\scriptsize$\pm$0.7} & {71.3\scriptsize$\pm$1.2} & {76.7\scriptsize$\pm$0.6} & 77.6 & 13.1 & 0.11 \\
SagNet\(^\dag\) \cite{nam2021sagnet}         & {97.9\scriptsize$\pm$0.4} & {64.5\scriptsize$\pm$0.5} & {71.4\scriptsize$\pm$1.3} & {77.5\scriptsize$\pm$0.5} & 77.8 & 12.5 & 0.32 \\
VREx\(^\dag\) \cite{krueger2021vrex}         & {98.4\scriptsize$\pm$0.3} & {64.4\scriptsize$\pm$1.4} & {74.1\scriptsize$\pm$0.4} & {76.2\scriptsize$\pm$1.3} & 78.3 & 12.4 & 0.11 \\
DANN\(^\dag\) \cite{ganin2016dann}         & {99.0\scriptsize$\pm$0.3} & {65.1\scriptsize$\pm$1.4} & {73.1\scriptsize$\pm$0.3} & {77.2\scriptsize$\pm$0.6} & 78.6 & 12.6 & 0.11 \\
IRM\(^\dag\) \cite{arjovsky2019irm}         & {98.6\scriptsize$\pm$0.1} & {64.9\scriptsize$\pm$0.9} & {73.4\scriptsize$\pm$0.6} & {77.3\scriptsize$\pm$0.9} & 78.6 & 12.4 & 0.12 \\
CORAL\(^\dag\) \cite{sun2016coral}         & {98.3\scriptsize$\pm$0.1} & {66.1\scriptsize$\pm$1.2} & {73.4\scriptsize$\pm$0.3} & {77.5\scriptsize$\pm$1.2} & 78.8 & 12.0 & 0.12 \\
SWAD \cite{cha2021swad}         & {98.8\scriptsize$\pm$0.1} & {63.3\scriptsize$\pm$0.3} & {75.3\scriptsize$\pm$0.5} & {79.2\scriptsize$\pm$0.6} & 79.1 & 12.8 & 0.11 \\ \midrule
GAM\(^\ddag\) \cite{zhang2023gam}         & {98.8\scriptsize$\pm$0.6} & {65.1\scriptsize$\pm$1.2} & {72.9\scriptsize$\pm$1.0} & {77.2\scriptsize$\pm$1.9} & 78.5 & 12.5 & 0.43 \\
Lookbehind-SAM \cite{mordido2024lookbehind}         & {98.7\scriptsize$\pm$0.6} & {65.1\scriptsize$\pm$1.1} & {73.1\scriptsize$\pm$0.4} & {78.7\scriptsize$\pm$0.9} & 78.9 & 12.4 & 0.50 \\
FAD \cite{zhang2023fad}         & {99.1\scriptsize$\pm$0.5} & {66.8\scriptsize$\pm$0.9} & {73.6\scriptsize$\pm$1.0} & {76.1\scriptsize$\pm$1.3} & 78.9 & 12.1 & 0.38 \\
GSAM\(^\dag\) \cite{zhuang2022gsam}         & {98.7\scriptsize$\pm$0.3} & {64.9\scriptsize$\pm$0.2} & {74.3\scriptsize$\pm$0.0} & {78.5\scriptsize$\pm$0.8} & 79.1 & 12.3 & 0.22 \\
SAM\(^\dag\) \cite{foret2021sam}         & {99.1\scriptsize$\pm$0.2} & {65.0\scriptsize$\pm$1.0} & {73.7\scriptsize$\pm$1.0} & {79.8\scriptsize$\pm$0.1} & 79.4 & 12.5 & 0.22 \\
DISAM \cite{zhang2024disam}         & {99.3\scriptsize$\pm$0.0} & {66.3\scriptsize$\pm$0.5} & {81.0\scriptsize$\pm$0.1} & {73.2\scriptsize$\pm$0.1} & 79.9 & 12.3 & 0.33 \\
SAGM \cite{wang2023sagm}         & {99.0\scriptsize$\pm$0.2} & {65.2\scriptsize$\pm$0.4} & {75.1\scriptsize$\pm$0.3} & {80.7\scriptsize$\pm$0.8} & 80.0 & 12.3 & 0.22 \\
DGSAM + SWAD          & {99.3\scriptsize$\pm$0.7} & {67.2\scriptsize$\pm$0.3} & {77.7\scriptsize$\pm$0.6} & {79.2\scriptsize$\pm$0.5} & 80.9 & 11.6 & 0.17 \\
DGSAM (Ours)         & {99.0\scriptsize$\pm$0.5} & {67.0\scriptsize$\pm$0.5} & {77.9\scriptsize$\pm$0.5} & {81.8\scriptsize$\pm$0.4} & 81.4 & 11.5 & 0.17 \\\bottomrule
\end{tabular}
\end{table}

\begin{table}[!ht]
\centering
\normalsize
\caption{The performance of DGSAM with 20 baseline algorithms on OfficeHome}
\begin{tabular}{l|cccc|ccc}
\hline
Algorithm        & A    & C    & P   & R   & Avg   & SD   & (s/iter) \\ \toprule
IRM\(^\dag\) \cite{arjovsky2019irm}         & {58.9\scriptsize$\pm$2.3} & {52.2\scriptsize$\pm$1.6} & {72.1\scriptsize$\pm$2.9} & {74.0\scriptsize$\pm$2.5} & 64.3 & 9.1  & 0.12 \\
ARM\(^\dag\) \cite{zhang2021arm}         & {58.9\scriptsize$\pm$0.8} & {51.0\scriptsize$\pm$0.5} & {74.1\scriptsize$\pm$0.1} & {75.2\scriptsize$\pm$0.3} & 64.8 & 10.2 & 0.11 \\
RSC\(^\dag\) \cite{huang2020rsc}         & {60.7\scriptsize$\pm$1.4} & {51.4\scriptsize$\pm$0.3} & {74.8\scriptsize$\pm$1.1} & {75.1\scriptsize$\pm$1.3} & 65.5 & 10.0 & 0.14 \\
CDANN\(^\dag\) \cite{li2018cdann}         & {61.5\scriptsize$\pm$1.4} & {50.4\scriptsize$\pm$2.4} & {74.4\scriptsize$\pm$0.9} & {76.6\scriptsize$\pm$0.8} & 65.7 & 10.6 & 0.11 \\
DANN\(^\dag\) \cite{ganin2016dann}         & {59.9\scriptsize$\pm$1.3} & {53.0\scriptsize$\pm$0.3} & {73.6\scriptsize$\pm$0.7} & {76.9\scriptsize$\pm$0.5} & 65.9 & 9.8  & 0.11 \\
MTL\(^\dag\) \cite{blanchard2021mtl}         & {61.5\scriptsize$\pm$0.7} & {52.4\scriptsize$\pm$0.6} & {74.9\scriptsize$\pm$0.4} & {76.8\scriptsize$\pm$0.4} & 66.4 & 10.0 & 0.12 \\
VREx\(^\dag\) \cite{krueger2021vrex}         & {60.7\scriptsize$\pm$0.9} & {53.0\scriptsize$\pm$0.9} & {75.3\scriptsize$\pm$0.1} & {76.6\scriptsize$\pm$0.5} & 66.4 & 9.9  & 0.11 \\
ERM\(^\dag\)         & {61.3\scriptsize$\pm$0.7} & {52.4\scriptsize$\pm$0.3} & {75.8\scriptsize$\pm$0.1} & {76.6\scriptsize$\pm$0.3} & 66.5 & 10.2 & 0.11 \\
MLDG\(^\dag\) \cite{li2018mldg}         & {61.5\scriptsize$\pm$0.9} & {53.2\scriptsize$\pm$0.6} & {75.0\scriptsize$\pm$1.2} & {77.5\scriptsize$\pm$0.4} & 66.8 & 9.9  & 0.13 \\
ERM\(^\dag\)         & {63.1\scriptsize$\pm$0.3} & {51.9\scriptsize$\pm$0.4} & {77.2\scriptsize$\pm$0.5} & {78.1\scriptsize$\pm$0.2} & 67.6 & 10.8 & 0.11 \\
SagNet\(^\dag\) \cite{nam2021sagnet}         & {63.4\scriptsize$\pm$0.2} & {54.8\scriptsize$\pm$0.4} & {75.8\scriptsize$\pm$0.4} & {78.3\scriptsize$\pm$0.3} & 68.1 & 9.5  & 0.32 \\
CORAL\(^\dag\) \cite{sun2016coral}         & {65.3\scriptsize$\pm$0.4} & {54.4\scriptsize$\pm$0.5} & {76.5\scriptsize$\pm$0.1} & {78.4\scriptsize$\pm$0.5} & 68.7 & 9.6  & 0.12 \\
SWAD \cite{cha2021swad}         & {66.1\scriptsize$\pm$0.4} & {57.7\scriptsize$\pm$0.4} & {78.4\scriptsize$\pm$0.1} & {80.2\scriptsize$\pm$0.2} & 70.6 & 9.2  & 0.11 \\ \midrule
GAM\(^\ddag\) \cite{zhang2023gam}         & {63.0\scriptsize$\pm$1.2} & {49.8\scriptsize$\pm$0.5} & {77.6\scriptsize$\pm$0.6} & {82.4\scriptsize$\pm$1.0} & 68.2 & 12.8 & 0.43 \\ 
FAD \cite{zhang2023fad}         & {63.5\scriptsize$\pm$1.0} & {50.3\scriptsize$\pm$0.8} & {78.0\scriptsize$\pm$0.4} & {85.0\scriptsize$\pm$0.6} & 69.2 & 13.4 & 0.40 \\
Lookbehind-SAM \cite{mordido2024lookbehind}         & {64.7\scriptsize$\pm$0.3} & {53.1\scriptsize$\pm$0.8} & {77.4\scriptsize$\pm$0.5} & {81.7\scriptsize$\pm$0.7} & 69.2 & 11.2 & 0.50 \\
GSAM\(^\dag\) \cite{zhuang2022gsam}         & {64.9\scriptsize$\pm$0.1} & {55.2\scriptsize$\pm$0.2} & {77.8\scriptsize$\pm$0.0} & {79.2\scriptsize$\pm$0.0} & 69.3 & 9.9  & 0.22 \\
SAM\(^\dag\) \cite{foret2021sam}         & {64.5\scriptsize$\pm$0.3} & {56.5\scriptsize$\pm$0.2} & {77.4\scriptsize$\pm$0.1} & {79.8\scriptsize$\pm$0.4} & 69.6 & 9.5  & 0.22 \\
SAGM \cite{wang2023sagm}         & {65.4\scriptsize$\pm$0.4} & {57.0\scriptsize$\pm$0.3} & {78.0\scriptsize$\pm$0.3} & {80.0\scriptsize$\pm$0.2} & 70.1 & 9.4  & 0.22 \\
DISAM \cite{zhang2024disam}         & {65.8\scriptsize$\pm$0.2} & {55.6\scriptsize$\pm$0.2} & {79.2\scriptsize$\pm$0.2} & {80.6\scriptsize$\pm$0.1} & 70.3 & 10.3 & 0.33 \\
DGSAM (Ours)         & {65.6\scriptsize$\pm$0.4} & {59.7\scriptsize$\pm$0.2} & {78.0\scriptsize$\pm$0.2} & {80.1\scriptsize$\pm$0.4} & 70.8 & 8.5  & 0.17 \\
DGSAM + SWAD         & {66.2\scriptsize$\pm$0.6} & {59.9\scriptsize$\pm$0.1} & {78.1\scriptsize$\pm$0.4} & {81.2\scriptsize$\pm$0.5} & 71.4 & 8.7  & 0.17 \\ \bottomrule
\end{tabular}
\end{table}

\begin{table}[!ht]
\centering
\normalsize
\caption{The performance of DGSAM with 20 baseline algorithms on TerraIncognita}
\begin{tabular}{l|cccc|ccc}
\hline
Algorithm        & L100    & L38    & L43   & L46   & Avg   & SD   & (s/iter) \\ \toprule
ARM\(^\dag\) \cite{zhang2021arm}         & {49.3\scriptsize$\pm$0.7} & {38.3\scriptsize$\pm$2.4} & {55.8\scriptsize$\pm$0.8} & {38.7\scriptsize$\pm$1.3} & 45.5 & 7.4 & 0.11 \\
MTL\(^\dag\) \cite{blanchard2021mtl}         & {49.3\scriptsize$\pm$1.2} & {39.6\scriptsize$\pm$6.3} & {55.6\scriptsize$\pm$1.1} & {37.8\scriptsize$\pm$0.8} & 45.6 & 7.3 & 0.12 \\
CDANN\(^\dag\) \cite{li2018cdann}         & {47.0\scriptsize$\pm$1.9} & {41.3\scriptsize$\pm$4.8} & {54.9\scriptsize$\pm$1.7} & {39.8\scriptsize$\pm$2.3} & 45.8 & 5.9 & 0.11 \\
ERM\(^\dag\)         & {49.8\scriptsize$\pm$4.4} & {42.1\scriptsize$\pm$1.4} & {56.9\scriptsize$\pm$1.8} & {35.7\scriptsize$\pm$3.9} & 46.1 & 8.0 & 0.11 \\
VREx\(^\dag\) \cite{krueger2021vrex}         & {48.2\scriptsize$\pm$4.3} & {41.7\scriptsize$\pm$1.3} & {56.8\scriptsize$\pm$0.8} & {38.7\scriptsize$\pm$3.1} & 46.4 & 6.9 & 0.11 \\
RSC\(^\dag\) \cite{huang2020rsc}         & {50.2\scriptsize$\pm$2.2} & {39.2\scriptsize$\pm$1.4} & {56.3\scriptsize$\pm$1.4} & {40.8\scriptsize$\pm$0.6} & 46.6 & 7.0 & 0.13 \\
DANN\(^\dag\) \cite{ganin2016dann}         & {51.1\scriptsize$\pm$3.5} & {40.6\scriptsize$\pm$0.6} & {57.4\scriptsize$\pm$0.5} & {37.7\scriptsize$\pm$1.8} & 46.7 & 7.9 & 0.11 \\
IRM\(^\dag\) \cite{arjovsky2019irm}         & {54.6\scriptsize$\pm$1.3} & {39.8\scriptsize$\pm$1.9} & {56.2\scriptsize$\pm$1.8} & {39.6\scriptsize$\pm$0.8} & 47.6 & 7.9 & 0.12 \\
CORAL\(^\dag\) \cite{sun2016coral}         & {51.6\scriptsize$\pm$2.4} & {42.2\scriptsize$\pm$1.0} & {57.0\scriptsize$\pm$1.0} & {39.8\scriptsize$\pm$2.9} & 47.7 & 7.0 & 0.12 \\
MLDG\(^\dag\) \cite{li2018mldg}         & {54.2\scriptsize$\pm$3.0} & {44.3\scriptsize$\pm$1.1} & {55.6\scriptsize$\pm$0.3} & {36.9\scriptsize$\pm$2.2} & 47.8 & 7.6 & 0.13 \\
ERM\(^\dag\)         & {54.3\scriptsize$\pm$0.4} & {42.5\scriptsize$\pm$0.7} & {55.6\scriptsize$\pm$0.3} & {38.8\scriptsize$\pm$2.5} & 47.8 & 7.3 & 0.11 \\
SagNet\(^\dag\) \cite{nam2021sagnet}         & {53.0\scriptsize$\pm$2.9} & {43.0\scriptsize$\pm$2.5} & {57.9\scriptsize$\pm$0.6} & {40.4\scriptsize$\pm$1.3} & 48.6 & 7.1 & 0.32 \\
SWAD \cite{cha2021swad}         & {55.4\scriptsize$\pm$0.0} & {44.9\scriptsize$\pm$1.1} & {59.7\scriptsize$\pm$0.4} & {39.9\scriptsize$\pm$0.2} & 50.0 & 7.9 & 0.11 \\ \midrule
SAM\(^\dag\) \cite{foret2021sam}         & {46.3\scriptsize$\pm$1.0} & {38.4\scriptsize$\pm$2.4} & {54.0\scriptsize$\pm$1.0} & {34.5\scriptsize$\pm$0.8} & 43.3 & 7.5 & 0.22 \\
Lookbehind-SAM \cite{mordido2024lookbehind}         & {44.6\scriptsize$\pm$0.8} & {41.1\scriptsize$\pm$1.4} & {57.4\scriptsize$\pm$1.2} & {34.9\scriptsize$\pm$0.6} & 44.5 & 8.2 & 0.50 \\
GAM\(^\ddag\) \cite{zhang2023gam}         & {42.2\scriptsize$\pm$2.6} & {42.9\scriptsize$\pm$1.7} & {60.2\scriptsize$\pm$1.8} & {35.5\scriptsize$\pm$0.7} & 45.2 & 9.1 & 0.43 \\
FAD \cite{zhang2023fad}         & {44.3\scriptsize$\pm$2.2} & {43.5\scriptsize$\pm$1.7} & {60.9\scriptsize$\pm$2.0} & {34.1\scriptsize$\pm$0.5} & 45.7 & 9.6 & 0.38 \\
DISAM \cite{zhang2024disam}         & {46.2\scriptsize$\pm$2.9} & {41.6\scriptsize$\pm$0.1} & {58.0\scriptsize$\pm$0.5} & {40.5\scriptsize$\pm$2.2} & 46.6 & 6.9 & 0.33 \\
GSAM\(^\dag\) \cite{zhuang2022gsam}         & {50.8\scriptsize$\pm$0.1} & {39.3\scriptsize$\pm$0.2} & {59.6\scriptsize$\pm$0.0} & {38.2\scriptsize$\pm$0.8} & 47.0 & 8.8 & 0.22 \\
SAGM \cite{wang2023sagm}         & {54.8\scriptsize$\pm$1.3} & {41.4\scriptsize$\pm$0.8} & {57.7\scriptsize$\pm$0.6} & {41.3\scriptsize$\pm$0.4} & 48.8 & 7.5 & 0.22 \\
DGSAM (Ours)         & {53.8\scriptsize$\pm$0.6} & {45.0\scriptsize$\pm$0.7} & {59.1\scriptsize$\pm$0.4} & {41.8\scriptsize$\pm$1.0} & 49.9 & 6.9 & 0.17 \\ 
DGSAM + SWAD         & {55.6\scriptsize$\pm$1.2} & {45.9\scriptsize$\pm$0.5} & {59.6\scriptsize$\pm$0.5} & {43.1\scriptsize$\pm$0.9} & 51.1 & 6.8 & 0.17 \\ \bottomrule
\end{tabular}
\end{table}

\begin{table}[!ht]
\centering
\small
\caption{The performance of DGSAM with 20 baseline algorithms on DomainNet}
\begin{tabular}{l|cccccc|ccc}
\hline
Algorithm        & C    & I    & P   & Q   & R   & S    & Avg   & SD   & (s/iter) \\ \toprule
VREx\(^\dag\) \cite{krueger2021vrex} & {47.3\scriptsize$\pm$3.5} & {16.0\scriptsize$\pm$1.5} & {35.8\scriptsize$\pm$4.6} & {10.9\scriptsize$\pm$0.3} & {49.6\scriptsize$\pm$4.9} & {42.0\scriptsize$\pm$3.0} & 33.6 & 15.0 & 0.18 \\
IRM\(^\dag\) \cite{arjovsky2019irm} & {48.5\scriptsize$\pm$2.8} & {15.0\scriptsize$\pm$1.5} & {38.3\scriptsize$\pm$4.3} & {10.9\scriptsize$\pm$0.5} & {48.2\scriptsize$\pm$5.2} & {42.3\scriptsize$\pm$3.1} & 33.9 & 15.2 & 0.19 \\
ARM\(^\dag\) \cite{zhang2021arm} & {49.7\scriptsize$\pm$0.3} & {16.3\scriptsize$\pm$0.5} & {40.9\scriptsize$\pm$1.1} & {9.4\scriptsize$\pm$0.1} & {53.4\scriptsize$\pm$0.4} & {43.5\scriptsize$\pm$0.4} & 35.5 & 16.7 & 0.18 \\
CDANN\(^\dag\) \cite{li2018cdann} & {54.6\scriptsize$\pm$0.4} & {17.3\scriptsize$\pm$0.1} & {43.7\scriptsize$\pm$0.9} & {12.1\scriptsize$\pm$0.7} & {56.2\scriptsize$\pm$0.4} & {45.9\scriptsize$\pm$0.5} & 38.3 & 17.3 & 0.18 \\
DANN\(^\dag\) \cite{ganin2016dann} & {53.1\scriptsize$\pm$0.2} & {18.3\scriptsize$\pm$0.1} & {44.2\scriptsize$\pm$0.7} & {11.8\scriptsize$\pm$0.1} & {55.5\scriptsize$\pm$0.4} & {46.8\scriptsize$\pm$0.6} & 38.3 & 17.0 & 0.18 \\
RSC\(^\dag\) \cite{huang2020rsc} & {55.0\scriptsize$\pm$1.2} & {18.3\scriptsize$\pm$0.5} & {44.4\scriptsize$\pm$0.6} & {12.2\scriptsize$\pm$0.2} & {55.7\scriptsize$\pm$0.7} & {47.8\scriptsize$\pm$0.9} & 38.9 & 17.3 & 0.20 \\
SagNet\(^\dag\) \cite{nam2021sagnet} & {57.7\scriptsize$\pm$0.3} & {19.0\scriptsize$\pm$0.2} & {45.3\scriptsize$\pm$0.3} & {12.7\scriptsize$\pm$0.5} & {58.1\scriptsize$\pm$0.5} & {48.8\scriptsize$\pm$0.2} & 40.3 & 17.9 & 0.53 \\
MTL\(^\dag\) \cite{blanchard2021mtl} & {57.9\scriptsize$\pm$0.5} & {18.5\scriptsize$\pm$0.4} & {46.0\scriptsize$\pm$0.1} & {12.5\scriptsize$\pm$0.1} & {59.5\scriptsize$\pm$0.3} & {49.2\scriptsize$\pm$0.1} & 40.6 & 18.4 & 0.20 \\
ERM\(^\dag\) & {58.1\scriptsize$\pm$0.3} & {18.8\scriptsize$\pm$0.3} & {46.7\scriptsize$\pm$0.3} & {12.2\scriptsize$\pm$0.4} & {59.6\scriptsize$\pm$0.1} & {49.8\scriptsize$\pm$0.4} & 40.9 & 18.6 & 0.18 \\
MLDG\(^\dag\) \cite{li2018mldg} & {59.1\scriptsize$\pm$0.2} & {19.1\scriptsize$\pm$0.3} & {45.8\scriptsize$\pm$0.7} & {13.4\scriptsize$\pm$0.3} & {59.6\scriptsize$\pm$0.2} & {50.2\scriptsize$\pm$0.4} & 41.2 & 18.4 & 0.34 \\
CORAL\(^\dag\) \cite{sun2016coral} & {59.2\scriptsize$\pm$0.1} & {19.7\scriptsize$\pm$0.2} & {46.6\scriptsize$\pm$0.3} & {13.4\scriptsize$\pm$0.4} & {59.8\scriptsize$\pm$0.2} & {50.1\scriptsize$\pm$0.6} & 41.5 & 18.3 & 0.20 \\
ERM\(^\dag\) & {62.8\scriptsize$\pm$0.4} & {20.2\scriptsize$\pm$0.3} & {50.3\scriptsize$\pm$0.3} & {13.7\scriptsize$\pm$0.5} & {63.7\scriptsize$\pm$0.2} & {52.1\scriptsize$\pm$0.5} & 43.8 & 19.7 & 0.18 \\
SWAD \cite{cha2021swad} & {66.0\scriptsize$\pm$0.1} & {22.4\scriptsize$\pm$0.3} & {53.5\scriptsize$\pm$0.1} & {16.1\scriptsize$\pm$0.2} & {65.8\scriptsize$\pm$0.4} & {55.5\scriptsize$\pm$0.3} & 46.5 & 19.9 & 0.18 \\ \midrule
GAM\(^\ddag\) \cite{zhang2023gam} & {63.0\scriptsize$\pm$0.5} & {20.2\scriptsize$\pm$0.2} & {50.3\scriptsize$\pm$0.1} & {13.2\scriptsize$\pm$0.3} & {64.5\scriptsize$\pm$0.2} & {51.6\scriptsize$\pm$0.5} & 43.8 & 20.0 & 0.71 \\
Lookbehind-SAM \cite{mordido2024lookbehind} & {64.3\scriptsize$\pm$0.3} & {20.8\scriptsize$\pm$0.1} & {50.4\scriptsize$\pm$0.1} & {15.0\scriptsize$\pm$0.4} & {63.1\scriptsize$\pm$0.3} & {51.4\scriptsize$\pm$0.3} & 44.1 & 19.4 & 0.71 \\
SAM\(^\dag\) \cite{foret2021sam} & {64.5\scriptsize$\pm$0.3} & {20.7\scriptsize$\pm$0.2} & {50.2\scriptsize$\pm$0.1} & {15.1\scriptsize$\pm$0.3} & {62.6\scriptsize$\pm$0.2} & {52.7\scriptsize$\pm$0.3} & 44.3 & 19.4 & 0.34 \\
FAD \cite{zhang2023fad} & {64.1\scriptsize$\pm$0.3} & {21.9\scriptsize$\pm$0.2} & {50.6\scriptsize$\pm$0.3} & {14.2\scriptsize$\pm$0.4} & {63.6\scriptsize$\pm$0.1} & {52.2\scriptsize$\pm$0.2} & 44.4 & 19.5 & 0.56 \\
GSAM\(^\dag\) \cite{zhuang2022gsam} & {64.2\scriptsize$\pm$0.3} & {20.8\scriptsize$\pm$0.2} & {50.9\scriptsize$\pm$0.0} & {14.4\scriptsize$\pm$0.8} & {63.5\scriptsize$\pm$0.2} & {53.9\scriptsize$\pm$0.2} & 44.6 & 19.8 & 0.36 \\
SAGM \cite{wang2023sagm} & {64.9\scriptsize$\pm$0.2} & {21.1\scriptsize$\pm$0.3} & {51.5\scriptsize$\pm$0.2} & {14.8\scriptsize$\pm$0.2} & {64.1\scriptsize$\pm$0.2} & {53.6\scriptsize$\pm$0.2} & 45.0 & 19.8 & 0.34 \\
DISAM \cite{zhang2024disam} & {65.9\scriptsize$\pm$0.2} & {20.7\scriptsize$\pm$0.2} & {51.7\scriptsize$\pm$0.3} & {16.6\scriptsize$\pm$0.3} & {62.8\scriptsize$\pm$0.5} & {54.8\scriptsize$\pm$0.4} & 45.4 & 19.5 & 0.53 \\
DGSAM (Ours) & {63.6\scriptsize$\pm$0.4} & {22.2\scriptsize$\pm$0.1} & {51.9\scriptsize$\pm$0.3} & {15.8\scriptsize$\pm$0.2} & {64.7\scriptsize$\pm$0.3} & {54.7\scriptsize$\pm$0.4} & 45.5 & 19.4 & 0.26 \\
DGSAM + SWAD & {67.2\scriptsize$\pm$0.2} & {23.2\scriptsize$\pm$0.3} & {53.4\scriptsize$\pm$0.3} & {17.3\scriptsize$\pm$0.4} & {65.4\scriptsize$\pm$0.2} & {55.8\scriptsize$\pm$0.3} & 47.1 & 19.6 & 0.26 \\\bottomrule
\end{tabular}
\end{table}

\clearpage


\end{document}